\def\moverlay{\mathpalette\mov@rlay}
\def\mov@rlay#1#2{\leavevmode\vtop{%
		\baselineskip\z@skip \lineskiplimit-\maxdimen
		\ialign{\hfil$\m@th#1##$\hfil\cr#2\crcr}}}
\newcommand{\charfusion}[3][\mathord]{
	#1{\ifx#1\mathop\vphantom{#2}\fi
		\mathpalette\mov@rlay{#2\cr#3}
	}
	\ifx#1\mathop\expandafter\displaylimits\fi}
\definecolor{darkblue}{RGB}{0,0,204}
\definecolor{pinegreen}{RGB}{0,100,80}
\newcommand{\leadingzero}[1]{\ifnum #1<10 0\the#1\else\the#1\fi} 
\newcommand{\todayII}{\leadingzero{\day}.\leadingzero{\month}.\the\year}     
\newcommand{\einschraenkung}{\,\rule[-2mm]{0.1mm}{5mm}\,{}}
\newcommand{\enm}[1]{\ensuremath{#1}\xspace}
\newcommand{\limn}{\enm{\lim_{n\to\infty}}}
\newcommand{\R}{\enm{\mathbb{R}}}
\newcommand{\N}{\enm{\mathbb{N}}}
\newcommand{\eps}{\enm{\varepsilon}}
\newcommand{\lb}{\enm{\lambda}}
\newcommand{\LandauO}{\enm{\mathcal{O}}}
\newcommand{\Ind}[1][]{\enm{\mathds{1}_{#1}}}
\newcommand{\bigcupdot}{\charfusion[\mathop]{\bigcup}{\cdot}}
\newcommand{\norm}[2]{\enm{\left|\left|#2\right|\right|_{#1}}}
\newcommand{\normSup}[1]{\norm{\infty}{#1}}
\newcommand{\normH}[1]{\norm{\H}{#1}}
\newcommand{\normArb}[1]{\norm{\bullet}{#1}}
\newcommand{\normLppx}[1]{\norm{\Lppx}{#1}} 
\newcommand{\normLppxb}[1]{\norm{\Lppxb}{#1}} 
\newcommand{\normLppix}[2][i]{\norm{\Lppix[#1]}{#2}} 
\newcommand{\normLppixb}[1]{\norm{\Lppixb}{#1}} 
\newcommand{\normLppibxb}[1]{\norm{\Lppibxb}{#1}} 
\newcommand{\normLppibdachx}[1]{\norm{\Lppibdachx}{#1}} 
\newcommand{\normLapix}[2][i]{\norm{\Lapix[#1]}{#2}} 
\newcommand{\normLapxb}[1]{\norm{\Lapxb}{#1}} 
\newcommand{\normLapibstarxb}[2][i]{\norm{\Lapibstarxb[#1]}{#2}} 
\newcommand{\normLapiaxb}[2][i]{\norm{\Lapiaxb[#1]}{#2}} 
\renewcommand{\P}{\enm{\textnormal{P}}}
\newcommand{\Px}{\enm{\P^X}}
\newcommand{\Pbed}[2][\cdot]{\enm{\P(#1|#2)}} 
\newcommand{\Q}{\enm{\textnormal{Q}}}
\newcommand{\Qx}{\enm{\Q^X}}
\newcommand{\MXY}{\enm{\mathcal{M}_1(\XY)}}
\newcommand{\BXY}{\enm{\mathcal{B}_{\XY}}}
\newcommand{\BX}{\enm{\mathcal{B}_{\X}}}
\newcommand{\BY}{\enm{\mathcal{B}_{\Y}}}
\newcommand{\X}{\enm{\mathcal{X}}}
\newcommand{\Y}{\enm{\mathcal{Y}}}
\newcommand{\XX}{\enm{\X\times\X}}
\newcommand{\XY}{\enm{\X\times\Y}}
\newcommand{\XYR}{\enm{\X\times\Y\times\R}}
\newcommand{\A}{\enm{\mathcal{A}}}
\newcommand{\ew}[2][]{\enm{\mathbb{E}_{#1}\left[#2\right]}}
\newcommand{\unif}[1][\enm{0,1}]{\enm{\mathcal{U}(#1)}}
\newcommand{\normal}[1][\enm{0,1}]{\enm{\mathcal{N}(#1)}}
\renewcommand{\L}[2][]{\enm{L_{#2}\ifthenelse{\isempty{#1}}{}{(#1)}}}
\newcommand{\Lp}{\L{p}}
\newcommand{\Lppx}{\enm{\Lp(\Px)}} 
\newcommand{\Lppxb}{\enm{\Lp(\Px\otimes\Px)}} 
\newcommand{\La}{\L{1}}
\newcommand{\Lapxb}{\enm{\La(\Px\otimes\Px)}} 
\newcommand{\Lppix}[1][i]{\enm{\Lp(\Pix[#1])}} 
\newcommand{\Lppixb}[1][i]{\enm{\Lp(\Pix[#1]\otimes\Pix[#1])}} 
\newcommand{\Lppibxb}[1][i]{\enm{\Lp(\Pibx[#1]\otimes\Pibx[#1])}} 
\newcommand{\Lppibdachx}[1][i]{\enm{\Lp(\Pibdachx[#1])}} 
\newcommand{\Lapix}[1][i]{\enm{\La(\Pix[#1])}} 
\newcommand{\Lapiaxb}[1][i]{\enm{\La(\Piax[#1]\otimes\Piax[#1])}} 
\newcommand{\Lapibstarxb}[1][i]{\enm{\La(\Pibstarx[#1]\otimes\Pibstarx[#1])}} 
\newcommand{\loss}{\enm{L}}
\newcommand{\lossshift}{\enm{\loss^\star}}
\newcommand{\risk}[1][\loss,\P]{\enm{\mathcal{R}_{#1}}} 
\newcommand{\riskshift}[1][\lossshift,\P]{\enm{\mathcal{R}_{#1}}} 
\renewcommand{\H}{\enm{H}} 
\renewcommand{\k}{\enm{k}}
\newcommand{\fa}{\enm{f_1}}
\newcommand{\fb}{\enm{f_2}}
\newcommand{\fj}{\enm{f_i}}
\newcommand{\ftilde}[1][]{\ifthenelse{\isempty{#1}}{\enm{\tilde{f}}}{\enm{\tilde{f}_{#1}}}}
\newcommand{\ftheok}{\enm{f_{\loss,\P,\lb,\k}}} 
\newcommand{\fshifttheok}{\enm{f_{\lossshift,\P,\lb,\k}}} 
\newcommand{\frob}{\enm{f_{\P,\lb,\k}}} 
\newcommand{\froba}{\enm{f_{\P_1,\lb_1,\k_1}}} 
\newcommand{\frobb}{\enm{f_{\P_2,\lb_2,\k_2}}} 
\newcommand{\aib}[1][i]{\enm{a(#1,b)}}
\newcommand{\distReg}[2][b]{\enm{\textnormal{d}_{#2,#1}}} 
\newcommand{\XBbold}{\enm{\bm{\mathcal{X}_B}}}
\newcommand{\XiAibold}[1][i]{\enm{\bm{\mathcal{X}_{A_{#1}}^{(#1)}}}}
\newcommand{\XBstarbold}{\enm{\bm{\mathcal{X}_B^{*}}}}
\newcommand{\Xb}[1][b]{\enm{\X_{#1}}}
\newcommand{\Xbstar}[1][b]{\enm{\mathcal{X}_{#1}^*}}
\newcommand{\Xia}[1][i]{\enm{\X_{a}^{(#1)}}}
\newcommand{\Xiai}[1][i]{\enm{\X_{a_{#1}}^{(#1)}}}
\newcommand{\Xiaib}[1][i]{\enm{\X_{\aib[#1]}^{(#1)}}}
\newcommand{\Xiaflex}[2]{\enm{\X_{#2}^{(#1)}}}
\newcommand{\Pix}[1][i]{\enm{\P_{#1}^X}}
\newcommand{\Pib}[1][i]{\enm{\P_{#1,b}}}
\newcommand{\Pibx}[1][i]{\enm{\P_{#1,b}^{X}}}
\newcommand{\Pibdach}[1][i]{\enm{\hat{\P}_{#1,b}}}
\newcommand{\Pibdachx}[1][i]{\enm{\hat{\P}_{#1,b}^{X}}}
\newcommand{\Pia}[1][i]{\enm{\P_{#1,a}}}
\newcommand{\Piax}[1][i]{\enm{\P_{#1,a}^X}}
\newcommand{\Pibstar}[1][i]{\enm{\P_{#1,b}^*}}
\newcommand{\Pibstarx}[1][i]{\enm{(\P_{#1,b}^*)^X}}
\newcommand{\Pjaibx}[2][i]{\enm{\P_{#2,\Xiaib[#1]}^X}} 
\newcommand{\Pjai}[2][i]{\enm{\P_{#2,\Xia[#1]}}} 
\newcommand{\Pjaix}[2][i]{\enm{\P_{#2,\Xia[#1]}^X}} 
\newcommand{\lbbold}{\enm{\bm{\lambda}}} 
\newcommand{\lbibold}[1][i]{\enm{\bm{\lambda_{#1}}}} 
\newcommand{\lbijstarbold}[1][i,j]{\enm{\bm{\lambda_{#1}^{*}}}} 
\newcommand{\lbib}[1][i]{\enm{\lb_{#1,b}}}
\newcommand{\lbia}[1][i]{\enm{\lb_{#1,a}}}
\newcommand{\lbijbstar}[1][i,j,b]{\enm{\lb_{#1}^*}}
\newcommand{\lbiaib}[1][i]{\enm{\lb_{#1,\aib[#1]}}}
\newcommand{\kbold}{\enm{\bm{k}}} 
\newcommand{\kibold}[1][i]{\enm{\bm{k_{#1}}}} 
\newcommand{\kistarbold}[1][i]{\enm{\bm{k_{#1}^{*}}}} 
\newcommand{\ktilde}{\enm{\tilde{\k}}}
\newcommand{\kib}[1][i]{\enm{\k_{#1,b}}}
\newcommand{\kibdach}[1][i]{\enm{\hat{\k}_{#1,b}}}
\newcommand{\kia}[1][i]{\enm{\k_{#1,a}}}
\newcommand{\kibstar}[1][i,b]{\enm{\k_{#1}^*}}
\newcommand{\kiaib}[1][i]{\enm{\k_{#1,\aib[#1]}}}
\newcommand{\Htilde}{\enm{\tilde{\H}}}
\newcommand{\Hib}[1][i]{\enm{\H_{#1,b}}}
\newcommand{\Hibdach}[1][i]{\enm{\hat{\H}_{#1,b}}}
\newcommand{\Hia}[1][i]{\enm{\H_{#1,a}}}
\newcommand{\fiext}[1][i]{\enm{f_{\P_{#1},\lbibold[#1],\kibold[#1]}}} 
\newcommand{\fiextreg}[1][i]{\enm{f_{\P_{#1},\lbibold[#1],\kibold[#1],\XiAibold[#1]}}} 
\newcommand{\fib}[1][i]{\enm{f_{#1,b}}} 
\newcommand{\fibext}[1][i]{\enm{f_{\P_{#1,b},\lb_{#1,b},\k_{#1,b}}}} 
\newcommand{\fibdach}[1][i]{\enm{\hat{f}_{#1,b}}} 
\newcommand{\fibextdach}[1][i]{\enm{\hat{f}_{\P_{#1,b},\lb_{#1,b},\k_{#1,b}}}} 
\newcommand{\fibdachext}[1][i]{\enm{f_{\hat{\P}_{#1,b},\lb_{#1,b},\hat{\k}_{#1,b}}}} 
\newcommand{\fiaext}[1][i]{\enm{f_{\Pia[#1],\lbia[#1],\kia[#1]}}} 
\newcommand{\fiaextdach}[1][i]{\enm{\hat{f}_{\Pia[#1],\lbia[#1],\kia[#1]}}} 
\newcommand{\fastarproofLp}{\enm{f_{\P_{1},\lbijstarbold[1,1],\kistarbold[1],\XBstarbold}}}
\newcommand{\fbstarproofLp}{\enm{f_{\P_{1},\lbijstarbold[2,1],\kistarbold[2],\XBstarbold}}}
\newcommand{\fabstartildeproofLp}{\enm{\tilde{f}_{\Pibstar[1],\lbijbstar[1,1,b],\kibstar[1,b]}}}
\newcommand{\fabstarhatproofLp}{\enm{\hat{f}_{\Pibstar[1],\lbijbstar[1,1,b],\kibstar[1,b]}}}
\newcommand{\fbbstarhatproofLp}{\enm{\hat{f}_{\Pibstar[1],\lbijbstar[2,1,b],\kibstar[2,b]}}}
\newcommand\bibstyle@comma{\bibpunct(),a,,}
\newcommand\bibstyle@semicolon{\bibpunct();a,,}
\pretocmd\citet{\citestyle{comma}}\relax\relax
\pretocmd\Citet{\citestyle{comma}}\relax\relax
\pretocmd\citep{\citestyle{semicolon}}\relax\relax
\pretocmd\Citep{\citestyle{semicolon}}\relax\relax
\def\cleardoublepage{\clearpage\if@twoside \ifodd\c@page\else
	\hbox{}
	\vspace*{\fill}
	\thispagestyle{empty}
	\newpage
	\if@twocolumn\hbox{}\newpage\fi\fi\fi}
\newcommand{\wenn}{\text{, if }}
\newcommand{\sonst}{\text{, else}}
\newcommand{\vgl}{\enm{\text{cf.\@}}}
\newcommand{\seite}{\enm{\text{p.\@}}}
\newcommand{\eg}{\enm{\text{e.g.\@}}}
\newcommand{\ie}{\enm{\text{i.e.\@}}}
\newtheorem{mythm}{Theorem}
\numberwithin{mythm}{section}
\newtheorem{mylem}[mythm]{Lemma}
\theoremstyle{definition}
\newtheorem{mydef}[mythm]{Definition}
\theoremstyle{remark}
\newtheorem{mybem}[mythm]{Remark}
\numberwithin{figure}{section}
\crefname{mythm}{theorem}{theorems}
\crefname{mylem}{lemma}{lemmas}
\title{\textbf{Total Stability of SVMs and Localized SVMs}}
\date{\today}
\author{\textbf{Hannes K\"ohler}\thanks{Corresponding author; email: \href{mailto:hannes.koehler@uni-bayreuth.de}{\texttt{hannes.koehler@uni-bayreuth.de}}}~ and \textbf{Andreas Christmann}\\
	Department of Mathematics, University of Bayreuth, Germany
	}
\begin{document}
\maketitle

\begin{abstract}

	Regularized kernel-based methods such as support vector machines (SVMs) typically depend on the underlying probability measure $\textnormal{P}$ (respectively an empirical measure $\textnormal{D}_n$ in applications) as well as on the regularization parameter $\lambda$ and the kernel $k$. Whereas classical statistical robustness only considers the effect of small perturbations in $\textnormal{P}$, the present paper investigates the influence of simultaneous slight variations in the whole triple $(\textnormal{P},\lambda,k)$, respectively $(\textnormal{D}_n,\lambda_n,k)$, on the resulting predictor. Existing results from the literature are considerably generalized and improved. In order to also make them applicable to big data, where regular SVMs suffer from their super-linear computational requirements, we show how our results can be transferred to the context of localized learning. Here, the effect of slight variations in the applied regionalization, which might for example stem from changes in $\textnormal{P}$ respectively $\textnormal{D}_n$, is considered as well.
\end{abstract}

\section{Introduction}\label{Sec:Rob_Intro}

Let \XY be a set and let $\P$ be the distribution of a pair of random variables $(X,Y)$ with values in \XY where $X$ is the input variable and $Y$ is the real-valued output variable. The goal of statistical machine learning is to find a function $f: \X\to\Y$ which relates $X$ to $Y$, \ie which can be used to predict the unknown output variable based on a given input variable, with (almost) no prior knowledge about \P. One way to approach such prediction problems, both for regression and classification purposes, is employing \it support vector machines \rm (SVMs) which perform regularized empirical risk minimization on special Hilbert spaces of functions, so-called \it reproducing kernel Hilbert spaces \rm (RKHSs), and which have been the focus of extensive theoretical investigations \citep[\vgl][among others]{vapnik1995,vapnik1998,schoelkopf2002,cucker2007,steinwart2008}. 

To be more specific, an SVM is based on a \it loss function \rm $\loss: \XYR\to [0,\infty)$ which quantifies the quality of a prediction $f(x)$ by $\loss(x,y,f(x))$ if the observed output variable belonging to $x$ is $y$. The loss function specifies the exact goal of the prediction. For example, typical loss functions for classification tasks include the \it hinge loss\rm, the \it least squares loss \rm and the \it logistic loss\rm. For regression tasks, the \it least squares loss \rm is often used to estimate the conditional mean function, whereas the \it pinball loss \rm is suited to quantile regression. 

Based on the loss function of choice, one can define the \it \loss-risk \rm (or just \it risk\rm) \risk as the expectation of said loss function, \ie
\begin{align*}
\risk(f) := \ew[\P]{\loss(X,Y,f(X))}.
\end{align*}
Since the loss measures the quality of a specific prediction $f(x)$, the risk quantifies the quality of the whole predictor $f$ and we aim at finding a predictor whose risk is small. However, because the true underlying distribution \P is unknown in machine learning problems, it is impossible to minimize \risk directly. Instead, one uses an available data set consisting of observations from \P to calculate the \it empirical risk \rm as an approximation of the theoretical risk. Since minimizing the empirical risk almost certainly leads to some extent of overfitting, a regularization term has to be added, which also makes the resulting minimization problem well-posed in Hadamard's sense \citep[\vgl][]{hable2011}. An SVM \ftheok is then defined as the solution of the minimization problem
\begin{align}\label{eq:Rob_DefSVM}
\ftheok := \arg\inf_{f\in\H} \risk(f) + \lb \normH{f}^2.
\end{align}
Here, $\lb>0$ is a regularization parameter which controls the amount of regularization and \H is the RKHS of a measurable \it kernel on \X\rm, \ie a symmetric and positive semidefinite function $\k: \XX\to\R$ \citep[\vgl][]{aronszajn1950,schoelkopf2002,berlinet2004,cucker2007}. We will often be interested in \it bounded kernels \rm for which we define $\normSup{\k} := \sup_{x\in\X} \sqrt{k(x,x)}$. Furthermore, we need the so-called \it canonical feature map \rm $\Phi:X\to\H$ defined by $\Phi(x):=\k(\cdot,x)$ for $x\in\X$. This canonical feature map satisfies the \it reproducing property \rm 
\begin{align}\label{eq:Rob_ReprodProperty}
	\langle f,\Phi(x) \rangle_\H = f(x) \qquad \forall\, x\in\X, f\in\H\,,
\end{align}
from which one can easily deduce
\begin{align}\label{eq:Rob_FeatureMap}
	\langle \Phi(x_1),\Phi(x_2)\rangle_{\H} = k(x_1,x_2) \qquad \forall\, x_1,x_2\in\X\,,
\end{align}
\vgl \citet[Definition~2.9]{schoelkopf2002}. Lastly, we will usually assume \X to be a complete and separable metric space equipped with the Borel $\sigma$-algebra \BX and \Y to be a closed subset of \R equipped with \BY (for brevity of notation, we will from now on often omit explicitly stating the $\sigma$-algebras and instead always assume sets to be equipped with their respective Borel $\sigma$-algebra when not explicitly stated otherwise), and therefore also assume that $\P\in\MXY$ with \MXY denoting the set of all Borel probability measures on \XY. 

It can be shown that suitable choices of kernel, loss function and regularization parameter (with the last one depending on the size of the given data set) lead to desirable properties of SVMs under rather mild assumptions. These include existence, uniqueness and universal consistency as well as specific learning rates, with the last one typically requiring some more conditions on \P than the former properties for which (almost) no such conditions are needed. In addition to the books mentioned at the beginning of this article, which all include extensive introductions to SVMs as well as many results on the aforementioned properties, some more specific results on learning rates can, for example, be found in \citet{caponnetto2007,smale2007,xiang2009,steinwart2009,eberts2011,eberts2013,farooq2019}. We refer to \citet{christmann2012,christmann2016} for results on SVMs for additive models and to \citet{christmann2016b,gensler2020} for results on kernel-based pairwise learning.

In this article, we will mainly concern ourselves with the stability of SVMs, \ie how much influence simultaneous slight changes in the probability measure \P (or slight changes in the data set in the empirical case), the regularization parameter \lb and the kernel \k have on the resulting SVM. Because of this special interest in the effect of varying $(\P,\lb,\k)$, we will usually just write \frob instead of \ftheok to shorten the notation whenever \loss is clear from the context or does not need to be specified. Since \loss has to be chosen by the user depending on the problem at hand (for example, least squares loss for regression or pinball loss for quantile regression), deviations stemming from changes in the loss function are indeed desired and we are not interested in bounding them---in contrast to deviations produced by slight changes in \lb or \k which can, for example, stem from slight changes in the underlying data since \lb and the hyperparameter(s) of \k are often chosen in a data-dependent way.

There are several already existing results on different notions of classical statistical robustness of SVMs, that is, on the influence of small changes in \P (or in the data set) on the predictor \citep[\vgl][among others]{bousquet2002,christmann2004,christmann2007,christmann2008a,hable2011}. We will, however, base many of our considerations on \citet{christmann2018}, where the authors investigated the effect of simultaneous slight changes in the whole triple $(\P,\lb,\k)$ instead of only in \P and obtained results like
\begin{align}\label{eq:Rob_LandauFormulation}
\normSup{\froba-\frobb} = \LandauO\left(\norm{tv}{\P_1-\P_2}\right) + \LandauO\left(|\lb_1-\lb_2|\right) + \LandauO\left(\normSup{\k_1-\k_2}\right)
\end{align}
with known constants, and with with $\norm{tv}{\nu}$ denoting the norm of total variation of a signed measure $\nu$. 

We will first modify one such result by \citet[Theorem~2.7]{christmann2018} slightly in \Cref{Sec:Rob_Stab} in order to generalize it and make it applicable to a larger class of loss functions and to arbitrary positive regularization parameters. We will additionally derive an analogous statement concerning the \Lppix-norm, $i=1,2$, $p\in[1,\infty)$, of $\froba-\frobb$ instead of its supremum norm, with \Qx denoting the marginal distribution on \X associated with \Q, for all probability measures \Q on \XY. Afterwards, we will investigate \it localized SVMs \rm in \Cref{Sec:Rob_Loc}. 

The principal idea behind localized SVMs is to not calculate one SVM on the whole input space \X but instead split \X into different (not necessarily disjoint) subsets, calculate SVMs on these subsets and then join them together (combined with some weight functions in the case of overlapping subsets) in order to obtain a global predictor. There are three main advantages to this approach: Firstly, the calculation of SVMs is known to have super-linear (in the number of training samples) computational requirements (in time as well as in storage space), \vgl \citet{platt1998,joachims1998} among others. For large data sets, it is therefore faster and more computationally feasible to calculate several small SVMs instead of a single large one. Secondly, localization allows the algorithm to deal with different structures in different regions of the input space in different ways. For global learning approaches, it can be difficult to accurately predict a function whose complexity and volatility vary among different areas of the input space because the complexity of a predictor is usually controlled globally by some hyperparameters. For the very same reason, large differences between the conditional distributions $\Pbed[Y]{X=x}$ in different areas of \X can cause similar difficulties. A good regionalization can separate such different areas and can therefore lead to better predictions based on a given training data set. Thirdly, the use of a bounded and continuous kernel \k, which is popular in practice (with \eg the Gaussian RBF kernel satisfying both properties) and yields some useful theoretical properties, leads to all functions from the RKHS \H, and thus also the SVM, being continuous and bounded as well \citep[\vgl][Lemma~4.28]{steinwart2008}. Hence, it can be difficult for such SVMs to accurately model discontinuities in the true function, and large oscillations and overshooting can occur near these discontinuities, similar to the well-known Gibbs phenomenon occurring for Fourier series \citep[\vgl][and the references cited therein]{hewitt1979}. By dividing the input space into separate regions at these discontinuities, a good regionalization can eliminate this hindrance.

Hence, localized approaches---not only for SVMs but also for other machine learning methods which often face similar challenges---have been of interest for many years. Early theoretical investigations can be found in \citet{bottou1992,vapnik1993}, and nowadays various approaches for splitting the data into local subsets before applying SVMs exist. For example, \citet{bennett1998,wu1999,tibshirani2007} as well as \citet{chang2010} all employ decision trees. The methods proposed in these articles however differ in how the tree is generated: In the former three, an SVM is also used for each decision in the tree, whereas \citet{chang2010} split the data in an axis-parallel way and only apply SVMs to the final regions. That is, the goal of the former articles is only to improve the accuracy of the predictor whereas the latter one is also concerned with reducing training time. Another popular approach is to combine SVMs with $k$-nearest neighbor ($k$NN) methods. This has, for example, been done by \citet{zhang2006,hable2013} and by \citet{blanzieri2007,blanzieri2008,segata2010}, with the former two measuring distances (for selecting the $k$ nearest neighbors) in the input space \X and the rest measuring them in the feature space \H. \citet{segata2010} constitutes a special case among these, since the authors modified the procedure somewhat in order to speed up the process of classifying new data points: $k$NN methods usually suffer from a computationally intensive and slow prediction phase due to their construction. That is, they have to calculate a new SVM on the $k$-neighborhood of each test point whose output they have to predict. Even though these SVMs are not based on the whole training set and thus relatively faster to train, this still significantly slows down the prediction step if many predictions have to be made. To circumvent this problem, \citet{segata2010} proposed to train an SVM on the $k$-neighborhood of each point from the training set during the training phase and then use the SVM belonging to the closest training point when having to predict the output belonging to a test point. Slightly different approaches have been proposed by \citet{cheng2007,cheng2010,gu2013}, where the training data is being split into clusters by some variants of $k$-means and then an SVM is trained on each cluster. Similarly, \citet{rida1999} combined SVMs with density-based clustering for the case of having a multimodal input. 
\citet{zakai2009} showed that any consistent learning method has to be localizable because it has to behave in a local manner in order to be consistent. However, they also only investigated a special localization technique where only training samples within a ball of some fixed radius around a test point are considered when the associated output has to be predicted.

Additionally, there are several articles that provide theoretical results about localized SVMs \citep[whereas many of the aforementioned articles focused on experimental analyses, notable exceptions being][among others]{zakai2009,gu2013,hable2013} and that do often not demand special localization techniques but instead only require the resulting regionalization to satisfy some (rather mild) conditions. These include \citet{meister2016,thomann2017,blaschzyk2020} where the Gaussian RBF kernel in combination with the hinge or the least squares loss function is used to derive learning rates for such localized SVMs under some assumptions regarding the underlying distribution \P \citep[which are always needed in order to obtain learning rates because of the no-free-lunch theorem, \vgl][]{devroye1982}. Lastly, \citet{dumpert2018,dumpert2020} allow even more general regionalizations as well as more general kernels and loss functions, based on which they show localized SVMs' consistency as well as their statistical robustness with respect to the maxbias and the influence function without any restrictive assumptions about \P.

Other approaches for reducing the computational requirements of SVMs (or similar methods) include, for example, distributed learning \citep[see][among others]{christmann2007b,zhang2015,lin2017,guo2017,lin2020} and online learning \citep[see][among others]{ying2006,smale2006,guo2017b}.

We will, however, focus on a localized approach and proceed similarly to \citet{dumpert2018,dumpert2020} in \Cref{Sec:Rob_Loc}, \ie impose only very mild assumptions on regionalization, kernel, loss function and the underlying distribution, and then transfer the stability results from \Cref{Sec:Rob_Stab} to our localized SVMs. Notably, we will not make any assumptions regarding the heaviness of the tails of the conditional distributions $\Pbed[Y]{X=x}$, $x\in\X$, and especially not require \Y to be bounded (which is, for example, needed in the aforementioned results on learning rates). Lastly, we will look at the effect of not only probability measure, regularization parameters and kernels but also the regionalization varying. That is, we will investigate the stability of localized SVMs with respect to slight changes in the whole quadruple consisting of probability measure, regularization parameters, kernels and regionalization.

\section{Total stability of SVMs}\label{Sec:Rob_Stab}

In this section, we will show stability of SVMs with respect to slight changes in the triple $(\P,\lb,\k)$ consisting of probability measure, regularization parameter and kernel. Our notion of stability will be similar to that of \eqref{eq:Rob_LandauFormulation}, with the slight difference that we additionally need to consider $\sqrt{\normSup{\k_1-\k_2}}$ as an exchange for the result considerably generalizing the referenced theorem by \citet{christmann2018}, that is, the result being applicable to arbitrary positive regularization parameters and a larger class of loss functions. Thus, it will be of the type
\begin{align}\label{eq:Rob_LandauFormulation2}
	\normSup{\froba-\frobb} =&\ \LandauO\left(\norm{tv}{\P_1-\P_2}\right) + \LandauO\left(|\lb_1-\lb_2|\right)\notag\\ 
	&\ + \LandauO\left(\normSup{\k_1-\k_2}\right) + \LandauO\left(\sqrt{\normSup{\k_1-\k_2}}\right)\,.
\end{align}
Afterwards, we will derive a similar stability result which bounds the \Lppix-norm, $i=1,2$, of $\froba-\frobb$. This will be of the type
\begin{align}\label{eq:Rob_LandauFormulationLp}
	\normLppix{\froba-\frobb} =&\ \LandauO\left(\norm{tv}{\P_1-\P_2}\right) + \LandauO\left(|\lb_1-\lb_2|\right)\notag\\ 
	&\ + \LandauO\left(\normLppixb{\k_1-\k_2}\right) + \LandauO\left(\sqrt{\normLppixb{\k_1-\k_2}}\right)\,
\end{align}
and will require the same mild conditions as the one concerning the supremum norm.

As mentioned in the introduction, $\norm{tv}{\nu}$ denotes the norm of total variation of a signed measure $\nu$ on a Banach space $E$ (in our case \XY), that is
\begin{align*}
	\norm{tv}{\nu} := |\nu|(E) := \sup \left\{\left.\sum_{i=1}^{n} |\nu|(E_i) \,\right|\, E_1,\dots,E_n \text{ is a partition of } E \right\}\,.
\end{align*}
Furthermore note that $k_1-k_2$ generally is no kernel and therefore $\normSup{k_1-k_2}$ in \eqref{eq:Rob_LandauFormulation2} denotes the general supremum norm of a function instead of the special definition of $\normSup{\cdot}$ for kernels stated before \citep[which coincides with the square root of the general definition when applied to kernels, \vgl][\seite 22]{cucker2007}. 

Returning to the reduced conditions on the loss function \loss compared to the referenced theorem by \citet{christmann2018}, we only require \loss to be convex as well as Lipschitz continuous. Here, convexity refers to convexity in the last argument, \ie we call a loss function $\loss:\XYR\to[0,\infty)$ \it convex \rm if $\loss(x,y,\cdot):\R\to[0,\infty)$ is convex for all $(x,y)\in\XY$. Furthermore, we call \loss \it Lipschitz continuous \rm if there exists a constant $c\ge 0$ such that
\begin{align*}
	|L(x,y,t_1)-L(x,y,t_2)| \le c \cdot |t_1-t_2| \qquad \forall\, x\in\X,y\in\Y,t_1,t_2\in\R\,.
\end{align*}
We denote the smallest such constant by $|\loss|_1$ and call it \it Lipschitz constant \rm of \loss.

Additionally, we require the kernels used in the definition of the SVMs to be bounded which is, for example, satisfied by the popular Gaussian kernel. We will always denote the RKHS and the canonical feature map associated with a kernel by providing them with the same indices or other additional notation, for example, $\tilde{\H}_1$ and $\tilde{\Phi}_1$ denote RKHS and canonical feature map belonging to $\tilde{\k}_1$.

With these conditions on the loss function \loss and the kernels $\k_1$ and $\k_2$, it would now be possible to state a stability result of the type \eqref{eq:Rob_LandauFormulation2}. This would however require us to impose an additional condition on the probability measures $\P_1$ and $\P_2$ in order to guarantee both of the SVMs which are to be compared to uniquely exist. More specifically, it would be necessary that the RKHS $\H_i$ contains at least one function $f$ with finite risk with respect to $\P_i$ \citep[\vgl][Lemma 5.1 and Theorem 5.2]{steinwart2008}. 
One way to ensure this is to impose the moment condition $\ew[\P_i]{|Y|} < \infty$ on $\P_i$ \citep[\vgl][]{christmann2009}. Alas, this moment condition excludes heavy-tailed distributions such as the Cauchy distribution. In order to circumvent this problem, we will use so-called \it shifted loss functions\rm. These special losses have been applied in robust statistics for a long time \citep[\vgl][]{huber1967} and have been introduced to SVMs by \citet{christmann2009}. The concept of shifted loss functions is simple enough: As the name suggests, they just shift a loss function by some fixed amount. More specifically, given a loss function $\loss:\XYR\to[0,\infty)$, the \it shifted loss function \rm \lossshift is defined by
\begin{align*}
	\lossshift:&\ \XYR \to \R\,,\\
	&\ (x,y,t)\mapsto \loss(x,y,t)-\loss(x,y,0)\,.
\end{align*}
Risks and SVMs can be defined in the same way as for normal loss functions, \ie
\begin{align*}
	\riskshift(f) := \ew[\P]{\lossshift(X,Y,f(X))}
\end{align*}
and
\begin{align}\label{eq:Rob_DefShiftSVM}
	\fshifttheok := \arg\inf_{f\in\H} \riskshift(f) + \lb \normH{f}^2\,.
\end{align}
We will again just write \frob instead of \fshifttheok whenever \lossshift is clear from the context or does not need to be specified.
It is easy to see that \lossshift is convex respectively Lipschitz continuous if and only if \loss is convex respectively Lipschitz continuous and that they both have the same Lipschitz constant, \ie $|\lossshift|_1=|\loss|_1$ \citep[\vgl][Proposition 2]{christmann2009}. Hence, the properties of \loss and of \lossshift can be used interchangeably.

\citet{christmann2009} additionally showed that $\fshifttheok=\ftheok$ whenever $\risk(0)<\infty$, \ie that using shifted loss functions leads to the same results as using normal loss functions and is therefore justified, and that the use of \lossshift eliminates the need for the moment condition whenever \loss is Lipschitz continuous and the kernel is bounded. Since we required these two properties anyway, using \lossshift instead of \loss rids us of the moment condition without imposing any additional conditions. When writing \frob, we will usually refer to \fshifttheok instead of \ftheok because of these advantages of \lossshift. However, since the two functions coincide whenever both exist, we could obviously also use \ftheok in these cases.

We will now state our first main result about the stability of SVMs. As mentioned in the introduction, this is a generalization of a result (Theorem 2.7) by \citet{christmann2018}: First of all, we eliminated an additional condition on \loss that was required by \citet{christmann2018}. Previously, \loss did not only need to be convex and Lipschitz continuous but also differentiable. Since many loss functions are not differentiable (\eg pinball loss, $\eps$-insensitive loss, hinge loss), this change makes the result applicable to a considerably larger class of learning tasks. Secondly, in \citet[Theorem 2.7]{christmann2018} it was assumed that the regularization parameters $\lb_1$ and $\lb_2$ were greater than some specified positive constant, which is unsatisfactory because the regularization parameter used by an SVM has to converge to zero as the size of the training data set tends to infinity in order to achieve consistency \citep[\vgl][Theorem 8]{christmann2009}. In order to circumvent this problem, \citet{christmann2018} additionally provided another result (Theorem 2.10) in which $\lb_1$ and $\lb_2$ are allowed to be arbitrarily close to zero and instead of $\normSup{\froba-\frobb}$, as in \eqref{eq:Rob_LandauFormulation}, they bound $\norm{\H_1}{\froba-\frobb}$. Since
\begin{align}\label{eq:Rob_InfNormHNorm}
	\normSup{f} \le \normSup{\k} \cdot \normH{f}
\end{align}
for every RKHS \H and $f\in\H$ (\vgl \citeauthor{cucker2007}, \citeyear{cucker2007}, Theorem 2.9; \citeauthor{steinwart2008}, \citeyear{steinwart2008}, Lemma 4.23) and $\k_1$ is assumed to be bounded, this also translates to a bound for $\normSup{\froba-\frobb}$. Alas, this result obviously requires the RKHSs $\H_1$ and $\H_2$ be nested, $\H_2\subseteq \H_1$, and additionally uses $\norm{\H_1}{\k_1-\k_2}$ instead of the more easily interpretable $\normSup{\k_1-\k_2}$ in the bound.

In the subsequent theorem, we neither need $\lb_1$ and $\lb_2$ to be greater than some positive constant nor $\H_1$ and $\H_2$ to be nested:

\begin{mythm}\label{Thm:Rob_Stability}
	Let \X be a complete and separable metric space and $\Y\subseteq\R$ be closed. Let $\P_1,\P_2\in\MXY$ be probability measures, $\lb_1,\lb_2>0$ and $\k_1,\k_2$ be measurable and bounded kernels on \X with separable RKHSs $\H_1,\H_2$. Denote $\kappa:=\max\{\normSup{\k_1},\normSup{\k_2}\}$ and $\tau:=\min\{\lb_1,\lb_2\}$. Let \loss be a convex and Lipschitz continuous loss function. Then,
	\begin{align*}
		\normSup{\froba-\frobb} \le \frac{|\loss|_1}{\tau} \cdot  \bigg(& \kappa^2\cdot \norm{tv}{\P_1-\P_2} + \frac{\kappa^2}{\tau}\cdot |\lb_1-\lb_2|\\
		&+ \frac{1}{2} \cdot \normSup{\k_1-\k_2} + \kappa \cdot \sqrt{\normSup{\k_1-\k_2}} \bigg)\,.
	\end{align*}
\end{mythm}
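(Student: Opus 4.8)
The plan is to characterize each SVM through its first-order optimality condition and compare the two optimizers through an intermediate function living in a common RKHS. Since \loss, and hence \lossshift, is convex and Lipschitz and the kernels are bounded, each regularized risk functional $f\mapsto\ew[\P]{\lossshift(X,Y,f(X))}+\lb\normH{f}^2$ is $2\lb$-strongly convex and attains its infimum, and its minimizer admits the Bochner-integral representation
\[
\frob = -\tfrac{1}{2\lb}\,\ew[\P]{h(X,Y)\,\Phi(X)}\,,
\]
where $h(x,y)$ is a measurable selection of the subdifferential $\partial_3\lossshift(x,y,\frob(x))$ with $\normSup{h}\le|\loss|_1$; the integral is well defined because $\normH{\Phi(X)}=\sqrt{\k(X,X)}\le\kappa$. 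Writing $h_1,h_2$ for the selections belonging to \froba and \frobb, I would record the two facts that drive everything: the pointwise bound $|h_i|\le|\loss|_1$, and the monotonicity of the subdifferential of the convex map $\lossshift(x,y,\cdot)$, which yields $\big(h_1(x,y)-h_2(x,y)\big)\big(\froba(x)-\frobb(x)\big)\ge 0$ for all $(x,y)$.

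The main obstacle is that \froba and \frobb live in the \emph{different} RKHSs $\H_1$ and $\H_2$, so they cannot be compared inside one Hilbert space. I would resolve this by transporting \froba into $\H_2$ through the auxiliary function $\ftilde := -\tfrac{1}{2\lb_1}\ew[\P_1]{h_1(X,Y)\,\Phi_2(X)}\in\H_2$, which reuses the coefficient function $h_1$ of \froba but pairs it with the feature map $\Phi_2$ of $\k_2$. By the reproducing property, $\froba(x)-\ftilde(x)=-\tfrac{1}{2\lb_1}\ew[\P_1]{h_1(X,Y)\,[\k_1(X,x)-\k_2(X,x)]}$, so that $\normSup{\froba-\ftilde}\le\tfrac{|\loss|_1}{2\tau}\normSup{\k_1-\k_2}$; this already produces the linear kernel term.

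It then remains to bound $\normSup{\ftilde-\frobb}$, and since both functions now lie in $\H_2$ I can do so via $\normSup{\cdot}\le\kappa\norm{\H_2}{\cdot}$ after estimating the $\H_2$-norm. Expanding $\norm{\H_2}{\ftilde-\frobb}^2=\langle\ftilde-\frobb,\ftilde-\frobb\rangle_{\H_2}$ and inserting the two representations together with the reproducing property turns the squared norm into $-\tfrac{1}{2\lb_1}\ew[\P_1]{h_1 D}+\tfrac{1}{2\lb_2}\ew[\P_2]{h_2 D}$ with $D:=\ftilde-\frobb$. I would split this, using pivots, into a subgradient term $\tfrac{1}{2\lb_2}\ew[\P_2]{(h_2-h_1)D}$, a measure term $\tfrac{1}{2\lb_2}\ew[\P_2-\P_1]{h_1 D}$, and a regularization term $\big(\tfrac{1}{2\lb_2}-\tfrac{1}{2\lb_1}\big)\ew[\P_1]{h_1 D}$. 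In the subgradient term I write $D=(\froba-\frobb)+(\ftilde-\froba)$: monotonicity makes the $(\froba-\frobb)$-part nonpositive and hence discardable, while the $(\ftilde-\froba)$-part is controlled by the sup-norm bound on $\ftilde-\froba$ already established. The measure and regularization terms are handled by $|h_1|\le|\loss|_1$, by $\normSup{D}\le\kappa\norm{\H_2}{D}$, and by the total-variation respectively parameter differences.

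The key technical point is that, after these estimates, the quantity $a:=\norm{\H_2}{D}$ obeys a quadratic inequality $a^2\le B+Ca$, where $B=\LandauO(\normSup{\k_1-\k_2})$ collects the linear kernel contribution coming from the $(\ftilde-\froba)$-part, and $C$ collects the $\norm{tv}{\P_1-\P_2}$ and $|\lb_1-\lb_2|$ contributions. Solving it gives $a\le C+\sqrt{B}$, and the $\sqrt{B}$ is precisely the origin of the $\sqrt{\normSup{\k_1-\k_2}}$ term while $C$ supplies the total-variation and regularization terms. Converting back by $\normSup{\ftilde-\frobb}\le\kappa\,a$ and adding $\normSup{\froba-\ftilde}$ through the triangle inequality assembles the four claimed terms, with the common prefactor $|\loss|_1/\tau$ appearing from the estimates $\tfrac{1}{2\lb_i}\le\tfrac{1}{2\tau}$ (the argument in fact yields constants no larger than those stated). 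I expect the bookkeeping in this quadratic-inequality step---keeping the monotonicity term on the correct side of the inequality and tracking which differences end up inside $B$ rather than $C$---to be the only delicate part; the remainder is a routine application of the boundedness of $h_i$, the reproducing property, and $\normSup{\cdot}\le\kappa\norm{\H_2}{\cdot}$.
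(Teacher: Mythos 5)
Your argument is correct, and it reaches the stated bound (in fact with slightly smaller constants), but it follows a genuinely different route from the paper. The paper splits $\normSup{\froba-\frobb}$ by the triangle inequality into three differences of \emph{actual SVMs}, each varying exactly one entry of the triple $(\P,\lb,\k)$, and treats them in separate lemmas; the delicate kernel step is handled by embedding $\H_1$ and $\H_2$ into the sum RKHS of the midpoint kernel $(\k_1+\k_2)/2$ and comparing each $f_i$ with the auxiliary SVM for that kernel, where the monotonicity of the subdifferential makes the critical inner product nonpositive and the square root emerges directly from a bound on a \emph{squared} $\Htilde$-norm. You instead perturb all three quantities simultaneously against a single auxiliary element $\ftilde\in\H_2$ that is \emph{not} an SVM but merely $\froba$'s integral representation with the feature map swapped, which yields the linear kernel term for free, and you then absorb the remaining comparison into a quadratic inequality $a^2\le B+Ca$ for $a=\norm{\H_2}{\ftilde-\frobb}$, with monotonicity used---after the decomposition $D=(\froba-\frobb)+(\ftilde-\froba)$---to discard the dangerous part of the subgradient term. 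The pivoted three-way split of $\norm{\H_2}{D}^2$, the placement of the $(\ftilde-\froba)$-contribution into $B$ rather than $C$, and the elementary bound $a\le C+\sqrt{B}$ all check out, and the needed measurability of $h_1\Phi_2$ follows from the separability of $\H_2$ exactly as in the paper's use of Bochner integrals. What your approach buys is the avoidance of the sum-RKHS machinery (the Berlinet--Thomas-Agnan decomposition theorem and the existence of the auxiliary midpoint-kernel SVM) and marginally sharper constants; what the paper's modular three-lemma decomposition buys is reusability, since the same lemmas are recycled essentially verbatim for the $L_p$-version (Theorem 2.3) and for the localized results of Section 3, whereas your monolithic estimate would have to be re-derived for each of those settings.
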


\begin{mybem}
	The condition of $\H_1$ and $\H_2$ being separable can be difficult to check. Because of the separability of \X, this however holds true whenever $\k_1$ and $\k_2$ are continuous (\vgl \citeauthor{berlinet2004}, \citeyear{berlinet2004}, Corollary 4; \citeauthor{steinwart2008}, \citeyear{steinwart2008}, Lemma 4.33), and it suffices to verify this continuity instead (which is satisfied by most of the typically used kernels and is easy to check). 
\end{mybem}

Now, the subsequent \Cref{Thm:Rob_StabilityLp} states a result which is very similar to that from \Cref{Thm:Rob_Stability} but with respect to the \Lppix-norm: 

\begin{mythm}\label{Thm:Rob_StabilityLp}
	Let \X be a complete and separable metric space and $\Y\subseteq\R$ be closed. Let $\P_1,\P_2\in\MXY$ be probability measures, $\lb_1,\lb_2>0$ and $\k_1,\k_2$ be measurable and bounded kernels on \X with separable RKHSs $\H_1,\H_2$. Denote $\kappa:=\max\{\normSup{\k_1},\normSup{\k_2}\}$ and $\tau:=\min\{\lb_1,\lb_2\}$. Let \loss be a convex and Lipschitz continuous loss function. Then, for all $p\in[1,\infty)$ and all $i\in\{1,2\}$,
	\begin{align*}
		\normLppix{\froba-\frobb} \le \frac{|\loss|_1}{\tau} \cdot \bigg(& \kappa^2\cdot \norm{tv}{\P_1-\P_2} + \frac{\kappa^2}{\tau}\cdot |\lb_1-\lb_2|\\ 
		&+ \frac{1}{2} \cdot \normLppixb{\k_1-\k_2}+ \kappa \cdot \sqrt{\normLppixb{\k_1-\k_2}} \bigg)\,.
	\end{align*}
\end{mythm}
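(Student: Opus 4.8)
The plan is to run the argument in close parallel to the proof of \Cref{Thm:Rob_Stability}, changing only the norm in which $\froba-\frobb$ is measured. The starting point is the representer characterization of shifted-loss SVMs: for a bounded kernel $\k$ with feature map $\Phi$ one has $f_{\P,\lb,\k}=-\tfrac{1}{2\lb}\ew[\P]{h\,\Phi(X)}$ for some measurable $h$ with $|h|\le|\loss|_1$ and $h(x,y)$ in the subdifferential of $\lossshift(x,y,\cdot)$ at $f_{\P,\lb,\k}(x)$, so that by \eqref{eq:Rob_ReprodProperty} and \eqref{eq:Rob_FeatureMap} the SVM has the pointwise form $f_{\P,\lb,\k}(x)=-\tfrac{1}{2\lb}\ew[\P]{h\,\k(X,x)}$. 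I would then insert two intermediate SVMs and telescope $\froba-\frobb$ into a pure change of $\P$, a pure change of $\lb$, and a pure change of $\k$. Crucially, for a fixed target index $i\in\{1,2\}$ I would order the telescope so that the kernel is exchanged while the measure is held fixed at $\P_i$; the remaining two steps then each stay within a single RKHS.

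For the change of $\P$ and the change of $\lb$ nothing new is needed. In each of these two steps both SVMs live in the same RKHS, and one bounds their difference in the RKHS norm exactly as in \Cref{Thm:Rob_Stability}: test the difference of the two representer identities against the difference of the two SVMs, discard the sign-definite contribution coming from monotonicity of the subdifferential of $\lossshift(x,y,\cdot)$, and estimate the remainder through the total-variation bound $\normH{\ew[\P_1]{h\Phi}-\ew[\P_2]{h\Phi}}\le|\loss|_1\kappa\,\norm{tv}{\P_1-\P_2}$ respectively through $\normH{\ew[\P]{h\Phi}}=2\lb\normH{f_{\P,\lb,\k}}\le|\loss|_1\kappa$. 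Since $\normLppix{f}\le\normSup{f}\le\kappa\normH{f}$ by \eqref{eq:Rob_InfNormHNorm} and because $\Pix[i]$ is a probability measure, these RKHS estimates drop verbatim to the $\Lppix$-norm and reproduce the summands $\kappa^2\norm{tv}{\P_1-\P_2}$ and $\tfrac{\kappa^2}{\tau}|\lb_1-\lb_2|$.

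The real work is the kernel step, with $\P:=\P_i$, $\lb:=\lb_i$ fixed, $f_1:=f_{\P,\lb,\k_1}\in\H_1$, $f_2:=f_{\P,\lb,\k_2}\in\H_2$, and $h_1,h_2$ the associated subgradients. I would introduce the bridging function $\tilde f:=-\tfrac{1}{2\lb}\ew[\P]{h_2\,\Phi_1(X)}\in\H_1$ (the object with ``the kernel of $f_1$ but the subgradient of $f_2$'') and split $f_1-f_2=(f_1-\tilde f)+(\tilde f-f_2)$ pointwise. For $\tilde f-f_2$ the reproducing representations give $|(\tilde f-f_2)(x)|\le\tfrac{|\loss|_1}{2\lb}\ew[\P]{|\k_1(X,x)-\k_2(X,x)|}$; taking the $\Lppix$-norm in $x$ and applying Minkowski's integral inequality followed by Jensen's inequality (legitimate since $\Pix[i]$ is a probability measure and $p\ge1$) collapses the iterated integral to $\tfrac{|\loss|_1}{2\lb}\normLppixb{\k_1-\k_2}$, the third summand. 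This is exactly where anchoring the kernel exchange at $\P_i$ is essential: it makes the reference measure come out as the product $\Pix[i]\otimes\Pix[i]$ rather than a mixed product. For $f_1-\tilde f$, both functions lie in $\H_1$, so I test the identity $2\lb(f_1-\tilde f)=-\ew[\P]{(h_1-h_2)\Phi_1(X)}$ against $f_1-\tilde f$; writing $f_1-\tilde f=(f_1-f_2)+(f_2-\tilde f)$ inside the resulting expectation, the $(f_1-f_2)$ part is $\le 0$ by monotonicity, leaving $2\lb\normH{f_1-\tilde f}^2\le 2|\loss|_1\,\ew[\P]{|f_2(X)-\tilde f(X)|}$. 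The inner quantity is bounded as above, so the right-hand side is controlled by $\norm{L_1(\Pix[i]\otimes\Pix[i])}{\k_1-\k_2}\le\normLppixb{\k_1-\k_2}$; the ensuing square root, transported via $\normLppix{\cdot}\le\kappa\normH{\cdot}$, yields the last summand $\kappa\sqrt{\normLppixb{\k_1-\k_2}}$. Collecting the three steps and applying the triangle inequality gives the claim, and the symmetric telescope settles the other value of $i$.

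I expect the kernel step to be the main obstacle, for two reasons. First, $f_1$ and $f_2$ live in different RKHSs, so no single strong-convexity or monotonicity argument applies to them directly; the bridging function $\tilde f$, which deliberately mixes the kernel of one problem with the subgradient of the other, is what repairs this and lets monotonicity be invoked inside $\H_1$ alone. Second, producing the correct product measure $\Pix[i]\otimes\Pix[i]$ (hence a genuine $\normLppixb{\cdot}$-bound rather than the cruder $\normSup{\k_1-\k_2}$) hinges on performing the kernel exchange at the measure $\P_i$ together with the Minkowski-plus-Jensen reduction; keeping this bookkeeping consistent while retaining the constants of \Cref{Thm:Rob_Stability} is the delicate part. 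Everything else is a routine re-run of the supremum-norm proof.
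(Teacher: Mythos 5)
Your argument is correct, and its outer shell---the three-term telescope with the kernel exchange deliberately anchored at $\P_i$, and the reduction of the $\P$-step and the $\lb$-step to the supremum-norm bounds via $\normLppix{f}\le\normSup{f}\le\kappa\normH{f}$---is exactly the paper's. Where you take a genuinely different route is in the kernel-exchange step, which is the heart of the proof. The paper introduces an auxiliary \emph{SVM} $\ftilde:=f_{\lossshift,\P,\lb,\tilde{\k}}$ for the averaged kernel $\tilde{\k}=(\k_1+\k_2)/2$; this forces it to work in the direct-sum RKHS $\tilde{\H}_1\oplus\tilde{\H}_2$ (invoking the sum-of-RKHS theorem of Berlinet and Thomas-Agnan together with a kernel-scaling lemma, and re-establishing existence of an SVM for a third kernel), and it can then pair $\tilde{h}-h_i$ directly against $\fj-\ftilde$ because $\tilde{h}$ genuinely is a subgradient at $\ftilde$. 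You instead stay entirely inside $\H_1$ and bridge with the non-variational function $\ftilde=-\tfrac{1}{2\lb}\ew[\P]{h_2\Phi_1}$; since $h_2$ is \emph{not} a subgradient at your $\ftilde$, you must insert $f_2$ into the pairing before invoking monotonicity of the subdifferential, and that extra insertion---whose residual $\ew[\P]{|h_1-h_2|\,|f_2-\ftilde|}$ you control by the same pointwise kernel-difference bound---is precisely what repairs the argument. Both routes yield the claimed constants; in fact, after summing the paper's two halves ($\normSup{\tilde\k-\k_i}=\tfrac12\normSup{\k_1-\k_2}$), the paper lands exactly on $\tfrac12$ and $\kappa$, while your single-sided version gives $\tfrac12$ and $\kappa/\sqrt2$, i.e.\ a marginally sharper square-root term. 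What your variant buys is economy: no direct-sum machinery, no auxiliary SVM for a third kernel, and scalar Minkowski-plus-Jensen on the pointwise reproducing representation in place of the paper's Bochner-integral norm-exchange lemma (\Cref{Lem:Rob_Aux_NormIntegralTauschen}). What the paper's construction buys is symmetry in $\k_1,\k_2$ and reuse of the representer theorem verbatim for all three functions, but that symmetry is not needed for the stated bound.
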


This result will become particularly useful in section \Cref{SubSec:Rob_Loc_DiffLoc} where we will investigate the stability of \it localized SVMs \rm by examining the difference between two localized SVMs that are based on two different regionalizations of the input space \X. These different regionalizations will lead to the two localized SVMs possibly vastly differing at some points where the regionalizations do not coincide and thus no interesting bound on the supremum norm of their difference being possible. On the other hand, if the regionalizations do not differ too much, we will still be able to derive meaningful bounds on the \Lapix-norm of the difference between the two localized SVMs based on \Cref{Thm:Rob_StabilityLp}.

\section{Total stability of localized SVMs}\label{Sec:Rob_Loc}

We now want to take look at localized SVMs and show that they inherit the stability properties from \Cref{Thm:Rob_Stability,Thm:Rob_StabilityLp} under certain conditions on the regionalization method. We will take a similar approach to \citet{dumpert2018,dumpert2020}, \ie divide the input space \X into several, possibly overlapping, regions with relatively mild assumptions about the specific regionalization. On these subspaces, we will define local SVMs which we will then combine in order to obtain a global predictor that we will call \it localized SVM\rm.

\subsection{Same regionalization for both localized SVMs}\label{SubSec:Rob_Loc_SameLoc}

First, we assume that both of the global predictors we want to compare in order to assess the stability of this localized approach are based on the same regionalization. Our stability result will not be based on any specific regionalization method but rather can be applied to any regionalization satisfying some very mild conditions. We will denote this regionalization by $\XBbold := \{\Xb[1],\dots,\Xb[B]\}$ for some sets $\Xb[1],\dots,\Xb[B]$ such that the following holds true:

\begin{description}
	\item[(R1)] $\Xb[1],\dots,\Xb[B] \subseteq \X$ and $\X=\bigcup_{b=1}^B \Xb$\,.
\end{description}
Additionally, \XBbold needs to satisfy the following condition for whichever probability measure \P we will base a localized SVM on:
\begin{description}
	\item[(R2)] $\P(\Xb\times\Y) > 0$ for all $b\in\{1,\dots,B\}$.
\end{description}
Note that \textbf{(R1)} tells us that the regions need not necessarily be pairwise disjoint but can instead also overlap. Condition \textbf{(R2)} of no region having probability mass zero is trivially needed because the local SVM on the respective region would not be defined elsewise.

Since we want to investigate stability similarly to \Cref{Thm:Rob_Stability,Thm:Rob_StabilityLp}, we again assume to have two possibly different Borel probability measures $\P_1$ and $\P_2$ on \XY which the predictors we want to compare are based on. For obtaining local SVMs on $\Xb[1],\dots,\Xb[B]$ for our localized approach, we now first need to introduce the associated local probability measures on these regions (respectively on $\Xb\times\Y$, $b=1,\dots,B$) by restricting $\P_1$ and $\P_2$: On an arbitrary $\tilde{\X}\subseteq\X$ satisfying $\P_i(\tilde{\X}\times\Y)>0$, we define the local probability measure based on $\P_i$ by
\begin{align}\label{eq:Rob_Loc_Pib}
	\P_{i,\tilde{\X}} := \frac{1}{\P_i(\tilde{\X}\times\Y)}\cdot (\P_i)\einschraenkung_{\tilde{\X}\times\Y}\,,\qquad i\in\{1,2\}.
\end{align}
If the regionalization satisfies \textbf{(R2)} for $\P_1$ and $\P_2$, these local probability measures are obviously well-defined on any $\Xb$, $b\in\{1,\dots,B\}$. Since we will mainly need the local probability measures on these regions, we also write $\Pib:=\P_{i,\Xb}$ to shorten the notation, $i\in\{1,2\}$, $b\in\{1,\dots,B\}$.

As mentioned in \Cref{Sec:Rob_Intro}, one big advantage of such localized approaches lies in their increased flexibility with regards to learning a function whose complexity and volatility vary across the input space since areas with differing complexities of the function can be separated into different regions. Hence, it should obviously be possible to choose different regularization parameters and kernels in the different regions because they to some extent control the complexity of the resulting SVM. We therefore have vectors of regularization parameters $\lbibold := (\lb_{i,1},\dots,\lb_{i,B})$, with $\lb_{i,b}>0$ for all $b\in\{1,\dots,B\}$, and vectors of kernels $\kibold := (\k_{i,1},\dots,\k_{i,B})$, for $i=1,2$. Based on these and a shifted loss function \lossshift, we obtain from \eqref{eq:Rob_DefShiftSVM} SVMs 
\begin{align*}
	\fibext : \Xb\to\R\,, \qquad i\in\{1,2\},\, b\in\{1,\dots,B\}
\end{align*}
which we call \it local SVMs \rm on \Xb.

For combining these local SVMs and thus obtaining a global predictor on \X, we first need to extend them in a way such that they are defined on all of \X. That is, for a function $g:\tilde{\X}\to\R$ on $\tilde{\X}\subseteq\X$, we define the zero-extension $\hat{g}:\X\to\R$ by
\begin{align*}
	\hat{g}(x) := \begin{cases}
		g(x) &\wenn x\in\tilde{\X}\,,\\
		0 &\sonst\,.
	\end{cases}
\end{align*}
Later on we will also need zero-extensions of kernels and probability measures which we indicate by using the $(\hat{\cdot})$-notation as well: If $\k:\tilde{\X}\times\tilde{\X}\to\R$ is a kernel on $\tilde{\X}\subseteq\X$, we define $\hat{\k}:\XX\to\R$ by
\begin{align*}
	\hat{\k}(x_1,x_2) := \begin{cases}
		k(x_1,x_2) &\wenn x_1,x_2\in\tilde{\X}\,,\\
		0 &\sonst\,.
	\end{cases}
\end{align*}
If \Q is a Borel probability measure on $\tilde{\X}\times\Y$ with $\tilde{\X}\subseteq\X$, we define $\hat{\Q}:\BXY\to[0,1]$ by
\begin{align*}
	\hat{\Q}(A) := \Q\left(A\cap\left(\tilde{\X}\times\Y\right)\right)\qquad \forall\, A\in\BXY\,.
\end{align*}

Before finally defining our global predictors, we lastly also need weight functions $w_b$, $b\in\{1,\dots,B\}$, which pointwisely control the influence of the different local SVMs in areas where two or more regions overlap. We require these weight functions to satisfy the same three conditions as in \citet{dumpert2018,dumpert2020}:
\begin{description}
	\item[(W1)] $w_b:\X\to[0,1]$ for all $b\in\{1,\dots,B\}$.
	\item[(W2)] $\sum_{b=1}^B w_b(x) = 1$ for all $x\in\X$.
	\item[(W3)] $w_b(x)=0$ for all $x\notin\Xb$ and all $b\in\{1,\dots,B\}$.
\end{description}

With this, global predictors \fiext[1] and \fiext[2], which we will call \it localized SVMs \rm even though they are not necessarily SVMs themselves, can be defined by
\begin{align}\label{eq:Rob_Loc_DefGlobalPredictor}
	\fiext: \X\to\R\,,\, x\mapsto \sum_{b=1}^{B} w_b(x) \cdot \fibextdach(x)
\end{align}
for $i=1,2$, where we again omitted the shifted loss function \lossshift from the index to shorten the notation.

The succeeding theorem states that \Cref{Thm:Rob_Stability} can be transferred to the situation at hand, \ie that such localized SVMs inherit a similar stability property from regular SVMs:

\begin{mythm}\label{Thm:Rob_Loc_GleicheAufteilung}
	Let \X be a complete and separable metric space and $\Y\subseteq\R$ be closed. Let $\P_1,\P_2\in\MXY$ be probability measures. Let $\XBbold:=\{\Xb[1],\dots,\Xb[B]\}$ be a regionalization of \X such that \XBbold satisfies \textbf{(R1)} and, for $\P_1$ as well as for $\P_2$, \textbf{(R2)}. For all $i\in\{1,2\}$ and $b\in\{1,\dots,B\}$, let $\lbib>0$ and let $\kib$ be a bounded and measurable kernel on \Xb with separable RKHS $\Hib$. Denote $\kappa_b:=\max\{\normSup{\kib[1]},\normSup{\kib[2]}\}$ and $\tau_b:=\min\{\lbib[1],\lbib[2]\}$ for all $b\in\{1,\dots,B\}$. Let \loss be a convex and Lipschitz continuous loss function. Let \fiext[1] and \fiext[2] be defined as in \eqref{eq:Rob_Loc_DefGlobalPredictor} with the weight functions $w_1,\dots,w_B$ satisfying \textbf{(W1)}, \textbf{(W2)} and \textbf{(W3)}. Then,
	\begin{align*}
		\normSup{\fiext[1]-\fiext[2]} \le |\loss|_1 \cdot \max_{b\in\{1,\dots,B\}} \frac{1}{\tau_b} \cdot  \bigg(& \kappa_b^2\cdot \norm{tv}{\Pib[1]-\Pib[2]} + \frac{\kappa_b^2}{\tau_b}\cdot |\lbib[1]-\lbib[2]|\\ 
		&+ \frac{1}{2} \cdot \normSup{\kib[1]-\kib[2]} + \kappa_b \cdot \sqrt{\normSup{\kib[1]-\kib[2]}} \bigg)\,.
	\end{align*}
\end{mythm}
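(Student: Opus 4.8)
The plan is to reduce the claim to a pointwise estimate that lets me invoke \Cref{Thm:Rob_Stability} separately on each region. Since both localized SVMs are built from the same regionalization \XBbold and the same weights $w_1,\dots,w_B$, I would first write, for every $x\in\X$,
\[
	\fiext[1](x)-\fiext[2](x) = \sum_{b=1}^{B} w_b(x)\cdot\bigl(\fibextdach[1](x)-\fibextdach[2](x)\bigr),
\]
and then exploit the three weight conditions to collapse this sum into a maximum over the regions.

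The key observation is that the zero-extensions only matter where the weights are nonzero. By \textbf{(W3)}, $w_b(x)=0$ whenever $x\notin\Xb$, so for any $b$ contributing a nonzero summand we have $x\in\Xb$, where $\fibextdach[i]$ agrees with the genuine local SVM $\fibext[i]$. Hence each such summand is bounded by $\normSup{\fibext[1]-\fibext[2]}$, with the supremum taken over $\Xb$, the natural domain of the local SVMs. Using \textbf{(W1)} and \textbf{(W2)}, i.e.\ $w_b(x)\in[0,1]$ and $\sum_{b} w_b(x)=1$, the convex combination is dominated by the largest of these local quantities:
\[
	\bigl|\fiext[1](x)-\fiext[2](x)\bigr| \le \sum_{b=1}^{B} w_b(x)\cdot\normSup{\fibext[1]-\fibext[2]} \le \max_{b\in\{1,\dots,B\}}\normSup{\fibext[1]-\fibext[2]}.
\]
Taking the supremum over $x\in\X$ yields $\normSup{\fiext[1]-\fiext[2]}\le\max_b\normSup{\fibext[1]-\fibext[2]}$.

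It then remains to bound each $\normSup{\fibext[1]-\fibext[2]}$, and here I would apply \Cref{Thm:Rob_Stability} to the local learning problem on $\Xb$: the region plays the role of the input space, the local measures $\Pib[1],\Pib[2]$ (well-defined by \textbf{(R2)}) play the roles of $\P_1,\P_2$, and $\lbib[i],\kib[i],\Hib[i]$ play the roles of $\lb_i,\k_i,\H_i$. Since \loss is convex and Lipschitz with the same constant $|\loss|_1$ on every region, \Cref{Thm:Rob_Stability} furnishes exactly the per-region bound with $\kappa_b$ and $\tau_b$ in place of $\kappa$ and $\tau$. Pulling $|\loss|_1$ out of the maximum over $b$ produces the stated inequality.

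The main obstacle I anticipate is not the algebra above, which is essentially just weight bookkeeping, but verifying that the hypotheses of \Cref{Thm:Rob_Stability} genuinely transfer to each region $\Xb$. In particular one must ensure the local SVMs are well-defined and unique (guaranteed by \textbf{(R2)} together with the boundedness of the kernels and the shifted-loss device, which removes any moment condition on the conditional distributions) and that the measure-theoretic and topological assumptions survive restriction to $\Xb$. Separability of $\Hib$ is assumed outright, and $\Xb$ inherits separability from \X; the only genuinely delicate point is completeness of $\Xb$, which I would address by noting that it is the RKHS separability, rather than completeness of the input space itself, that the argument underlying \Cref{Thm:Rob_Stability} actually relies on.
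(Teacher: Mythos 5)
Your first step---collapsing the weighted sum into $\max_b \normSup{\fibdach[1]-\fibdach[2]}$ via \textbf{(W1)} and \textbf{(W2)}---is exactly the paper's opening move and is fine. The gap is in the second step, and you have in fact put your finger on it yourself without resolving it: you want to apply \Cref{Thm:Rob_Stability} with \Xb playing the role of the input space, but \Xb need not be complete, and your proposed remedy is the bare assertion that the argument underlying \Cref{Thm:Rob_Stability} ``actually relies on'' RKHS separability rather than completeness of the input space. That claim is not justified anywhere in your proposal and cannot simply be waved through: \Cref{Thm:Rob_Stability} rests on the representer theorem (\Cref{Thm:Rob_Aux_Representer}), which is stated for complete separable metric input spaces, and whose proof produces a measurable subgradient selection $h$ by machinery that uses the Polish-space structure. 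To make your route rigorous you would have to reprove or at least carefully re-examine that theorem under weakened hypotheses, which is a nontrivial undertaking and not ``weight bookkeeping.''

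The paper closes this gap differently and cleanly: it shows that each zero-extension \fibdach is itself an SVM \emph{on the full space} \X, namely $\fibdach=\fibdachext$, by invoking Lemma~2 of \citet{meister2016} (the RKHS of the zero-extended kernel \kibdach consists exactly of the zero-extensions of functions in \Hib, with equal norms) together with the observation that $\riskshift[\lossshift,\Pibdach](\hat g)=\riskshift[\lossshift,\Pib](g)$ since \Pibdach puts all its mass on $\Xb\times\Y$. One then applies \Cref{Thm:Rob_Stability} on \X itself---which \emph{is} complete and separable---to the triples $(\Pibdach,\lbib,\kibdach)$, and notes that the relevant norms of the zero-extended objects coincide with those of the originals. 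You should replace your completeness hand-wave with this zero-extension argument (or an equivalent rigorous justification); as written, the proposal does not constitute a proof of the key step.
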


Similarly, we can also transfer \Cref{Thm:Rob_StabilityLp} in order to bound the \Lppix-norm of the difference:

\begin{mythm}
	\label{Thm:Rob_Loc_GleicheAufteilungLp}
	Let \X be a complete and separable metric space and $\Y\subseteq\R$ be closed. Let $\P_1,\P_2\in\MXY$ be probability measures. Let $\XBbold:=\{\Xb[1],\dots,\Xb[B]\}$ be a regionalization of \X such that \XBbold satisfies \textbf{(R1)} and, for $\P_1$ as well as for $\P_2$, \textbf{(R2)}. For all $i\in\{1,2\}$ and $b\in\{1,\dots,B\}$, let $\lbib>0$ and let $\kib$ be a bounded and measurable kernel on \Xb with separable RKHS $\Hib$. Denote $\kappa_b:=\max\{\normSup{\kib[1]},\normSup{\kib[2]}\}$ and $\tau_b:=\min\{\lbib[1],\lbib[2]\}$ for all $b\in\{1,\dots,B\}$. Let \loss be a convex and Lipschitz continuous loss function. Let \fiext[1] and \fiext[2] be defined as in \eqref{eq:Rob_Loc_DefGlobalPredictor} with the weight functions $w_1,\dots,w_B$ satisfying \textbf{(W1)}, \textbf{(W2)} and \textbf{(W3)}. Then, for all $p\in[1,\infty)$ and all $i\in\{1,2\}$,
	\begin{align*}
		&\normLppix{\fiext[1]-\fiext[2]}\\ 
		&\le |\loss|_1 \cdot \sum_{b=1}^B \left(\Pix(\Xb)\right)^{1/p} \cdot  \bigg( \frac{\kappa_b^2}{\tau_b}\cdot \norm{tv}{\Pib[1]-\Pib[2]} + \frac{\kappa_b^2}{\tau_b^2}\cdot |\lbib[1]-\lbib[2]|\\
		&\hspace*{4.3cm}+ \frac{1}{2\tau_b} \cdot \normLppibxb{\kib[1]-\kib[2]} + \frac{\kappa_b}{\tau_b} \cdot \sqrt{\normLppibxb{\kib[1]-\kib[2]}} \bigg)\,.
	\end{align*}
\end{mythm}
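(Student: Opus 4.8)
The plan is to reduce the assertion region by region to the single‑machine bound of \Cref{Thm:Rob_StabilityLp} applied to the local SVMs on each \Xb, and to pay for the passage from the global measure \Pix to the local measures \Pibx by the factors $(\Pix(\Xb))^{1/p}$. Throughout I fix $i\in\{1,2\}$ and $p\in[1,\infty)$. First I would write out the pointwise difference of the two localized predictors from \eqref{eq:Rob_Loc_DefGlobalPredictor},
\begin{align*}
\fiext[1](x)-\fiext[2](x) = \sum_{b=1}^B w_b(x)\big(\fibextdach[1](x)-\fibextdach[2](x)\big)\,,
\end{align*}
and apply Minkowski's inequality in \Lppix to obtain
\begin{align*}
\normLppix{\fiext[1]-\fiext[2]} \le \sum_{b=1}^B \normLppix{w_b\cdot(\fibextdach[1]-\fibextdach[2])}\,.
\end{align*}
It is precisely this triangle‑inequality step (rather than a Jensen‑type convexity estimate using \textbf{(W2)}, which would give a tighter but differently shaped $\ell_p$‑bound inside the root) that produces the additive form with the $(\Pix(\Xb))^{1/p}$‑weights appearing in the statement.

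Next I would estimate each summand. By \textbf{(W1)} and \textbf{(W3)} one has $w_b(x)^p\le \Ind[\Xb](x)$ pointwise, and on \Xb the zero‑extensions agree with the local SVMs, so
\begin{align*}
\normLppix{w_b\cdot(\fibextdach[1]-\fibextdach[2])}^p \le \int_{\Xb} \big|\fibext[1]-\fibext[2]\big|^p \,\mathrm{d}\Pix\,.
\end{align*}
The defining relation \eqref{eq:Rob_Loc_Pib} gives $\Pibx(\cdot)=\Pix(\cdot\cap\Xb)/\Pix(\Xb)$, hence $\int_{\Xb} g\,\mathrm{d}\Pix = \Pix(\Xb)\int g\,\mathrm{d}\Pibx$ for nonnegative $g$ supported on \Xb. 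Taking $p$‑th roots therefore yields
\begin{align*}
\normLppix{w_b\cdot(\fibextdach[1]-\fibextdach[2])} \le \big(\Pix(\Xb)\big)^{1/p}\cdot\normLppibx{\fibext[1]-\fibext[2]}\,.
\end{align*}
Finally I would apply \Cref{Thm:Rob_StabilityLp} on \Xb with the data $(\Pib[1],\lbib[1],\kib[1])$ and $(\Pib[2],\lbib[2],\kib[2])$, measuring the difference in \Lppibx, to bound $\normLppibx{\fibext[1]-\fibext[2]}$ by $\tfrac{|\loss|_1}{\tau_b}$ times the local four‑term expression; distributing $\tfrac{1}{\tau_b}$ and summing over $b$ reproduces the claimed inequality verbatim.

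The step needing the most care is the legitimacy of invoking \Cref{Thm:Rob_StabilityLp} with \Xb in the role of the input space. One must check that its hypotheses transfer to each region: \Xb inherits separability from \X automatically, the local probability measures \Pib[1] and \Pib[2] are well‑defined Borel measures on $\Xb\times\Y$ exactly because of \textbf{(R2)}, and the boundedness of \kib[1],\kib[2] together with the shifted loss guarantees that the local SVMs \fibext[1],\fibext[2] exist and are unique without any moment condition. The only genuinely delicate point is the complete‑metric‑space requirement, since a measurable subset \Xb need not be closed; this has to be addressed either by restricting to regionalizations for which each \Xb is itself a complete separable metric space, or by observing that the results underlying \Cref{Thm:Rob_StabilityLp} only use \Pibx as a measure. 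The remaining manipulations---Minkowski, the weight estimate, and the change of measure---are then routine.
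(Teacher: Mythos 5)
Your skeleton coincides with the paper's: Minkowski's inequality over the $B$ summands, the estimate $w_b^p\le\Ind[\Xb]$ from \textbf{(W1)}, the change of measure $\int_{\Xb}g\,d\Pix=\Pix(\Xb)\int g\,d\Pibx$ giving the factor $(\Pix(\Xb))^{1/p}$, and then a region-wise application of \Cref{Thm:Rob_StabilityLp}. However, the one step you yourself single out as delicate --- invoking \Cref{Thm:Rob_StabilityLp} with \Xb in the role of the input space --- is left genuinely open, and neither of your two suggested remedies closes it. Restricting to regionalizations whose cells are complete separable metric spaces adds a hypothesis that the theorem does not make (a measurable $\Xb\subseteq\X$ need not be Polish in the subspace topology), and ``the results underlying \Cref{Thm:Rob_StabilityLp} only use \Pibx as a measure'' is not an argument: the completeness of the input space enters through the representer theorem (\Cref{Thm:Rob_Aux_Representer}), on which the whole chain of lemmas rests.

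The paper's resolution is different and worth internalizing: it never applies the stability theorem on \Xb at all. Instead it zero-extends everything to \X --- the local SVMs to \fibdach, the kernels to \kibdach, the measures to \Pibdach --- and uses \citet[Lemma~2]{meister2016} (the RKHS of \kibdach consists exactly of the zero-extensions of functions in \Hib, with identical norms) together with $\riskshift[\lossshift,\Pibdach](\hat g)=\riskshift[\lossshift,\Pib](g)$ to conclude that $\fibdach=\fibdachext$ is itself an SVM on the complete and separable space \X. Then \Cref{Thm:Rob_StabilityLp} applies verbatim on \X, and since the extensions are supported on \Xb, all the norms appearing in the bound ($\norm{tv}{\cdot}$, the $\Lp$-norms of the kernel differences, $\kappa_b$) are unchanged when \Pibdach and \kibdach are replaced by \Pib and \kib. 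Without this identification (or an equivalent device), the region-wise invocation of the stability theorem in your proposal is unjustified, so as written the proof has a gap at exactly the point you flagged.
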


\begin{mybem}
	\Cref{Thm:Rob_Loc_GleicheAufteilung} does actually not need \textbf{(W3)} and in \Cref{Thm:Rob_Loc_GleicheAufteilungLp} we can even waive \textbf{(W2)} as well as \textbf{(W3)}. We still included them in the assumptions of the two theorems since we think that weight functions should usually satisfy these conditions.
\end{mybem}

As can be seen from the two preceding theorems, localizing the SVMs does not ruin their stability with respect to probability measure, regularization parameters and kernel: We can still bound the difference between two localized SVMs---with respect to its supremum or an \Lp-norm---by a term that converges to zero whenever the norm of total variation between the two probability measures, the difference between the two regularization parameters and the supremum norm respectively the \Lp-norm of the difference between the two kernels all converge to zero on all regions of the regionalization that is used.

\subsection{Different regionalizations for the localized SVMs}\label{SubSec:Rob_Loc_DiffLoc}

In the previous section, we investigated stability of localized SVMs with respect to changes in the triple $(\P,\lbbold,\kbold)$ but not in the regionalization. However, when there are changes in the distribution \P used for calculating the localized SVM (for example, because of changes in the training data set in practice), it may very well happen that this also affects the regionalization if it is not predetermined but also based on a learning method \citep[for example, decision trees, \vgl][among others]{bennett1998,wu1999,tibshirani2007,chang2010}. Hence, we will take a closer look at the effect of slight changes in the regionalization (in addition to those in $(\P,\lbbold,\kbold)$) on the resulting localized SVM in this section. 

First of all, it has to be mentioned that we will sadly not be able to derive a meaningful result regarding the supremum norm of the difference of two such localized SVMs (like \Cref{Thm:Rob_Loc_GleicheAufteilung} in the case of coinciding regionalizations), which can readily be seen from the simple example visualized in \Cref{Abb:Rob_Loc_DiffRegion}. There, two localized SVMs are being compared. Both of them are based on the same training data (that is, on the same empirical distribution) generated according to
\begin{align*}
	X \sim \unif[-1,1]\,,\qquad Y|X \sim \text{sign}(X) + \eps \text{ with } \eps \sim \normal[0,0.5]\,,
\end{align*} 
with $\unif[a,b]$ denoting the uniform distribution on $(a,b)$ and \normal[\mu,\sigma^2] the normal distribution with mean $\mu$ and variance $\sigma^2$. Furthermore, both localized SVMs use the same regularization parameter and the same kernel on every region. They only differ in the underlying regionalization: The input space is split into two parts in both cases, but for \fa the border between the two regions is at $x=0$ (thus exactly capturing the pattern in the data) whereas it is moved slightly to the right, to $x=0.05$, for \fb. 

\begin{figure}
	\begin{center}
		\vspace*{-0.5cm}\includegraphics[width=0.7\textwidth]{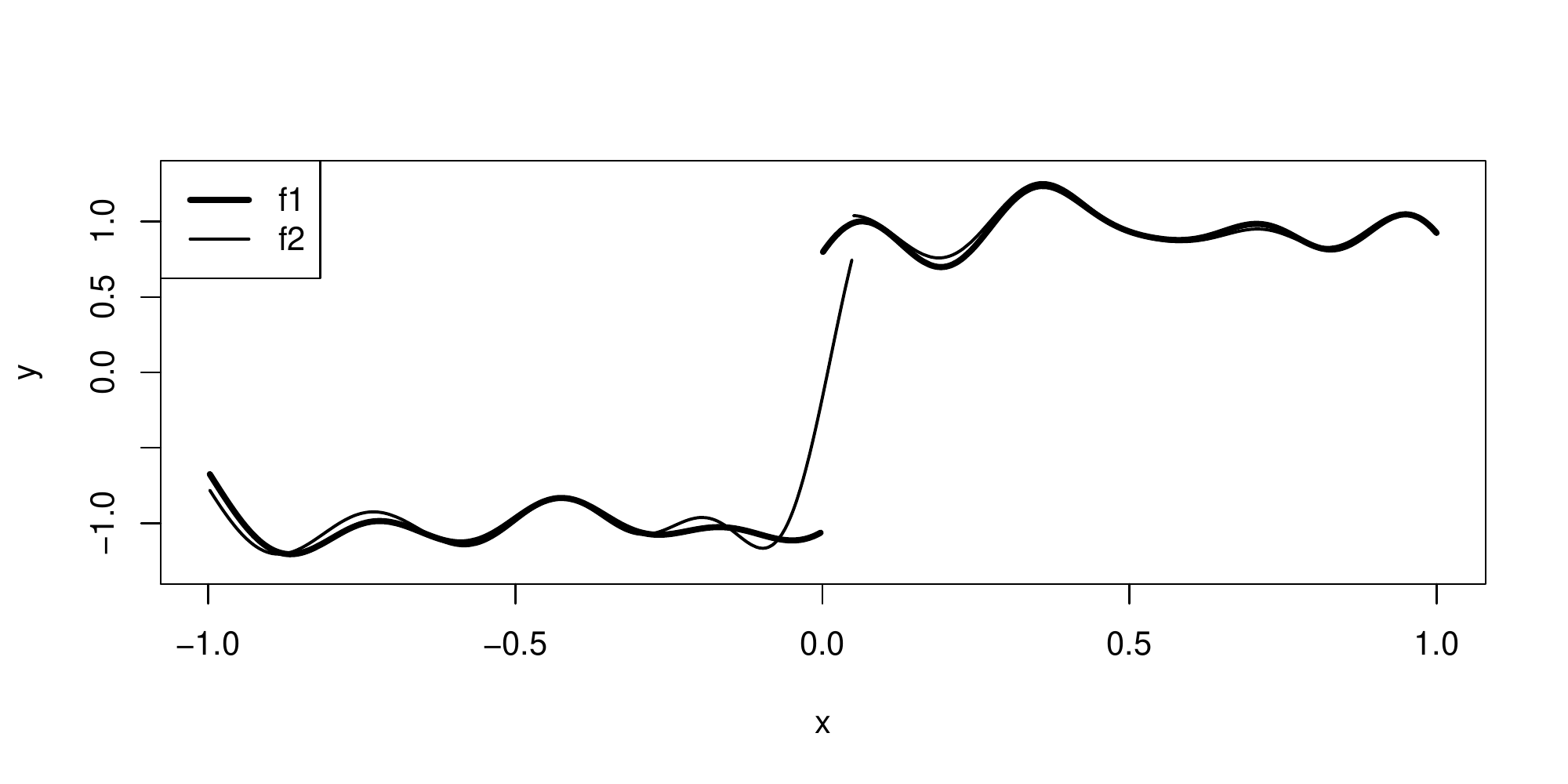}
		\vspace*{-0.5cm}\caption{Comparison of two localized SVMs based on the same distribution, regularization parameters and kernels, but on slightly different regionalizations.}
		\label{Abb:Rob_Loc_DiffRegion}
	\end{center}
\end{figure}

It can easily be seen from \Cref{Abb:Rob_Loc_DiffRegion} that this very minor change in the regionalization greatly impacts the maximum difference between \fa and \fb and it is thus obviously not possible to bound this maximum difference between two localized SVMs in any meaningful way. However, the same \Cref{Abb:Rob_Loc_DiffRegion} also suggests that it might still be possible to find such meaningful bounds on the \Lapix-norm of the difference (which is rather small in the example, approximately $0.06$, compared to the supremum norm of about $0.95$), similarly to \Cref{Thm:Rob_Loc_GleicheAufteilungLp}. This will indeed be the case, but before stating the corresponding theorem, we first need to modify some of the notation introduced in \Cref{SubSec:Rob_Loc_SameLoc} such that it fits this new situation:

First of all, we have two different regionalizations $\XiAibold[1] := \{\Xiaflex{1}{1},\dots,\Xiaflex{1}{A_1}\}$ and $\XiAibold[2] := \{\Xiaflex{2}{1},\dots,\Xiaflex{2}{A_2}\}$ now. Contrary to \Cref{SubSec:Rob_Loc_SameLoc}, these regionalizations are now required to actually be partitions of \X, \ie to satisfy, for $i=1,2$, the following modified version of \textbf{(R1)}:
\begin{description}
	\item[(R1')] $\Xiaflex{i}{1},\dots,\Xiaflex{i}{A_i}\subseteq\X$ and $\X=\bigcupdot_{a=1}^{A_i} \Xia$.
\end{description}
Furthermore, we also need to alter \textbf{(R2)} slightly since our proofs in this section will use auxiliary SVMs defined on all possible intersections of sets from \XiAibold[1] with sets from \XiAibold[2], which is why all these intersections need to have positive probability with respect to any probability measure \P which a localized SVM will be based on:
\begin{description}[labelwidth=\widthof{\bfseries (R2')},leftmargin=!]
	\item[(R2')] $\P(\Xbstar\times\Y)>0$ for all $\Xbstar \in \XBstarbold$, where
	\begin{align*}
		\XBstarbold := \left\{\Xbstar[1],\dots,\Xbstar[B]\right\} := \left\{\X^*\subseteq\X \,\left|\, \exists\, \Xiai[1]\in\XiAibold[1],\Xiai[2]\in\XiAibold[2] : \X^* = \Xiai[1] \cap \Xiai[2] \right.\right\} \setminus \{\emptyset\} \,.	
	\end{align*}
\end{description}

Based on \XBstarbold from \textbf{(R2')}, we denote $J_{i,a}:=\{b\in\{1,\dots,B\}\,|\,\Xbstar\subseteq\Xia\}\ne\emptyset$ for $i=1,2$ and $a=1,\dots,A_i$. Additionally, for $i=1,2$ and $b=1,\dots,B$, we denote by \aib that index $a\in\{1,\dots,A_i\}$ such that $\Xbstar\subseteq\Xia$, which is well-defined because of \textbf{(R2')} (existence of such an index) and \textbf{(R1')} (uniqueness of that index).

Local probability measures can be defined as before, \vgl \eqref{eq:Rob_Loc_Pib}, and we will again shorten the notation for the ones we will mainly use: $\Pia:=\P_{i,\Xia}$, $i=1,2$, $a=1,\dots,A_i$. We then obtain local SVMs
\begin{align*}
	\fiaext : \Xia\to\R\,,\qquad i\in\{1,2\},\, a\in\{1,\dots,A_i\}
\end{align*}
based on regularization parameters $\lbia>0$ and kernels $\kia$. 

For combining these local SVMs in order to obtain the global predictors, no weight functions are needed this time, since the regions from the regionalizations are not allowed to overlap, \vgl \textbf{(R1')}. Thus, the localized SVMs are now readily defined as 
\begin{align}\label{eq:Rob_Loc_DefGlobalPredictorDiffLoc}
	\fiextreg: \X\to\R\,,\, x\mapsto \sum_{a=1}^{A_i} \fiaextdach(x)
\end{align}
for $i=1,2$, where $\lbibold:=(\lb_{i,1},\dots,\lb_{i,A_i})$ and $\kibold:=(\k_{i,1},\dots,\k_{i,A_i})$, again omitting the shifted loss function \lossshift from the index to shorten the notation.

As before, we will investigate stability by bounding the norm of the difference of two estimators, in this case of two localized SVMs, based on how much the underlying probability measures and the vectors of regularization parameters and kernels differ. Additionally, the regionalizations are now added as a fourth possible difference. In order to base our new bound on the difference between the two underlying regionalizations as well, this difference first has to be quantified somehow. We will base this quantification on the intersections between regions from the different regionalizations, \ie on the sets $\Xbstar[1],\dots,\Xbstar[B]$ from \XBstarbold introduced in \textbf{(R2')}. For each such intersection \Xbstar, we look at two properties which both in some sense characterize the difference between the two regionalizations: Firstly, it has to be considered how much the two intersecting regions producing \Xbstar differ in size---relatively to these regions' own size and with respect to some probability measure \Q on \X satisfying $\Q(\Xia)>0$ for all $i\in\{1,2\}$ and $a\in\{1,\dots,A_i\}$. That is, we have to include the term
\begin{align}\label{eq:Rob_Loc_Versch_DiffRegPartI}
	\frac{\left|\Q(\Xiaib[1])-\Q(\Xiaib[2])\right|}{\max\left\{\Q(\Xiaib[1]),\Q(\Xiaib[2])\right\}}
\end{align}
for each such intersection, \ie for each $b\in\{1,\dots,B\}$. This difference in size has to be accounted for since a large difference could possibly lead to the local SVM on the smaller of the two regions being fitted much closer to its underlying data than its counterpart and the two local SVMs therefore greatly differing on the regions' intersection \Xbstar. Secondly, we also have to consider how closely each region from \XiAibold[1] as well as from \XiAibold[2] coincides with one of the intersections $\Xbstar[1],\dots,\Xbstar[B]$, \ie how well each such region can be identified with a region from the other regionalization, again with respect to some probability measure \Q on \X. In order to control this property, we will include the term
\begin{align}\label{eq:Rob_Loc_Versch_DiffRegPartII}
	\Q_{\Xiaib}(\Xbstar)\cdot \left(1-\Q_{\Xiaib}(\Xbstar)\right)
\end{align}
for each $b\in\{1,\dots,B\}$ and $i\in\{1,2\}$. If a region \Xia closely coincides with $\Xbstar[b_0]$ for some $b_0\in J_{i,a}$, then $\Q_{\Xia}(\Xbstar)$ will be either close to 1 or close to 0, and \eqref{eq:Rob_Loc_Versch_DiffRegPartII} will hence be small, for each $b\in J_{i,a}$. If this is the case for all $i\in\{1,2\}$ and $a\in\{1,\dots,A_i\}$, each region can be identified well with a region from the other regionalization and the two regionalizations are therefore similar to each other in the sense of this second criterion.

We now combine these two criteria in order to obtain a quantity to measure the difference between \XiAibold[1] and \XiAibold[2]. Since we analyzed the two criteria intersection-wise, we also define the quantification of this difference intersection-wise, \ie
\begin{align}\label{eq:Rob_Loc_Versch_DiffRegGesamt}
	\distReg{\Q}(\XiAibold[1],\XiAibold[2]) :=&\ \frac{\left|\Q(\Xiaib[1])-\Q(\Xiaib[2])\right|}{\max\left\{\Q(\Xiaib[1]),\Q(\Xiaib[2])\right\}}\notag\\ 
	&\ + \sum_{i=1}^{2} \Bigg(\frac{1}{2}\cdot\Q_{\Xiaib}(\Xbstar)\cdot \left(1-\Q_{\Xiaib}(\Xbstar)\right)\notag\\ 
	&\ \hspace*{2cm} + \sqrt{\Q_{\Xiaib}(\Xbstar)\cdot \left(1-\Q_{\Xiaib}(\Xbstar)\right)} \Bigg)\,,
\end{align}
for all $b\in\{1,\dots,B\}$ and for $\Q$ being a probability measure on \X satisfying $\Q(\Xia)>0$ for all $i\in\{1,2\}$ and $a\in\{1,\dots,A_i\}$. We additionally included the square root of the criterion from \eqref{eq:Rob_Loc_Versch_DiffRegPartII} since it will arise in the proof of the subsequent \Cref{Thm:Rob_Loc_VerschAuftLp} that this square root is also relevant to the difference we want to investigate.

Finally, we need to introduce some new notation which will arise in the succeeding theorem because of the already mentioned auxiliary SVMs on the sets $\Xbstar[1],\dots,\Xbstar[B]$ from \textbf{(R2')} that are needed for proving the theorem: We will denote the auxiliary distributions and kernels on the sets $\Xbstar[1],\dots,\Xbstar[B]$ by using the $(\cdot)^*$-notation. That is, $\Pibstar:=\P_{i,\Xbstar}$ and  $\kibstar:=\kiaib\einschraenkung_{\Xbstar\times\Xbstar}$ for $i=1,2$ and $b=1,\dots,B$. By \citet[Theorem 6]{berlinet2004}, \kibstar is actually a kernel (on \Xbstar) again.

With this, we can now state our stability result. As seen before, we will not be able to derive meaningful results regarding the supremum norm of the difference of two localized SVMs based on different regionalizations. However, it is indeed possible to obtain a meaningful bound on the \Lapix-norm of this difference:

\begin{mythm}\label{Thm:Rob_Loc_VerschAuftLp}
	Let \X be a complete and separable metric space and $\Y\subseteq\R$ be closed. Let $\P_1,\P_2\in\MXY$ be probability measures. Let $\XiAibold:=\{\Xiaflex{i}{1},\dots,\Xiaflex{i}{A_i}\}$, $i=1,2$, be regionalizations of \X such that \XiAibold[1] and \XiAibold[2] both satisfy \textbf{(R1')} and that they together satisfy, for $\P_1$ as well as for $\P_2$, \textbf{(R2')}. For all $i\in\{1,2\}$ and $a\in\{1,\dots,A_i\}$, let $\lbia>0$ and let $\kia$ be a bounded and measurable kernel on \Xia with separable RKHS $\Hia$. Denote $\kappa_b:=\max\{\normSup{\kiaib[1]},\normSup{\kiaib[2]}\}$, $\tau_b:=\min\{\lbiaib[1],\lbiaib[2]\}$ and $\rho_{1,b}:=\max\left\{\Pix[1](\Xiaib[1]),\Pix[1](\Xiaib[2])\right\}$ for all $b\in\{1,\dots,B\}$. Let \loss be a convex and Lipschitz continuous loss function. Let \fiextreg[1] and \fiextreg[2] be defined as in \eqref{eq:Rob_Loc_DefGlobalPredictorDiffLoc}. Then,
	\begin{align*}
		&\normLapix[1]{\fiextreg[1]-\fiextreg[2]}\\
		&\le |\loss|_1 \cdot \sum_{a=1}^{A_{2}} \Pix[1](\Xia[2]) \cdot \frac{\normSup{\kia[2]}^2}{\lbia[2]} \cdot \norm{tv}{\Pjai[2]{1}-\Pjai[2]{2}}\\
		&\hspace*{0.4cm} + |\loss|_1 \cdot \sum_{b=1}^B \bigg( \rho_{1,b}\cdot\frac{\kappa_b^2}{\tau_b^2} \cdot \left|\lbiaib[1] - \lbiaib[2] \right|\\
		&\hspace*{2.5cm} + \Pix[1](\Xbstar) \cdot \bigg(\frac{1}{2\tau_b} \cdot \normLapibstarxb[1]{\kibstar[1,b]-\kibstar[2,b]}\\ 
		&\hspace*{5cm}+ \frac{\kappa_b}{\tau_b} \cdot \sqrt{\normLapibstarxb[1]{\kibstar[1,b]-\kibstar[2,b]}} \bigg)\\
		&\hspace*{2.5cm} + \rho_{1,b} \cdot \frac{\kappa_b^2}{\tau_b} \cdot \distReg{\Px_1}(\XiAibold[1],\XiAibold[2]) \bigg)\,.
	\end{align*}
\end{mythm}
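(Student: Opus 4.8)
The plan is to reduce the statement to the two stability theorems already established, namely \Cref{Thm:Rob_StabilityLp} for genuine SVMs and \Cref{Thm:Rob_Loc_GleicheAufteilungLp} for localized SVMs on a fixed regionalization, by inserting one intermediate localized predictor and by working over the common refinement of the two partitions. Let $\tilde f$ denote the localized SVM built from the distribution $\P_1$ but on the second regionalization $\XiAibold[2]$, using the same per-region kernels $\kia[2]$ and regularization parameters $\lbia[2]$ as $\fiextreg[2]$. The triangle inequality gives
\begin{align*}
	\normLapix[1]{\fiextreg[1]-\fiextreg[2]} \le \normLapix[1]{\fiextreg[1]-\tilde f} + \normLapix[1]{\tilde f-\fiextreg[2]}\,.
\end{align*}
The second summand compares two localized SVMs on the \emph{same} regionalization $\XiAibold[2]$ differing only in the underlying measure ($\P_1$ versus $\P_2$). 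Since a partition is the special case of \textbf{(R1)} with indicator weight functions (which satisfy \textbf{(W1)}--\textbf{(W3)}), \Cref{Thm:Rob_Loc_GleicheAufteilungLp} applies with $p=1$ and $i=1$; as the kernels and parameters coincide, all kernel- and $\lambda$-difference contributions vanish and precisely the first (distribution) term $|\loss|_1\sum_{a=1}^{A_2}\Pix[1](\Xia[2])\frac{\normSup{\kia[2]}^2}{\lbia[2]}\norm{tv}{\Pjai[2]{1}-\Pjai[2]{2}}$ of the asserted bound survives.

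It remains to estimate $\normLapix[1]{\fiextreg[1]-\tilde f}$, where the measure is now fixed at $\P_1$ and only the regionalization (together with its attached kernels and parameters) changes; this must produce the three remaining families of terms. Because $\XBstarbold$ is the common refinement of the two partitions, it itself partitions \X by \textbf{(R1')} and \textbf{(R2')}, and on each cell $\Xbstar$ both global predictors collapse to a single local SVM (the regionalization-1 SVM $f_{\P_{1,\Xiaib[1]},\lbiaib[1],\kiaib[1]}$ for $\fiextreg[1]$, and the regionalization-2 SVM $f_{\P_{1,\Xiaib[2]},\lbiaib[2],\kiaib[2]}$ for $\tilde f$, both based on $\P_1$). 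Hence
\begin{align*}
	\normLapix[1]{\fiextreg[1]-\tilde f} = \sum_{b=1}^B \Pix[1](\Xbstar)\cdot \left\| f_{\P_{1,\Xiaib[1]},\lbiaib[1],\kiaib[1]} - f_{\P_{1,\Xiaib[2]},\lbiaib[2],\kiaib[2]} \right\|_{L_1(\Pibstarx[1])}\,.
\end{align*}
On each cell I would interpolate through the two auxiliary SVMs $\fabstarhatproofLp$ and $\fbbstarhatproofLp$ living on $\Xbstar$ itself, which carry the local measure $\Pibstar[1]=\P_{1,\Xbstar}$, the restricted kernels $\kibstar[i,b]=\kiaib\einschraenkung_{\Xbstar\times\Xbstar}$ (again kernels by \citet[Theorem 6]{berlinet2004}), and regularization parameters $\lbijbstar[i,1,b]$ rescaled by the mass ratio $\P_{1,\Xiaib}(\Xbstar\times\Y)$. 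The \emph{middle} comparison of these two auxiliaries keeps the measure fixed and varies only kernel and parameter, so \Cref{Thm:Rob_StabilityLp} on $\Xbstar$ yields the kernel term $\frac{1}{2\tau_b}\normLapibstarxb[1]{\kibstar[1,b]-\kibstar[2,b]}+\frac{\kappa_b}{\tau_b}\sqrt{\normLapibstarxb[1]{\kibstar[1,b]-\kibstar[2,b]}}$ plus a contribution in $|\lbijbstar[1,1,b]-\lbijbstar[2,1,b]|$; splitting the latter by the triangle inequality into a genuine parameter difference and a region-size difference recovers both the $\lambda$-term $\rho_{1,b}\frac{\kappa_b^2}{\tau_b^2}|\lbiaib[1]-\lbiaib[2]|$ and the first summand \eqref{eq:Rob_Loc_Versch_DiffRegPartI} of $\distReg{\Px_1}(\XiAibold[1],\XiAibold[2])$.

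The main obstacle is the two \emph{side} comparisons $\| f_{\P_{1,\Xiaib},\lbiaib,\kiaib} - \hat f_{\Pibstar[1],\lbijbstar[i,1,b],\kibstar[i,b]}\|$ for $i\in\{1,2\}$, which relate the local SVM living on the full region $\Xiaib$ to an SVM living on the sub-cell $\Xbstar$ with the restricted kernel. These objects sit on different domains and in different RKHSs, so they are not directly comparable through \Cref{Thm:Rob_StabilityLp}. I would first use the RKHS restriction identity (\citeauthor{berlinet2004}, Theorem 6; Aronszajn) to realize the sub-cell RKHS as the restriction of $\Hia$ and introduce the minimal-norm extension $\fabstartildeproofLp$ as a bridge between the domains, and then quantify the passage from region to cell via the mass decomposition
\begin{align*}
	\P_{1,\Xiaib} = \P_{1,\Xiaib}(\Xbstar\times\Y)\cdot\Pibstar[1] + \left(1-\P_{1,\Xiaib}(\Xbstar\times\Y)\right)\cdot\P_{1,\Xiaib\setminus\Xbstar}\,,
\end{align*}
the rescaling of $\lbijbstar[i,1,b]$ being chosen exactly so that the region- and cell-SVMs agree up to this perturbation. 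The effect should be an effective measure/kernel perturbation of size $\Q_{\Xiaib}(\Xbstar)\cdot(1-\Q_{\Xiaib}(\Xbstar))$ (with $\Q=\Pix[1]$), whose linear and square-root contributions reproduce the second summand \eqref{eq:Rob_Loc_Versch_DiffRegPartII} of $\distReg{\Px_1}$, the $\frac12 x + \sqrt{x}$ shape mirroring the kernel-difference structure of \Cref{Thm:Rob_StabilityLp}. Summing the per-cell estimates, collecting the prefactors $\Pix[1](\Xbstar)$ and $\rho_{1,b}$, and adding the distribution term from the first step then yields the stated bound; I expect the most delicate bookkeeping to be verifying that the rescaled parameters leave the $\min/\max$ definitions of $\tau_b$ and $\kappa_b$ intact and that each constant attaches to the correct prefactor $\Pix[1](\Xbstar)$ or $\rho_{1,b}$.
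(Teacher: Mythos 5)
Your overall architecture matches the paper's: pass to the common refinement $\XBstarbold$, introduce auxiliary SVMs on the cells $\Xbstar$ with the rescaled parameters $\lbijbstar=(\Pjaibx{j}(\Xbstar))^{-1}\lbiaib$, and extract the three remaining families of terms from (a) a region-to-cell comparison on side~1, (b) a kernel-and-$\lb$ comparison between the two cell auxiliaries, and (c) a region-to-cell comparison on side~2. The only structural difference is that you peel the measure change off first via the intermediate $\tilde f$ (a four-term decomposition), whereas the paper folds $\P_1\to\P_2$ into the third leg of a three-term decomposition; both yield the same total-variation term, and your handling of that step, as well as of the middle comparison and of the splitting of $|\lbijbstar[1,1,b]-\lbijbstar[2,1,b]|$ into the genuine $\lb$-difference and the region-size difference, is correct.

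The gap sits exactly where you flag ``the main obstacle'': the side comparisons between the region-SVM $f_{\P_{1,\Xiaib},\lbiaib,\kiaib}$ and the cell auxiliaries. You assert that a mass decomposition of $\P_{1,\Xiaib}$ ``should'' produce an effective perturbation of size $\Q_{\Xiaib}(\Xbstar)\,(1-\Q_{\Xiaib}(\Xbstar))$, but this is precisely the step that needs a proof, and the measure-perturbation route you sketch is not the one that closes it: the discrepancy between the region-SVM and the collection of cell-SVMs is not a perturbation of the measure at all, but of the kernel. The identity that does the work is
\begin{align*}
	f_{\Pia,\lbia,\kia^{\circ}} \;=\; \sum_{b\in J_{i,a}} \tilde f_{\Pibstar,\lbijbstar,\kibstar}\,,
	\qquad \kia^{\circ}:=\sum_{b\in J_{i,a}}\tilde\k_{i,b}^{*}\,,
\end{align*}
i.e.\ the sum of the zero-extended cell-SVMs with the rescaled parameters is itself an SVM on the full region \Xia for the block-diagonal kernel $\kia^{\circ}$; this is where the choice of \lbijbstar is actually consumed, together with Theorem~5 of \citet{berlinet2004}, Lemma~2 of \citet{meister2016} and the pairwise disjointness of the cells, which lets both the RKHS norm and the risk split additively over $b\in J_{i,a}$. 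Once this identity is in place, each side comparison becomes a same-regionalization, same-measure, same-$\lb$ comparison in which only the kernel changes from \kia to $\kia^{\circ}$, so \Cref{Thm:Rob_Loc_GleicheAufteilungLp} applies directly, and the off-block-diagonal bound $\normLapiaxb{\kia-\kia^{\circ}}\le\normSup{\kia}^2\sum_{b\in J_{i,a}}\Piax(\Xbstar)(1-\Piax(\Xbstar))$ delivers the $\tfrac12 x+\sqrt{x}$ terms of $\distReg{\Px_1}(\XiAibold[1],\XiAibold[2])$. Without it, your cell-by-cell interpolation leaves the side comparisons unproven: the restriction of the region-SVM to a single cell is not an SVM on that cell, and none of the established lemmas compares minimizers living in different RKHSs over different domains.
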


Note that the denominator occurring in $\distReg{\Px_1}(\XiAibold[1],\XiAibold[2])$ is greater than zero for all $b\in\{1,\dots,B\}$ in the situation of this theorem because of \XiAibold[1] and \XiAibold[2] being assumed to satisfy \textbf{(R2')} for $\P_1$, and the bound from the theorem is therefore well-defined.

By interchanging the roles of \fiextreg[1] and \fiextreg[2], it is furthermore obvious that \Cref{Thm:Rob_Loc_VerschAuftLp} also holds true with respect to the \Lapix[2]-norm if the indices on the right hand side are adjusted accordingly. For the sake of notational clarity, we did not explicitly include this in the theorem.

Even though allowing for differing regionalizations makes this result on total stability look more complicated than those from \Cref{SubSec:Rob_Loc_SameLoc} on first glance, the statement basically stays the same---with the main difference being that we can only bound the \La-norm in a meaningful way, but not other \Lp-norms or the supremum norm (for other \Lp-norms, it would be possible to derive a similar result, but in this case, the factor in front of the difference between the regularization parameters could increase with increasing similarity of the two regionalizations and it would therefore not necessarily be possible to interpret the result as yielding total stability, \ie particularly stability with respect to simultaneous slight changes in regularization parameters and regionalization). 
Here, we additionally need to consider the difference between the two regionalizations, but otherwise have the same statement as before: The \La-norm of the difference between the two localized SVMs converges to zero if, on all regions respectively intersections of regions, the norm of total variation of the difference between the two probability measures, the difference between the two regularization parameters, the \La-norm of the difference between the two kernels, and now additionally the difference between the two regionalizations, as measured by $\distReg{\Pix}$, all converge to zero as well.

\section{Discussion}\label{Sec:Rob_Discussion}

This paper is composed of two main parts. In the first one, stability of SVMs with respect to slight changes in the full triple $(\P,\lb,\k)$, consisting of a probability measure \P, a regularization parameter \lb and a kernel \k, was investigated. This part is related to \citet{christmann2018}, where the difference \normSup{\froba-\frobb} between two such SVMs based on slightly differing triples $(\P_i,\lb_i,\k_i)$, $i=1,2$, had already been bounded in a very similar way. We succeeded in considerably generalizing the referenced result by \citet{christmann2018}, such that we now know that the investigated notion of stability holds true for any SVM that uses a Lipschitz continuous loss function. We also derived an analogous stability result regarding \normLppix{\froba-\frobb}, $i=1,2$, before turning our attention to the second part of this paper.

Here, we investigated localized SVMs which, amongst other advantages, thrive on their reduced computational requirements compared to calculating a global SVM. They share this advantage with other methods mentioned in the introduction, like for example distributed learning. Distributed learning is similar to localized approaches in that both divide the training data into several subsets and then produce a global predictor by combining the predictors obtained on the subsets. However, whereas the subsets constitute subregions of the input space in localized approaches, they are usually generated by drawing simple random samples (without replacement) from the original data set and thus typically cover almost the entire input space in distributed learning (that is, each of the predictors on the subsets is defined on all of \X and they are then typically combined by means of some weighted average). This leads to distributed learning reducing the computation time even further on the one hand (since the effort of regionalizing can be omitted) but not sharing the additional advantages of localized approaches (the ability to treat different structures in different regions of the input space in different ways and the ability to better model discontinuities in the true function) on the other hand, which is one reason why we think that localized learning can be interesting.

We managed to transfer our stability results to localized SVMs, adding to the list of properties such localized SVMs inherit from the local SVMs they are based on. This further substantiates the theoretical justification for localized SVMs to be used in order to accurately predict a function whose complexity varies across the input space or whenever a large data set drastically increases the computation time of a global SVM. It has even been possible to show stability with respect to the $\Lapix$-norm, $i=1,2$, if not only the triple $(\P,\lbbold,\kbold)$ but also the regionalization slightly changes. Since variations in the underlying probability measure (respectively data set)---which are also of interest in considerations regarding classical statistical robustness, where only stability with respect to the probability measure is regarded---may very well lead to changes in the regionalization, it is especially reassuring to see that this does not ruin the localized SVMs' stability (as long as a statistically robust method is used for constructing the regionalization, such that small changes in \P only lead to small changes in the regionalization).

Based on this influence of the probability measure on the regionalization, it might be interesting to take another look at the already existing results about localized SVMs' consistency, learning rates and classical statistical robustness and examine whether they still hold true if this influence is factored in, respectively what assumptions about the dependence between probability measure and regionalization are necessary in order for the results to still hold true.

\appendix

\section{Auxiliary results regarding the stability of SVMs}\label{Sec:Rob_Aux_Stability}

In order to prove \Cref{Thm:Rob_Stability}, we will first apply the triangle inequality in order to decompose the difference which we have to bound:
\begin{align}\label{eq:Rob_Aux_Decomposition}
	\normSup{\froba-\frobb} \le&\ \normSup{\froba-f_{\P_2,\lb_1,\k_1}} + \normSup{f_{\P_2,\lb_1,\k_1}-f_{\P_2,\lb_2,\k_1}}\notag\\ 
	&\ + \normSup{f_{\P_2,\lb_2,\k_1}-\frobb}\,.
\end{align}
Of course, the order of this decomposition can also be varied. We will take this into account when actually proving \Cref{Thm:Rob_Stability} in \Cref{Sec:Rob_Proof_Stab}, but for now we will just investigate the three summands on the right hand side of \eqref{eq:Rob_Aux_Decomposition} separately. The \Lppix-norm of $\froba-\frobb$ can obviously be decomposed in the same way as 
\begin{align}\label{eq:Rob_Aux_DecompositionLp}
	\normLppix{\froba-\frobb} \le&\ \normLppix{\froba-f_{\P_2,\lb_1,\k_1}}\notag + \normLppix{f_{\P_2,\lb_1,\k_1}-f_{\P_2,\lb_2,\k_1}}\notag\\
	&\ + \normLppix{f_{\P_2,\lb_2,\k_1}-\frobb}
\end{align}
and thus, \Cref{Thm:Rob_StabilityLp} can also be proven by examining the three summands separately.

Before doing this, first for the supremum norm and then for the \Lppix-norm, we have to state two auxiliary results needed for conducting the proofs. Firstly, we recall a representer theorem for SVMs \citep[Theorem 7]{christmann2009} and prior to that the definition of a \it subdifferential \rm \citep[\vgl][]{phelps1993,christmann2009}, which is referenced in the representer theorem:

\begin{mydef}
	Let $E$ be a Banach space and let $f:E\to\R\cup\{\infty\}$ be a convex function, and $w\in E$ with $f(w)<\infty$. Then, the \it subdifferential \rm of $f$ at $w$ is defined by
	\begin{align*}
		\partial f(w) := \{w'\in E' : \langle w',v-w\rangle \le f(v)-f(w) \text{ for all } v\in E \}\,.
	\end{align*}
\end{mydef}

\begin{mybem}
	For a convex loss function $\loss:\XYR\to[0,\infty)$ we denote by $\partial \loss(x,y,t_0)$ the subdifferential with respect to the third argument, \ie the subdifferential of the convex function defined by $t\mapsto\loss(x,y,t)$ at the point $t_0\in\R$. We say that a function $g:\XY\to\R$ is from the subdifferential of \loss with respect to a function $f:\X\to\R$ if $g(x,y)\in\partial\loss(x,y,f(x))$ for all $(x,y)\in\XY$. We use an analogous notation for shifted loss functions \lossshift.
\end{mybem}

\begin{mythm}\label{Thm:Rob_Aux_Representer}
	\rm\citep{christmann2009}. \it Let \X be a complete and separable metric space and $\Y\subseteq\R$ be closed. Let $\P\in\MXY$ be a probability measure. Let \loss be a convex and Lipschitz continuous loss function, \k be a bounded and measurable kernel on \X with separable RKHS \H. Then, for all $\lb>0$, there exists an $h\in \L{\infty}(\P)$ such that
	\begin{align*}
		h(x,y) &\in \partial \lossshift(x,y,\fshifttheok(x)) \qquad \forall\, (x,y)\in\XY\\
		\fshifttheok &= -\frac{1}{2\lb}\ew[\P]{h\Phi}\\
		\normSup{h} &\le |\loss|_1\\
		\normH{\fshifttheok - f_{\lossshift,\bar{\P},\lb,\k}} &\le \frac{1}{\lb} \normH{\ew[\P]{h\Phi} - \ew[\bar{\P}]{h\Phi}}
	\end{align*}
	for all distributions $\bar{\P}$ on \XY.
\end{mythm}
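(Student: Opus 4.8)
The plan is to realize $\fshifttheok$ as the unique minimizer of the strictly convex objective $G(f):=\riskshift(f)+\lb\normH{f}^2$ on \H and to extract all four assertions from the first-order optimality condition $0\in\partial G(\fshifttheok)$. First I would check that the shifted loss makes $G$ finite and real-valued on all of \H without any moment assumption on \P: since $|\lossshift(x,y,t)|=|\loss(x,y,t)-\loss(x,y,0)|\le|\loss|_1\cdot|t|$ and $|f(x)|\le\normSup{\k}\cdot\normH{f}$ by \eqref{eq:Rob_InfNormHNorm}, the integrand $\lossshift(x,y,f(x))$ is uniformly bounded, so $\riskshift(f)<\infty$ for every $f\in\H$. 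Convexity of $G$ is inherited from convexity of \lossshift in its last argument, and strict convexity together with coercivity of $\lb\normH{\cdot}^2$ on the Hilbert space \H yields existence and uniqueness of the minimizer.

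Next I would compute the subdifferential of $G$. The derivative of $f\mapsto\lb\normH{f}^2$ is the element $2\lb f\in\H$ (identifying $\H'$ with \H via the reproducing property), whereas the subdifferential of the integral functional $f\mapsto\riskshift(f)$ is obtained by interchanging subdifferentiation and integration: using $f(x)=\langle f,\Phi(x)\rangle_\H$ and the one-dimensional chain rule, its elements are exactly the Bochner integrals $\ew[\P]{h\Phi}$, where $h:\XY\to\R$ is a measurable selection with $h(x,y)\in\partial\lossshift(x,y,f(x))$. Imposing $0\in\partial G(\fshifttheok)=2\lb\fshifttheok+\partial\riskshift(\fshifttheok)$ then yields an $h$ with $h(x,y)\in\partial\lossshift(x,y,\fshifttheok(x))$ and $\fshifttheok=-\tfrac{1}{2\lb}\ew[\P]{h\Phi}$, which are assertions 1 and 2. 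Assertion 3 is then immediate, since Lipschitz continuity of \loss (equivalently of \lossshift, with $|\lossshift|_1=|\loss|_1$) forces every element of $\partial\lossshift(x,y,t)$ into $[-|\loss|_1,|\loss|_1]$; hence $\normSup{h}\le|\loss|_1$ and in particular $h\in\L{\infty}(\P)$.

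For the stability bound in assertion 4 I would apply the same representation to $\bar{\P}$, producing $\bar{h}$ with $\bar{h}(x,y)\in\partial\lossshift(x,y,f_{\lossshift,\bar{\P},\lb,\k}(x))$ and $f_{\lossshift,\bar{\P},\lb,\k}=-\tfrac{1}{2\lb}\ew[\bar{\P}]{\bar{h}\Phi}$. Writing $\Delta:=\fshifttheok-f_{\lossshift,\bar{\P},\lb,\k}$ and inserting $\pm\ew[\bar{\P}]{h\Phi}$, I would obtain
\begin{align*}
	2\lb\Delta=-\big(\ew[\P]{h\Phi}-\ew[\bar{\P}]{h\Phi}\big)-\ew[\bar{\P}]{(h-\bar{h})\Phi}\,.
\end{align*}
Taking the inner product with $\Delta$ and noting that $\langle\ew[\bar{\P}]{(h-\bar{h})\Phi},\Delta\rangle_\H=\ew[\bar{\P}]{(h-\bar{h})(\fshifttheok-f_{\lossshift,\bar{\P},\lb,\k})}\ge 0$, which holds because $(x,y)\mapsto(h-\bar{h})(\fshifttheok(x)-f_{\lossshift,\bar{\P},\lb,\k}(x))\ge 0$ pointwise by monotonicity of the subdifferential of the convex map $t\mapsto\lossshift(x,y,t)$, the second term can be dropped with the correct sign. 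A Cauchy--Schwarz estimate on the remaining term then gives $2\lb\normH{\Delta}\le\normH{\ew[\P]{h\Phi}-\ew[\bar{\P}]{h\Phi}}$, which is even stronger than the claimed bound $\tfrac{1}{\lb}\normH{\ew[\P]{h\Phi}-\ew[\bar{\P}]{h\Phi}}$ and therefore suffices.

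The main obstacle is the subdifferential--integral interchange underlying assertion 2: one must justify that $\partial\riskshift(f)$ is genuinely the set of Bochner integrals of measurable selections of the pointwise subdifferentials, which needs a measurable-selection argument for $(x,y)\mapsto\partial\lossshift(x,y,f(x))$ and Bochner integrability of $h\Phi$. Separability of \H enters precisely here, making $\Phi:\X\to\H$ strongly measurable; combined with the uniform bound $\normH{h(x,y)\Phi(x)}=|h(x,y)|\cdot\sqrt{\k(x,x)}\le|\loss|_1\cdot\normSup{\k}$ it guarantees that $\ew[\P]{h\Phi}$ exists in \H. Since $\partial\lossshift(x,y,\cdot)$ is nonempty at every point, the selection $h$ can moreover be fixed for all $(x,y)\in\XY$, not merely $\P$-a.e., without altering the integral, so that assertion 1 holds in the stated pointwise form. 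The remaining ingredients (convexity, the Lipschitz bound, and monotonicity) are routine once this representation is established.
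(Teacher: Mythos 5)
The paper offers no proof of this theorem: it is recalled verbatim from \citet[Theorem~7]{christmann2009}, so the only meaningful comparison is with the argument in that reference (and with the corresponding result for non-shifted losses in \citet{steinwart2008}), which your sketch reproduces in its essential structure. The decomposition into (a) existence and uniqueness of the minimizer of the strictly convex, coercive objective, (b) the optimality condition $0\in 2\lb\fshifttheok+\partial\riskshift(\fshifttheok)$ combined with a measurable selection of the pointwise subgradients, (c) the Lipschitz bound on that selection, and (d) the monotonicity-plus-Cauchy--Schwarz comparison with $\bar{\P}$ is exactly the standard route. Your step (d) is correct and in fact delivers the sharper constant $\frac{1}{2\lb}$; the $\frac{1}{\lb}$ in the statement is merely the weaker form in which the cited reference records the bound, so nothing is lost there. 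The one point that is asserted rather than proved is the one you flag yourself: that the particular subgradient $-2\lb\fshifttheok\in\partial\riskshift(\fshifttheok)$ can be written as the Bochner integral $\ew[\P]{h\Phi}$ of a measurable selection $h$ with $h(x,y)\in\partial\lossshift(x,y,\fshifttheok(x))$. The easy inclusion (integrals of selections are subgradients) does not suffice for assertions 1 and 2; they need the converse, and that subdifferential--integration interchange, with its measurable-selection argument, is precisely where the real work in \citet{christmann2009} lies. Since you identify that obstacle concretely, and everything surrounding it --- finiteness of the shifted risk without moment conditions, the bound $\normSup{h}\le|\loss|_1$, the pointwise (not merely a.e.) choice of $h$, and the monotonicity estimate --- is sound, this is a faithful reconstruction of the standard proof rather than a gapped one.
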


Since the feature map $\Phi$ is \H-valued, we need to consider \H-valued \it Bochner integrals \rm when examining the expectations from \Cref{Thm:Rob_Aux_Representer} and similar integrals. For a detailed introduction to Bochner integrals, see \citet{diestel1977,diestel1984,denkowski2003}. We will additionally need the two succeeding inequalities \eqref{eq:Rob_Aux_BochnerInf} and \eqref{eq:Rob_Aux_BochnerLp} in order to bound the norm of a Bochner integral. Even though we suppose that these two inequalities are already established, we did not find them in the literature, which is why we prove them here:

\begin{mylem}\label{Lem:Rob_Aux_NormIntegralTauschen}
	Let \Q be a probability measure on some measurable space $(\Omega,\A)$ and let \k be a bounded kernel on $\Omega$ with RKHS \H. Let $g:\Omega\to\H$ be a \Q-Bochner integrable function. Then,
	\begin{align}\label{eq:Rob_Aux_BochnerInf}
		\normSup{\int_{\Omega} g(x)\,d\Q(x)} \le \int_{\Omega} \normSup{g(x)}\,d\Q(x)
	\end{align}
	and, for all $p\in[1,\infty)$,
	\begin{align}\label{eq:Rob_Aux_BochnerLp}
		\norm{\Lp(\Q)}{\int_{\Omega} g(x)\,d\Q(x)} \le \int_{\Omega} \norm{\Lp(\Q)}{g(x)}\,d\Q(x)\,.
	\end{align}
\end{mylem}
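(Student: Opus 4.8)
The plan is to reduce both inequalities to pointwise statements about the integrand via the reproducing property, and then to apply, respectively, a trivial supremum estimate and Minkowski's integral inequality. Writing $F:=\int_\Omega g(x)\,d\Q(x)\in\H$ for the Bochner integral, the crucial first step is the pointwise representation
\begin{align*}
F(\omega) &= \langle F,\Phi(\omega)\rangle_\H = \left\langle \int_\Omega g(x)\,d\Q(x),\,\Phi(\omega)\right\rangle_\H\\
&= \int_\Omega \langle g(x),\Phi(\omega)\rangle_\H\,d\Q(x) = \int_\Omega g(x)(\omega)\,d\Q(x)
\end{align*}
for every $\omega\in\Omega$, where the first and last equalities are the reproducing property \eqref{eq:Rob_ReprodProperty}. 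The middle equality moves the evaluation functional $f\mapsto\langle f,\Phi(\omega)\rangle_\H$ inside the Bochner integral; this is legitimate because that functional is bounded on $\H$---indeed $|f(\omega)|\le\sqrt{\k(\omega,\omega)}\,\normH{f}\le\normSup{\k}\,\normH{f}$ by Cauchy--Schwarz and boundedness of \k, cf.\ \eqref{eq:Rob_InfNormHNorm}---and bounded linear functionals commute with Bochner integrals \citep[\vgl][]{diestel1977,diestel1984,denkowski2003}.

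With this representation, \eqref{eq:Rob_Aux_BochnerInf} follows immediately: for each $\omega$,
\begin{align*}
|F(\omega)| = \left|\int_\Omega g(x)(\omega)\,d\Q(x)\right| \le \int_\Omega |g(x)(\omega)|\,d\Q(x) \le \int_\Omega \normSup{g(x)}\,d\Q(x),
\end{align*}
and taking the supremum over $\omega\in\Omega$ on the left-hand side yields the claim.

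For \eqref{eq:Rob_Aux_BochnerLp} I would insert the same representation into the definition of the $\Lp(\Q)$-norm and apply Minkowski's integral inequality:
\begin{align*}
\norm{\Lp(\Q)}{F} &= \left(\int_\Omega \left|\int_\Omega g(x)(\omega)\,d\Q(x)\right|^p d\Q(\omega)\right)^{1/p}\\
&\le \int_\Omega \left(\int_\Omega |g(x)(\omega)|^p\,d\Q(\omega)\right)^{1/p} d\Q(x),
\end{align*}
where the inner factor on the right is precisely $\norm{\Lp(\Q)}{g(x)}$, giving the stated bound. Finiteness of the right-hand side (so that the bound is meaningful) follows since $\norm{\Lp(\Q)}{g(x)}\le\normSup{g(x)}\le\normSup{\k}\,\normH{g(x)}$ and $x\mapsto\normH{g(x)}$ is \Q-integrable by Bochner integrability of $g$.

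The main obstacle is not the inequalities themselves but justifying the two interchanges of integration. For the pointwise representation I must invoke the commutation of a continuous linear functional with a Bochner integral; for Minkowski's inequality I must ensure that the map $(x,\omega)\mapsto g(x)(\omega)$ is jointly measurable on $\Omega\times\Omega$. The latter follows from Bochner integrability: $g$ is the \Q-a.e.\ limit of simple \H-valued functions $g_n$, each $g_n(x)(\omega)$ is jointly measurable (using that \k, and hence every $f\in\H$ as a pointwise limit of finite combinations $\k(\cdot,x_i)$, is measurable), and joint measurability is preserved under the a.e.\ limit $g_n(x)(\omega)\to g(x)(\omega)$, which holds because evaluation at $\omega$ is \H-continuous. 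Both points are routine but should be stated explicitly.
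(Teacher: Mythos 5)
Your proof is correct, but it takes a genuinely different route from the paper's. You reduce both inequalities to scalar statements by first establishing the pointwise representation $\left(\int g\,d\Q\right)(\omega)=\int g(x)(\omega)\,d\Q(x)$ via the reproducing property and the commutation of the (bounded) evaluation functionals with the Bochner integral, and then apply the triangle inequality for scalar integrals (supremum norm) and Minkowski's integral inequality ($\Lp(\Q)$-norm). The paper instead argues directly from the definition of the Bochner integral: it approximates $g$ by simple functions $s_n$, bounds $\normArb{\int s_n\,d\Q}$ by $\int\normArb{s_n}\,d\Q$ using the triangle inequality and the pairwise disjointness of the level sets, and passes to the limit using the continuity of both norms on \H, which follows from $\normArb{h}\le\normSup{\k}\normH{h}$. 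Your approach buys a clean reduction to classical scalar inequalities; its price is the joint measurability of $(x,\omega)\mapsto g(x)(\omega)$ required by Minkowski's integral inequality (which you correctly flag and sketch, and which is unproblematic in the paper's applications since the kernels there are measurable with separable RKHSs), plus the measurability of $x\mapsto\normSup{g(x)}$ on the right-hand side of \eqref{eq:Rob_Aux_BochnerInf} (immediate from strong measurability of $g$ and continuity of $\normSup{\cdot}$ on \H, but worth stating). The paper's argument treats both norms uniformly through the single estimate $\normArb{h}\le\normSup{\k}\normH{h}$ and never needs joint measurability, at the cost of redoing the simple-function limit interchange by hand.
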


\begin{proof}
	By \citet[Definition 3.10.7]{denkowski2003}, $g$ being \Q-Bochner integrable means that there exists a sequence $(s_n)_{n\in\N}$ of so-called simple functions $s_n:\Omega\to\H,\, \omega\mapsto \sum_{j=1}^{m_n} b_j^{(n)}\Ind[A_j^{(n)}](\omega)$, with $b_j^{(n)}\in\H$, $A_j^{(n)}\in\A$ and $\Ind[A_j^{(n)}]$ denoting the indicator function on $A_j^{(n)}$ for all $n\in\N$ and $j\in\{1,\dots,m_n\}$, such that 
	\begin{align}\label{eq:Rob_Aux_NormalIntegralTauschen_a}
		\limn \int_{\Omega} \normH{g(\omega)-s_n(\omega)}\,d\Q(\omega) = 0\,.
	\end{align}
	Then, the same definition tells us that
	\begin{align*}
		\int_{\Omega} g(\omega)\, d\Q(\omega) := \limn \int_{\Omega} s_n(\omega) \, d\Q(\omega)\,,
	\end{align*}
	where 
	\begin{align*}
		\int_{\Omega} s_n(\omega) \, d\Q(\omega) := \sum_{j=1}^{m_n} b_j^{(n)}\Q\left(A_j^{(n)}\right)
	\end{align*}
	for all $n\in\N$. Additionally, we know from \citet[Chapter IV]{diestel1984} that we can without loss of generality assume $A_1^{(n)},\dots,A_{m_n}^{(n)}$ to be pairwise disjoint for all $n\in\N$. 

	Let now $\normArb{\cdot}$ denote either $\normSup{\cdot}$ or $\norm{\Lp(\Q)}{\cdot}$. Then,
	\begin{align}\label{eq:Rob_Aux_NormalIntegralTauschen_b}
		&\normArb{\int g(\omega)\,d\Q(\omega)} = \normArb{\limn\left(\sum_{j=1}^{m_n}b_j^{(n)}\Q\left(A_j^{(n)}\right)\right)} = \limn \normArb{\sum_{j=1}^{m_n}b_j^{(n)}\Q\left(A_j^{(n)}\right)}\notag\\
		&\le \limn \left(\sum_{j=1}^{m_n} \normArb{b_j^{(n)}} \Q\left(A_j^{(n)}\right)\right) = \limn \left(\sum_{j=1}^{m_n} \int \normArb{b_j^{(n)}} \Ind[A_j^{(n)}](\omega)\,d\Q(\omega)\right)\notag\\
		&= \limn \left(\int \sum_{j=1}^{m_n} \normArb{b_j^{(n)}} \Ind[A_j^{(n)}](\omega)\,d\Q(\omega)\right) = \limn \left(\int \normArb{\sum_{j=1}^{m_n} b_j^{(n)} \Ind[A_j^{(n)}](\omega)}\,d\Q(\omega)\right)\notag\\
		&= \limn \left(\int \normArb{s_n(\omega)}\,d\Q(\omega)\right) = \int \normArb{g(\omega)}\,d\Q(\omega)\,,
	\end{align}
	where we applied the continuity of $\normArb{\cdot}$ as a function on \H in the second step and the pairwise disjointness of $A_1^{(n)},\dots,A_{m_n}^{(n)}$ in the second to last row, with the continuity of $\normArb{\cdot}$ holding true because of $\norm{\Lp(\Q)}{h}\le\normSup{h}\le\normSup{k}\normH{h}$ and thus
	\begin{align}\label{eq:Rob_Aux_NormalIntegralTauschen_c}
		\normArb{h}\le\normSup{k}\normH{h}
	\end{align}
	for all $h\in\H$, \vgl \eqref{eq:Rob_InfNormHNorm}. Additionally, the equality in the last step of \eqref{eq:Rob_Aux_NormalIntegralTauschen_b} holds true because of
	\begin{align}\label{eq:Rob_Aux_NormalIntegralTauschen_d}
		&\left|\int \normArb{g(\omega)}\,d\Q(\omega) - \limn \left(\int \normArb{s_n(\omega)}\,d\Q(\omega)\right)\right| \le \limn \left(\int \big|\normArb{g(\omega)} - \normArb{s_n(\omega)}\big|\,d\Q(\omega)\right)\notag\\
		&\le \limn \left(\int \normArb{g(\omega)-s_n(\omega)}\,d\Q(\omega)\right) \le \normSup{k} \cdot \limn \left(\int \normH{g(\omega)-s_n(\omega)}\,d\Q(\omega)\right) = 0\,.
	\end{align}
	Here, we employed the finiteness of the two summands on the left hand side in the first step, and the reverse triangle inequality, \eqref{eq:Rob_Aux_NormalIntegralTauschen_c} and \eqref{eq:Rob_Aux_NormalIntegralTauschen_a} in the remaining steps. In the first step, the finiteness of the first summand follows directly from \eqref{eq:Rob_Aux_NormalIntegralTauschen_c} and Theorem 3.10.9 from \citet{denkowski2003}, and the finiteness of the second one can be shown by again using \eqref{eq:Rob_Aux_NormalIntegralTauschen_c} and then slightly adapting the mentioned theorem's proof:
	\begin{align*}
		&\limn \left(\int \normH{s_n(\omega)}\,d\Q(\omega)\right) \le \limn \left(\int \normH{s_n(\omega)-g(\omega)}\,d\Q(\omega) + \int\normH{g(\omega)}\,d\Q(\omega) \right)\\
		&= \limn \left(\int \normH{s_n(\omega)-g(\omega)}\,d\Q(\omega)\right) + \int\normH{g(\omega)}\,d\Q(\omega) = \int\normH{g(\omega)}\,d\Q(\omega) < \infty
	\end{align*}
	with the first inequality holding true because of the second integral on its right hand side being finite \citep[Theorem 3.10.9]{denkowski2003} and the first one being finite for $n$ sufficiently large, \vgl \eqref{eq:Rob_Aux_NormalIntegralTauschen_a}. The same equation \eqref{eq:Rob_Aux_NormalIntegralTauschen_a} also tells us that $\limn \left(\int \normH{s_n(\omega)-g(\omega)}\,d\Q(\omega)\right)$ exists and the linearity of the limit can therefore be applied in the second step. Finally, \eqref{eq:Rob_Aux_NormalIntegralTauschen_a} and the mentioned Theorem 3.10.9 yield the last two steps.
\end{proof}

We will now turn our attention to the three summands on the right hand side of \eqref{eq:Rob_Aux_Decomposition}. That is, in the subsequent three lemmas, we will examine the effect of only one element of the triple $(\P,\lb,\k)$ varying at a time, with the proofs of \Cref{Lem:Rob_Aux_P,Lem:Rob_Aux_lambda} being closely connected to their counterparts by \citet{christmann2018} but generalizing them to the case of non-differentiable losses.

\begin{mylem}\label{Lem:Rob_Aux_P}
	Let \X be a complete and separable metric space and $\Y\subseteq\R$ be closed. Let $\P_1,\P_2\in\MXY$ be probability measures, $\lb>0$ and $\k$ be a bounded and measurable kernel on \X with separable RKHS \H. Let \loss be a convex and Lipschitz continuous loss function. Then,
	\begin{align*}
		\normSup{f_{\P_1,\lb,\k}-f_{\P_2,\lb,\k}} \le  \frac{\normSup{\k}^2 |\loss|_1}{\lb}\cdot \norm{tv}{\P_1-\P_2}\,.
	\end{align*}
\end{mylem}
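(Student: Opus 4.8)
The plan is to treat this as the single-measure perturbation building block of the decomposition \eqref{eq:Rob_Aux_Decomposition} and to reduce everything to the representer theorem \Cref{Thm:Rob_Aux_Representer} together with the elementary bound \eqref{eq:Rob_InfNormHNorm}. First I would apply \Cref{Thm:Rob_Aux_Representer} to the SVM $f_{\P_1,\lb,\k}$, i.e.\ with distribution $\P_1$. This produces a fixed $h\in\L{\infty}(\P_1)$ with $h(x,y)\in\partial\lossshift(x,y,f_{\P_1,\lb,\k}(x))$ for all $(x,y)\in\XY$, with $\normSup{h}\le|\loss|_1$, and --- taking the comparison measure $\bar\P=\P_2$ in the last assertion of the theorem --- the $\H$-norm stability estimate
\begin{align*}
	\normH{f_{\P_1,\lb,\k}-f_{\P_2,\lb,\k}} \le \frac{1}{\lb}\,\normH{\ew[\P_1]{h\Phi}-\ew[\P_2]{h\Phi}}\,.
\end{align*}
The essential gain over \citet{christmann2018} at this step is that the theorem only needs $h$ to be a \emph{sub}gradient, so no differentiability of \loss is required; moreover the \emph{same} $h$ appears on both sides, so the two SVMs are compared through one common integrand.

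Next I would estimate the expectation difference by writing it as a single \H-valued integral against the signed measure $\P_1-\P_2$, namely
\begin{align*}
	\ew[\P_1]{h\Phi}-\ew[\P_2]{h\Phi}=\int_{\XY} h(x,y)\,\Phi(x)\,d(\P_1-\P_2)(x,y)\,,
\end{align*}
whose integrand is controlled pointwise by $\normH{h(x,y)\Phi(x)}=|h(x,y)|\sqrt{\k(x,x)}\le|\loss|_1\cdot\normSup{\k}$, using $\normSup{h}\le|\loss|_1$ and the kernel definition of $\normSup{\k}$. To pass $\normH{\cdot}$ through the integral I would argue by duality: for $w\in\H$ with $\normH{w}\le 1$, the reproducing property \eqref{eq:Rob_ReprodProperty} gives $\langle h(x,y)\Phi(x),w\rangle_\H=h(x,y)\,w(x)$, and since $|h(x,y)\,w(x)|\le|\loss|_1\normSup{w}\le|\loss|_1\normSup{\k}$ by \eqref{eq:Rob_InfNormHNorm}, commuting the bounded functional $\langle\cdot,w\rangle_\H$ with the integral and bounding the resulting scalar integral by $(\sup|h\,w|)\cdot\norm{tv}{\P_1-\P_2}$ yields, after taking the supremum over $w$,
\begin{align*}
	\normH{\ew[\P_1]{h\Phi}-\ew[\P_2]{h\Phi}}\le|\loss|_1\,\normSup{\k}\cdot\norm{tv}{\P_1-\P_2}\,.
\end{align*}

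Finally, since $f_{\P_1,\lb,\k}-f_{\P_2,\lb,\k}$ lies in the RKHS \H of $\k$, I would invoke \eqref{eq:Rob_InfNormHNorm} to convert the $\H$-norm into the supremum norm and chain the three displays:
\begin{align*}
	\normSup{f_{\P_1,\lb,\k}-f_{\P_2,\lb,\k}}\le\normSup{\k}\cdot\frac{1}{\lb}\cdot|\loss|_1\normSup{\k}\cdot\norm{tv}{\P_1-\P_2}=\frac{\normSup{\k}^2|\loss|_1}{\lb}\cdot\norm{tv}{\P_1-\P_2}\,,
\end{align*}
which is exactly the claim. The only genuinely delicate point is the middle step: one must confirm that $h\Phi$ is Bochner integrable (immediate, as it is measurable and bounded in \H by $|\loss|_1\normSup{\k}$) and justify moving $\normH{\cdot}$ through an integral taken against a \emph{signed} measure. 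The duality route above is the one I would prefer precisely because it sidesteps the signed-measure Bochner technicalities, reducing the estimate to a scalar integral bounded by the total-variation norm; alternatively one could split along the Jordan decomposition $\P_1-\P_2=\nu^+-\nu^-$ and apply the standard \H-valued Bochner norm inequality to each part. Everything else is a direct concatenation of results already available in the excerpt.
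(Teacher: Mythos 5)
Your proposal is correct and follows essentially the same route as the paper's proof: apply \Cref{Thm:Rob_Aux_Representer} with $\bar\P=\P_2$ to get the $\H$-norm bound $\frac{1}{\lb}\normH{\ew[\P_1]{h\Phi}-\ew[\P_2]{h\Phi}}$, estimate this by $\frac{1}{\lb}|\loss|_1\normSup{\k}\norm{tv}{\P_1-\P_2}$, and convert to the supremum norm via \eqref{eq:Rob_InfNormHNorm}. The only (harmless) difference is technical: where you pass $\normH{\cdot}$ through the signed-measure integral by duality with the reproducing property, the paper instead bounds $\normH{h(x,y)\Phi(x)}=|h(x,y)|\sqrt{\k(x,x)}$ pointwise and integrates against $|\P_1-\P_2|$, citing a Bochner-integral inequality from \citet{christmann2018}.
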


\begin{proof}
	First of all, 
	\begin{align*}
		\normSup{f_{\P_1,\lb,\k}-f_{\P_2,\lb,\k}} \le \normSup{\k} \cdot \normH{f_{\P_1,\lb,\k}-f_{\P_2,\lb,\k}}
	\end{align*}
	by \eqref{eq:Rob_InfNormHNorm}. By \Cref{Thm:Rob_Aux_Representer}, there exists a function $h$ from the subdifferential of \lossshift with respect to $f_{\P_1,\lb,\k}$ such that \citep[using the properties of Bochner integrals, \vgl][Lemma 6.1, as well as \eqref{eq:Rob_FeatureMap}]{christmann2018}
	\begin{align*}
		&\normH{f_{\P_1,\lb,\k}-f_{\P_2,\lb,\k}}\\
		&\le \frac{1}{\lb} \cdot \normH{\int h(x,y)\Phi(x) \,d\P_1(x,y) - \int h(x,y)\Phi(x) \,d\P_2(x,y)}\\
		&\le \frac{1}{\lb} \cdot \int \normH{h(x,y)\Phi(x)}\, d|\P_1-\P_2|(x,y)\\
		&\le \frac{1}{\lb} \cdot \sup_{(x,y)\in\XY} |h(x,y)| \cdot \sup_{x\in\X} \normH{\Phi(x)} \cdot \int  1 \,d|\P_1-\P_2|(x,y)\\
		&= \frac{1}{\lb} \cdot \sup_{(x,y)\in\XY} |h(x,y)| \cdot \sup_{x\in\X} \sqrt{\k(x,x)} \cdot \norm{tv}{\P_1-\P_2}\\
		&\le \frac{1}{\lb} \cdot |\loss|_1 \cdot \normSup{\k} \cdot \norm{tv}{\P_1-\P_2}\,,
	\end{align*}
	from which the assertion follows.
\end{proof}

\begin{mylem}\label{Lem:Rob_Aux_lambda}
	Let \X be a complete and separable metric space and $\Y\subseteq\R$ be closed. Let $\P\in\MXY$ be a probability measure, $\lb_1,\lb_2>0$ and $\k$ be a bounded and measurable kernel on \X with separable RKHS \H. Let \loss be a convex and Lipschitz continuous loss function. Then,
	\begin{align*}
		\normSup{f_{\P,\lb_1,\k}-f_{\P,\lb_2,\k}} \le  \frac{\normSup{\k}^2 |\loss|_1}{\min\{\lb_1,\lb_2\}^2}\cdot |\lb_1-\lb_2|\,.
	\end{align*}
\end{mylem}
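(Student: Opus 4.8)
The plan is to first reduce the supremum norm to the RKHS norm via \eqref{eq:Rob_InfNormHNorm}, so that it suffices to bound $\normH{f_{\P,\lb_1,\k}-f_{\P,\lb_2,\k}}$, and then to exploit the convexity of the regularized problem to turn the gap in $\lb$ into a bound on this RKHS distance. Throughout I abbreviate $f_j:=f_{\P,\lb_j,\k}$ for $j\in\{1,2\}$. Applying \Cref{Thm:Rob_Aux_Representer} to each $\lb_j$ produces functions $h_j\in\L{\infty}(\P)$ with $h_j(x,y)\in\partial\lossshift(x,y,f_j(x))$, with $\normSup{h_j}\le|\loss|_1$, and with the representation $f_j=-\frac{1}{2\lb_j}\ew[\P]{h_j\Phi}$. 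Since $\normH{\Phi(x)}=\sqrt{\k(x,x)}\le\normSup{\k}$, the norm inequality for \H-valued Bochner integrals (as used in the proof of \Cref{Lem:Rob_Aux_P}) gives $\normH{\ew[\P]{h_j\Phi}}\le|\loss|_1\normSup{\k}$ and hence the a priori bound $\normH{f_j}\le\frac{|\loss|_1\normSup{\k}}{2\lb_j}$, which will eventually supply the factor $\tfrac{1}{\lb_1\lb_2}$ in the final estimate.

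The heart of the argument is a monotonicity inequality. The shifted risk $\riskshift$ is convex on \H, and the pointwise subgradient inequality for $\lossshift$ together with the reproducing property shows that $\ew[\P]{h_j\Phi}$ is a subgradient of $\riskshift$ at $f_j$. Monotonicity of the subdifferential of a convex function then yields
\begin{align*}
	\langle \ew[\P]{h_1\Phi}-\ew[\P]{h_2\Phi},\, f_1-f_2\rangle_\H \ge 0\,.
\end{align*}
Substituting the representer identities $\ew[\P]{h_j\Phi}=-2\lb_j f_j$ and rearranging (expanding the inner products and collecting terms) gives
\begin{align*}
	(\lb_1+\lb_2)\,\langle f_1,f_2\rangle_\H \ge \lb_1\normH{f_1}^2+\lb_2\normH{f_2}^2\,,
\end{align*}
which, inserted into $\normH{f_1-f_2}^2=\normH{f_1}^2-2\langle f_1,f_2\rangle_\H+\normH{f_2}^2$, produces the key quadratic estimate
\begin{align*}
	\normH{f_1-f_2}^2 \le \frac{(\lb_1-\lb_2)\bigl(\normH{f_2}^2-\normH{f_1}^2\bigr)}{\lb_1+\lb_2}\,.
\end{align*}
(Equivalently, this inequality can be read off directly from the $2\lb_j$-strong convexity of the objective in \eqref{eq:Rob_DefShiftSVM}, by adding the two minimality comparisons between $f_1$ and $f_2$.)

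The final step, which I expect to be the most delicate, is to convert this quadratic inequality into the asserted linear bound in $|\lb_1-\lb_2|$. The trick is to factor the difference of squares as $\normH{f_2}^2-\normH{f_1}^2=(\normH{f_2}-\normH{f_1})(\normH{f_2}+\normH{f_1})$ and to apply the reverse triangle inequality $|\normH{f_2}-\normH{f_1}|\le\normH{f_1-f_2}$. This bounds the right-hand side by $\frac{|\lb_1-\lb_2|}{\lb_1+\lb_2}\normH{f_1-f_2}\bigl(\normH{f_1}+\normH{f_2}\bigr)$; cancelling one factor of $\normH{f_1-f_2}$ (the case $\normH{f_1-f_2}=0$ being trivial) leaves
\begin{align*}
	\normH{f_1-f_2} \le \frac{|\lb_1-\lb_2|}{\lb_1+\lb_2}\bigl(\normH{f_1}+\normH{f_2}\bigr)\,.
\end{align*}
Plugging in the a priori bounds $\normH{f_j}\le\frac{|\loss|_1\normSup{\k}}{2\lb_j}$ gives $\normH{f_1}+\normH{f_2}\le\frac{|\loss|_1\normSup{\k}}{2}\cdot\frac{\lb_1+\lb_2}{\lb_1\lb_2}$, the factor $\lb_1+\lb_2$ cancels, and multiplying by $\normSup{\k}$ via \eqref{eq:Rob_InfNormHNorm} yields $\normSup{f_1-f_2}\le\frac{\normSup{\k}^2|\loss|_1}{2\lb_1\lb_2}|\lb_1-\lb_2|$. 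Since $2\lb_1\lb_2\ge\min\{\lb_1,\lb_2\}^2$, this is even slightly stronger than the stated bound with $\min\{\lb_1,\lb_2\}^2$ in the denominator. The main obstacle is precisely obtaining a genuinely \emph{linear} dependence on $|\lb_1-\lb_2|$: the naive route terminates at the quadratic estimate and would only give a square-root rate, so the cancellation of the extra power of $\normH{f_1-f_2}$ is what makes the bound work.
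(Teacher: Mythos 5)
Your proof is correct, and its endgame---factoring $\normH{f_2}^2-\normH{f_1}^2$ as a product, applying the reverse triangle inequality, cancelling one power of $\normH{f_1-f_2}$, and inserting a priori bounds on $\normH{f_j}$---is exactly the paper's. The middle of the argument is genuinely different, however. The paper pairs $h_1$ and $h_2$ with the weights $\tfrac{1}{2\lb_1}$ and $\tfrac{1}{2\lb_2}$ coming from the representer formula, so that after the subgradient inequalities the risk difference $\riskshift(f_1)-\riskshift(f_2)$ remains on the right-hand side; it then needs a case distinction on the sign of $\lb_1-\lb_2$ and the minimality of $f_1$ (resp.\ $f_2$) in its own regularized problem to convert that risk difference into $\lb_1\bigl(\normH{f_2}^2-\normH{f_1}^2\bigr)$. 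You instead add the two \emph{unweighted} pointwise subgradient inequalities, so the loss values cancel before integration, and then substitute $\ew[\P]{h_j\Phi}=-2\lb_j f_j$; this produces the symmetric inequality $(\lb_1+\lb_2)\langle f_1,f_2\rangle_\H\ge\lb_1\normH{f_1}^2+\lb_2\normH{f_2}^2$ with no case distinction and no appeal to risk minimality. Combined with the sharper a priori bound $\normH{f_j}\le\frac{|\loss|_1\normSup{\k}}{2\lb_j}$, which follows directly from the representer formula (the paper cites the weaker bound without the factor $\tfrac12$), your route yields the constant $\frac{1}{2\lb_1\lb_2}$ in place of $\frac{1}{\min\{\lb_1,\lb_2\}^2}$, which is indeed stronger by a factor of $\frac{2\max\{\lb_1,\lb_2\}}{\min\{\lb_1,\lb_2\}}\ge 2$. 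The only point requiring a word of care is your assertion that $\ew[\P]{h_j\Phi}\in\partial\riskshift(f_j)$, which implicitly presupposes that the relevant risks are finite; but you do not actually need this, since summing the two pointwise subgradient inequalities eliminates the loss values entirely and leaves the nonnegative integrand $\bigl(h_1(x,y)-h_2(x,y)\bigr)\bigl(f_1(x)-f_2(x)\bigr)$, so the monotonicity inequality holds without further justification.
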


\begin{proof}
	To shorten the notation, we define $\fj:=f_{\P,\lb_i,\k}$, $i=1,2$, in this proof. By \eqref{eq:Rob_InfNormHNorm} we know that
	\begin{align*}
		\normSup{\fa-\fb} \le \normSup{\k} \cdot \normH{\fa-\fb}\,.
	\end{align*}
	Assume now without loss of generality that $\normH{\fa-\fb}>0$ since the case $\normH{\fa-\fb}=0$ is trivial.
	 
	\Cref{Thm:Rob_Aux_Representer} yields functions $h_1$ and $h_2$ from the subdifferential of \lossshift (with respect to \fa respectively \fb) such that
	\begin{align*}
		\fa-\fb = -\frac{1}{2\lb_1} \cdot \int h_1(x,y)\Phi(x)\,d\P(x,y) + \frac{1}{2\lb_2} \cdot \int h_2(x,y)\Phi(x)\,d\P(x,y)\,.
	\end{align*}
	From this we obtain, by applying the reproducing property \eqref{eq:Rob_ReprodProperty} in the last step,
	\begin{align}\label{eq:Rob_Aux_lambda_ReprodAufteilung}
		\normH{\fa-\fb}^2 =&\ \langle \fa-\fb,\fa-\fb\rangle_\H\notag\\
		=&\ \left\langle \frac{1}{2\lb_2} \cdot \int h_2(x,y)\Phi(x)\,d\P(x,y), \fa-\fb\right\rangle_\H\notag\\
		&\ - \left\langle \frac{1}{2\lb_1} \cdot \int h_1(x,y)\Phi(x)\,d\P(x,y), \fa-\fb\right\rangle_\H\notag\\
		=&\ \frac{1}{2\lb_2} \cdot \int h_2(x,y)(\fa(x)-\fb(x))\,d\P(x,y) - \frac{1}{2\lb_1} \cdot \int h_1(x,y)(\fa(x)-\fb(x))\,d\P(x,y)\,.
	\end{align}
	Because \loss (and thus also \lossshift) is convex and $h_i(x,y)\in\partial\lossshift(x,y,\fj(x))$ for all $(x,y)\in\XY$ and for $i=1,2$, we know that
	\begin{align*}
		h_i(x,y)\cdot (t-\fj(x)) \le \lossshift(x,y,t) - \lossshift(x,y,\fj(x)) \qquad \forall\, t\in\R,\qquad i=1,2\,,
	\end{align*}
	more specifically
	\begin{align*}
		h_1(x,y)\cdot (\fb(x)-\fa(x)) \le \lossshift(x,y,\fb(x)) - \lossshift(x,y,\fa(x))
	\end{align*}
	and
	\begin{align*}
		h_2(x,y)\cdot (\fa(x)-\fb(x)) \le \lossshift(x,y,\fa(x)) - \lossshift(x,y,\fb(x))\,.
	\end{align*}
	Plugging these two inequalities into \eqref{eq:Rob_Aux_lambda_ReprodAufteilung} yields
	\begin{align}\label{eq:Rob_Aux_lambda_Absch}
		\normH{\fa-\fb}^2 &\le \left(\frac{1}{2\lb_2}-\frac{1}{2\lb_1}\right) \cdot \int \lossshift(x,y,\fa(x)) - \lossshift(x,y,\fb(x)) \,d\P(x,y)\notag\\
		&= \left(\frac{1}{2\lb_2}-\frac{1}{2\lb_1}\right) \cdot \big(\ew[\P]{\lossshift(X,Y,\fa(X))} -  \ew[\P]{\lossshift(X,Y,\fb(X))} \big)
	\end{align}
	Now, $\normH{\fa-\fb}^2$ being positive implies that the right hand side of this inequality has to be positive as well, \ie both factors need to have the same sign. First assume $\lb_1>\lb_2$:
	
	In this case $\frac{1}{2\lb_2}-\frac{1}{2\lb_1}> 0$ and thus $\ew[\P]{\lossshift(X,Y,\fa(X))} -  \ew[\P]{\lossshift(X,Y,\fb(X))}$ has to be positive as well. Because of the definition of \fa as the minimizer of the regularized risk with regularization parameter $\lb_1$, we know that
	\begin{align*}
		\ew[\P]{\lossshift(X,Y,\fa(X))} + \lb_1 \normH{\fa}^2 \le \ew[\P]{\lossshift(X,Y,\fb(X))} + \lb_1 \normH{\fb}^2\,.
	\end{align*}
	From this, it follows that
	\begin{align*}
		0 &< \ew[\P]{\lossshift(X,Y,\fa(X))} -  \ew[\P]{\lossshift(X,Y,\fb(X))} \le \lb_1 \cdot \left(\normH{\fb}^2-\normH{\fa}^2 \right)\\
		&= \lb_1 \cdot \left(\normH{\fa} + \normH{\fb} \right) \cdot \left(\normH{\fb} - \normH{\fa} \right) \le \lb_1 \cdot \left(\normH{\fa} + \normH{\fb} \right) \cdot \normH{\fa-\fb}
	\end{align*}
	with the last inequality holding true because of $\lb_1  (\normH{\fa} + \normH{\fb} )\ge 0$ and the reverse triangle inequality. Plugging this into \eqref{eq:Rob_Aux_lambda_Absch} and dividing by $\normH{\fa-\fb}$, we obtain
	\begin{align}\label{eq:Rob_Aux_lambda_AbschB}
		\normH{\fa-\fb} \le  \frac{1}{2}\cdot \left(\frac{\max\{\lb_1,\lb_2\}}{\min\{\lb_1,\lb_2\}} -1\right)\cdot \left(\normH{\fa} + \normH{\fb} \right)\,.
	\end{align}
	The case $\lb_2>\lb_1$ yields the same inequality. 
	
	By additionally applying that $\normH{\fj}\le \frac{1}{\lb_i}|\loss|_1\normSup{\k}$ \citep[\vgl][proof of Proposition 3]{christmann2009}, $i=1,2$, we now obtain
	\begin{align*}
		\normH{\fa-\fb} &\le \frac{|\loss|_1\normSup{\k}}{2} \cdot \left(\frac{\max\{\lb_1,\lb_2\}}{\min\{\lb_1,\lb_2\}} -1\right) \cdot \left(\frac{1}{\lb_1} + \frac{1}{\lb_2} \right)\\
		&\le \frac{|\loss|_1\normSup{\k}}{2\min\{\lb_1,\lb_2\}} \cdot \big(\max\{\lb_1,\lb_2\} - \min\{\lb_1,\lb_2\}\big) \cdot \frac{2}{\min\{\lb_1,\lb_2\}}\\
		&= \frac{|\loss|_1\normSup{\k}}{\min\{\lb_1,\lb_2\}^2} \cdot |\lb_1-\lb_2|
	\end{align*}
	which yields the assertion.
\end{proof}

\begin{mylem}\label{Lem:Rob_Aux_Kern}
	Let \X be a complete and separable metric space and $\Y\subseteq\R$ be closed. Let $\P\in\MXY$ be a probability measure, $\lb>0$ and $\k_1,\k_2$ be bounded and measurable kernels on \X with separable RKHSs $\H_1,\H_2$. Denote $\kappa:=\max\{\normSup{\k_1},\normSup{\k_2}\}$. Let \loss be a convex and Lipschitz continuous loss function. Then,
	\begin{align*}
		\normSup{f_{\P,\lb,\k_1}-f_{\P,\lb,\k_2}} \le  \frac{|\loss|_1}{\lb} \cdot \left(\frac{1}{2} \cdot \normSup{\k_1-\k_2} + \kappa \cdot \sqrt{\normSup{\k_1-\k_2}}\right)\,.
	\end{align*}
\end{mylem}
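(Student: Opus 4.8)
The plan is to compare $f_{\P,\lb,\k_1}\in\H_1$ and $f_{\P,\lb,\k_2}\in\H_2$, which live in \emph{different} RKHSs and hence cannot be subtracted inside a single Hilbert space, by inserting a \emph{hybrid function} that lies in $\H_2$ but is assembled from the optimality data of $f_{\P,\lb,\k_1}$. Writing $f_j:=f_{\P,\lb,\k_j}$ and denoting by $\Phi_j$ the feature map of $\k_j$, I would apply \Cref{Thm:Rob_Aux_Representer} twice to obtain measurable $h_1,h_2$ with $\normSup{h_j}\le|\loss|_1$, $h_j(x,y)\in\partial\lossshift(x,y,f_j(x))$ for all $(x,y)$, and $f_j=-\frac{1}{2\lb}\ew[\P]{h_j\Phi_j}$. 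I then set
\[
\tilde{f}:=-\frac{1}{2\lb}\ew[\P]{h_1\Phi_2}\in\H_2,
\]
i.e. the subgradient $h_1$ of $f_1$ paired with the \emph{second} feature map; this Bochner integral is well-defined since $\normSup{h_1}\le|\loss|_1$ and $\norm{\H_2}{\Phi_2(x)}=\sqrt{\k_2(x,x)}\le\kappa$. After $\normSup{f_1-f_2}\le\normSup{f_1-\tilde{f}}+\normSup{\tilde{f}-f_2}$, the first summand is handled pointwise: by the reproducing property inside the integral, $f_1(x)=-\frac{1}{2\lb}\ew[\P]{h_1\,\k_1(\cdot,x)}$ and $\tilde{f}(x)=-\frac{1}{2\lb}\ew[\P]{h_1\,\k_2(\cdot,x)}$, so their difference only involves $\k_1-\k_2$ and
\[
\normSup{f_1-\tilde{f}}\le\frac{1}{2\lb}\cdot|\loss|_1\cdot\normSup{\k_1-\k_2},
\]
which is exactly the $\tfrac{1}{2}\normSup{\k_1-\k_2}$ contribution.

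\textbf{The crux: the second summand.} Here both $\tilde{f}$ and $f_2$ lie in $\H_2$, so $\normSup{\tilde{f}-f_2}\le\kappa\cdot\norm{\H_2}{\tilde{f}-f_2}$ by \eqref{eq:Rob_InfNormHNorm}, and it remains to estimate the $\H_2$-norm. Since $\tilde{f}-f_2=-\frac{1}{2\lb}\ew[\P]{(h_1-h_2)\Phi_2}$, the reproducing property gives
\[
\norm{\H_2}{\tilde{f}-f_2}^2=-\frac{1}{2\lb}\ew[\P]{(h_1-h_2)\big(\tilde{f}-f_2\big)}.
\]
I would split $\tilde{f}-f_2=(\tilde{f}-f_1)+(f_1-f_2)$. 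Adding the two subgradient inequalities for $h_1$ at $f_1$ and $h_2$ at $f_2$ (exactly as in \Cref{Lem:Rob_Aux_lambda}) yields the monotonicity $(h_1-h_2)(f_1-f_2)\ge0$ pointwise, so the $(f_1-f_2)$-part contributes a nonpositive term and drops out; the $(\tilde{f}-f_1)$-part is controlled by $|h_1-h_2|\le2|\loss|_1$ together with the pointwise bound on $\normSup{\tilde{f}-f_1}$ already obtained. This gives $\norm{\H_2}{\tilde{f}-f_2}^2\le\frac{|\loss|_1^2}{2\lb^2}\normSup{\k_1-\k_2}$, hence $\normSup{\tilde{f}-f_2}\le\frac{\kappa|\loss|_1}{\lb}\sqrt{\normSup{\k_1-\k_2}}$ (with room to spare from the factor $\tfrac{1}{\sqrt2}$). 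Adding the two summands produces precisely the stated bound.

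\textbf{Main obstacle.} The essential difficulty is the different-RKHS issue: one can neither form $f_1-f_2$ in one Hilbert space nor naively compare the two norms, and a crude bound on $h_1-h_2$ loses all smallness. The hybrid $\tilde{f}\in\H_2$ resolves this by isolating the kernel discrepancy into a purely pointwise estimate (first summand) while reducing the genuine Hilbert-space part to a same-space comparison whose smallness is extracted from subgradient monotonicity rather than from $h_1-h_2$ directly; the square root in the final bound is simply the artifact of passing from the $\H_2$-norm-squared estimate back to the norm.
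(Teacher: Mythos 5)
Your proof is correct, but it takes a genuinely different route from the paper's. The paper symmetrizes: it forms the averaged kernel $\ktilde=(\k_1+\k_2)/2$, invokes the sum-RKHS construction $\Htilde=\Htilde_1\oplus\Htilde_2$ (via Theorem 5 of \citet{berlinet2004} together with an auxiliary lemma on rescaled kernels), takes the genuine SVM $f_{\lossshift,\P,\lb,\ktilde}$ as the intermediate point, and for each $i$ splits $f_i-\tilde{f}$ into a feature-map-difference integral (bounded pointwise, linear in $\normSup{\ktilde-\k_i}=\normSup{\k_1-\k_2}/2$) and a subgradient-difference integral (bounded through its squared $\Htilde$-norm using the reproducing property and the monotonicity of $\partial\lossshift$ between $\tilde{h}$ at $\tilde{f}$ and $h_i$ at $f_i$). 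Your hybrid $\tilde{f}=-\frac{1}{2\lb}\ew[\P]{h_1\Phi_2}\in\H_2$ plays the same structural role but asymmetrically: the kernel discrepancy is isolated in the purely pointwise comparison of $f_1$ with $\tilde{f}$, and the monotonicity is applied between the two genuine subgradients $h_1$ at $f_1$ and $h_2$ at $f_2$, with the leftover cross term controlled by $\normSup{h_1-h_2}\le 2|\loss|_1$ and the already-established bound on $\normSup{\tilde{f}-f_1}$. Each step checks out (well-definedness of the Bochner integral, the identity $\norm{\H_2}{\tilde{f}-f_2}^2=-\frac{1}{2\lb}\ew[\P]{(h_1-h_2)(\tilde{f}-f_2)}$, the sign of the $(f_1-f_2)$-part, and the resulting bound $\frac{|\loss|_1^2}{2\lb^2}\normSup{\k_1-\k_2}$ on the squared norm), and your route even yields the marginally sharper constant $1/\sqrt{2}$ in front of the square-root term. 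What your approach buys is economy: no sum-RKHS machinery, no auxiliary scaling lemma, and no need to establish the existence of a third SVM. What the paper's symmetric version buys is that the identical argument transfers verbatim to the $\Lp$-setting of \Cref{Lem:Rob_Aux_KernLp} --- though your construction would adapt there just as well, by replacing the pointwise estimate on $f_1-\tilde{f}$ with the corresponding $\Lp(\Px\otimes\Px)$-estimate.
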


In order to prove \Cref{Lem:Rob_Aux_Kern}, we first need a short auxiliary statement which is probably well-known, but which we were unable to find a reference for. For reasons of better readability we therefore prove the following auxiliary lemma:

\begin{mylem}\label{Lem:Rob_Aux_RKHS}
	Let $\X\ne\emptyset$ and let $\k:\X\times\X\to\R$ be a kernel with RKHS \H. Let $\alpha>0$ and define the kernel $\ktilde:\X\times\X\to\R$ by $\ktilde := \alpha\k$. Then, $\Htilde:=\H$ equipped with the norm $\norm{\Htilde}{\cdot} := \frac{1}{\sqrt{\alpha}}\normH{\cdot}$ is the RKHS of \ktilde.
\end{mylem}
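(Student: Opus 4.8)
The plan is to invoke the uniqueness of the reproducing kernel Hilbert space associated with a given kernel (the Moore--Aronszajn theorem, \citep[\vgl][]{aronszajn1950}): rather than reconstructing an RKHS for \ktilde\ from scratch, it suffices to equip the set $\Htilde=\H$ with an inner product inducing the prescribed norm, to confirm that the result is a Hilbert space of real-valued functions on \X, and to check that the sections $\ktilde(\cdot,x)$ lie in \Htilde\ and reproduce point evaluation. Since the RKHS of a kernel is unique, this identifies \Htilde\ as the RKHS of \ktilde.

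First I would pin down the inner product behind the prescribed norm. The norm $\norm{\Htilde}{\cdot}=\frac{1}{\sqrt{\alpha}}\normH{\cdot}$ is induced, via polarization, by $\langle f,g\rangle_{\Htilde}:=\frac{1}{\alpha}\langle f,g\rangle_{\H}$, which is a genuine inner product precisely because $\alpha>0$. As $\norm{\Htilde}{\cdot}$ and $\normH{\cdot}$ differ only by the positive constant factor $\alpha^{-1/2}$, they induce the same Cauchy and convergent sequences; hence completeness of \H\ transfers to \Htilde, so \Htilde\ is a Hilbert space of functions on \X.

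Next I would verify the two defining properties. Because $\k(\cdot,x)\in\H=\Htilde$ and \Htilde\ is a vector space, $\ktilde(\cdot,x)=\alpha\,\k(\cdot,x)\in\Htilde$ for every $x\in\X$. For the reproducing property, a direct computation using the definition of $\langle\cdot,\cdot\rangle_{\Htilde}$ and the reproducing property of \H\ gives, for all $f\in\Htilde$ and $x\in\X$,
\begin{align*}
	\langle f,\ktilde(\cdot,x)\rangle_{\Htilde} = \frac{1}{\alpha}\langle f,\alpha\,\k(\cdot,x)\rangle_{\H} = \langle f,\k(\cdot,x)\rangle_{\H} = f(x)\,.
\end{align*}
In particular the evaluation functionals are bounded, so \Htilde\ is indeed an RKHS and \ktilde\ is its reproducing kernel; by uniqueness the claim follows.

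I do not expect any serious obstacle here. The only two points that deserve a moment of care are checking that rescaling the inner product by the positive factor $\alpha^{-1}$ preserves completeness (so that the Hilbert-space structure survives), and being precise about invoking the \emph{uniqueness} direction of Moore--Aronszajn so that verifying the reproducing property suffices; both are routine, which is exactly why this short lemma can be used as a black box in the proof of \Cref{Lem:Rob_Aux_Kern}.
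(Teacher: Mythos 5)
Your proof is correct, but it takes a genuinely different route from the paper. The paper first notes via \citet[Lemma~4.5]{steinwart2008} that $\ktilde$ is a kernel and then appeals to the explicit construction of the RKHS in \citet[Theorem~4.21]{steinwart2008}: on the pre-Hilbert space of finite linear combinations $\sum_j a_j\ktilde(\cdot,x_j)$ the rescaling of the inner product by $\alpha^{-1}$ is read off directly from the formula in equation~(4.12) there, and the statement for the full space follows by completion. You instead work top-down: you equip the candidate space $\Htilde=\H$ with the polarized inner product $\frac{1}{\alpha}\langle\cdot,\cdot\rangle_{\H}$, check that completeness survives the constant rescaling, verify that $\ktilde(\cdot,x)=\alpha\k(\cdot,x)$ lies in $\Htilde$ and reproduces point evaluation, and then invoke the uniqueness half of the Moore--Aronszajn theorem \citep[\vgl][]{aronszajn1950}. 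Both arguments are complete; yours is more self-contained and does not require unwinding the construction in \citet{steinwart2008} (and it yields the kernel property of $\ktilde$ as a by-product of exhibiting a reproducing Hilbert space for it), while the paper's version is shorter on the page because it delegates all the work to the cited theorem. The one point you should make explicit if you write this up is the precise uniqueness statement you are using, namely that a Hilbert space of functions on $\X$ in which all evaluation functionals are reproduced by $\ktilde$ is necessarily \emph{the} RKHS of $\ktilde$; with that reference in place the argument is airtight.
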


\begin{proof}
	\citet[Lemma 4.5]{steinwart2008} yields that \ktilde is actually a kernel. Hence, the assertion follows directly from \citet[Theorem 4.21]{steinwart2008}: It can easily be seen for the pre-Hilbert space from equation (4.12) in that theorem and follows by completion for the whole Hilbert space.
\end{proof}

\begin{proof}[Proof of \Cref{Lem:Rob_Aux_Kern}]
	To shorten the notation, we define $\fj:=f_{\P,\lb,\k_i}$, $i=1,2$, in this proof. 
	
	Define $\ktilde_i:=\frac{\k_i}{2}$ for $i=1,2$. From \Cref{Lem:Rob_Aux_RKHS} we know that $\Htilde_i=\H_i$ (equipped with the norm $\norm{\Htilde_i}{\cdot}=\sqrt{2}\,\norm{\H_i}{\cdot}$) is the RKHS of $\ktilde_i$. Thus, we obviously have $\fj\in\Htilde_i$ for $i=1,2$.
	
	In the next step, we define a new space which contains \fa as well as \fb by 
	\begin{align*}
		\tilde{\H} := \tilde{\H}_1 \oplus \tilde{\H}_2 := \left\{g:\X\to\R \,\left|\, g=g_1+g_2, g_1\in\tilde{\H}_1, g_2\in\tilde{\H}_2 \right.\right\}\,.
	\end{align*}
	\citet[Theorem 5]{berlinet2004} tells us that $\tilde{\H}$ equipped with the norm
	\begin{align*}
		\norm{\Htilde}{g}^2:=\min_{g_1\in\Htilde_1,\,g_2\in\Htilde_2\,:\, g_1+g_2=g} \left(\norm{\Htilde_1}{g_1}^2+\norm{\Htilde_2}{g_2}^2\right) \qquad \forall\, g\in\Htilde 
	\end{align*}
	is the RKHS of the reproducing kernel $\tilde{\k}:=\tilde{\k}_1+\tilde{\k}_2=(\k_1+\k_2)/2$. Since obviously $\fa,\fb\in\tilde{\H}$, we will now use this new RKHS as an aid for investigating the difference between \fa and \fb:
	
	First of all, because $\ktilde$ is measurable and bounded by $||\ktilde||_\infty\le\frac{1}{2}\left(\normSup{\k_1}+\normSup{\k_2}\right)<\infty$ and \Htilde is obviously separable, there exists a unique SVM $f_{\lossshift,\P,\lb,\tilde{\k}}=:\tilde{f}$ (\vgl \Cref{Thm:Rob_Aux_Representer}). The triangle inequality then yields 
	\begin{align}\label{eq:Rob_Aux_Kern_Triangle}
		\normSup{\fa-\fb} \le \normSup{\fa-\ftilde} + \normSup{\fb-\ftilde}\,.
	\end{align}
	By applying \Cref{Thm:Rob_Aux_Representer}, we can expand both of the differences on the right hand side as
	\begin{align}\label{eq:Rob_Aux_Kern_DifferenzAufteilung}
		\fj-\ftilde =&\ -\frac{1}{2\lb}\cdot\int h_i(x,y)\Phi_i(x)\,d\P(x,y) + \frac{1}{2\lb}\cdot\int\tilde{h}(x,y)\tilde{\Phi}(x)\,d\P(x,y) \notag\\
		=&\ \frac{1}{2\lb}\cdot\int h_i(x,y)\left(\tilde{\Phi}(x)-\Phi_i(x)\right)\,d\P(x,y) + \frac{1}{2\lb}\cdot\int\left(\tilde{h}(x,y)-h_i(x,y)\right)\tilde{\Phi}(x)\,d\P(x,y)
	\end{align}
	with $h_i$ and $\tilde{h}$ from the subdifferential of \lossshift (with respect to \fj respectively \ftilde). Thus, \eqref{eq:Rob_InfNormHNorm} yields for $i=1,2$
	\begin{align}\label{eq:Rob_Aux_Kern_InfNormAufteilung}
		\normSup{\fj-\ftilde} \le&\ \normSup{\frac{1}{2\lb}\cdot\int h_i(x,y)\left(\tilde{\Phi}(x)-\Phi_i(x)\right)\,d\P(x,y)} \notag\\
		&\ + \normSup{\frac{1}{2\lb}\cdot\int\left(\tilde{h}(x,y)-h_i(x,y)\right)\tilde{\Phi}(x)\,d\P(x,y)} \notag\\
		\le&\ \normSup{\frac{1}{2\lb}\cdot\int h_i(x,y)\left(\tilde{\Phi}(x)-\Phi_i(x)\right)\,d\P(x,y)} \notag\\
		&\ + \normSup{\ktilde} \cdot \norm{\Htilde}{\frac{1}{2\lb}\cdot\int\left(\tilde{h}(x,y)-h_i(x,y)\right)\tilde{\Phi}(x)\,d\P(x,y)} \,.
	\end{align}
	Now, we can easily bound the first summand on the right hand side of \eqref{eq:Rob_Aux_Kern_InfNormAufteilung} by
	\begin{align}\label{eq:Rob_Aux_Kern_InfNormAbsch}
		\normSup{\frac{1}{2\lb}\cdot\int h_i(x,y)\left(\tilde{\Phi}(x)-\Phi_i(x)\right)\,d\P(x,y)} &\le \frac{1}{2\lb} \cdot \normSup{h_i} \cdot \sup_{x\in\X}\normSup{\tilde{\Phi}(x)-\Phi_i(x)}\notag\\
		&\le \frac{|\loss|_1}{2\lb} \cdot \normSup{\ktilde-\k_i}\,,
	\end{align}
	where we applied \Cref{Lem:Rob_Aux_NormIntegralTauschen} in the first step and obtained the bound for $h_i$ from \Cref{Thm:Rob_Aux_Representer}. 
	
	As for the square of the \Htilde-norm in the second summand on the right hand side of \eqref{eq:Rob_Aux_Kern_InfNormAufteilung}, applying \eqref{eq:Rob_Aux_Kern_DifferenzAufteilung} yields
	\begin{align}\label{eq:Rob_Aux_Kern_HNormAufteilung}
		&\norm{\Htilde}{\frac{1}{2\lb}\cdot\int\left(\tilde{h}(x,y)-h_i(x,y)\right)\tilde{\Phi}(x)\,d\P(x,y)}^2 \notag\\
		&= \left\langle \frac{1}{2\lb}\cdot\int\left(\tilde{h}(x,y)-h_i(x,y)\right)\tilde{\Phi}(x)\,d\P(x,y)\,, \fj-\ftilde\right\rangle_{\Htilde} \notag\\
		&\hspace*{0.3cm} - \left\langle \frac{1}{2\lb}\cdot\int\left(\tilde{h}(x,y)-h_i(x,y)\right)\tilde{\Phi}(x)\,d\P(x,y)\,,\frac{1}{2\lb}\cdot\int h_i(x',y')\left(\tilde{\Phi}(x')-\Phi_i(x')\right)\,d\P(x',y')  \right\rangle_{\Htilde}\,,
	\end{align}
	where we can apply the reproducing property \eqref{eq:Rob_ReprodProperty} to the first of these two inner products in order to obtain
	\begin{align*}
		&\left\langle \frac{1}{2\lb}\cdot\int\left(\tilde{h}(x,y)-h_i(x,y)\right)\tilde{\Phi}(x)\,d\P(x,y),\fj-\ftilde \right\rangle_{\Htilde}\\
		&= \frac{1}{2\lb}\cdot\int\left(\tilde{h}(x,y)-h_i(x,y)\right)\left(\fj(x)-\ftilde(x)\right)\,d\P(x,y) \le 0\,.
	\end{align*}
	This inequality holds true because \lossshift is convex which implies that for all $(x,y)\in\XY$ we have $s_1\le s_2$ for every $s_1\in \partial \lossshift(x,y,t_1)$, $s_2\in \partial\lossshift(x,y,t_2)$ with $t_1\le t_2$. Now there are two cases: Either at least one of the two factors in the integrand is zero or the two factors have different signs. Therefore, the integrand, and hence also the whole integral, is non-positive.
	
	Plugging this result into \eqref{eq:Rob_Aux_Kern_HNormAufteilung} results in
	\begin{align}\label{eq:Rob_Aux_Kern_HNormAbsch}
		&\norm{\Htilde}{\frac{1}{2\lb}\cdot\int\left(\tilde{h}(x,y)-h_i(x,y)\right)\tilde{\Phi}(x)\,d\P(x,y)}^2 \notag\\
		&\le \left|\left\langle \frac{1}{2\lb}\cdot\int\left(\tilde{h}(x,y)-h_i(x,y)\right)\tilde{\Phi}(x)\,d\P(x,y)\,,\frac{1}{2\lb}\cdot\int h_i(x',y')\left(\tilde{\Phi}(x')-\Phi_i(x')\right)\,d\P(x',y')  \right\rangle_{\Htilde}\right|\notag\\
		&= \frac{1}{4\lb^2} \cdot \left|\int\int \left(\tilde{h}(x,y)-h_i(x,y)\right) h_i(x',y')\left(\ktilde(x,x')-\k_i(x,x') \right) \,d\P(x',y') \,d\P(x,y)\right|\notag\\
		&\le \frac{1}{4\lb^2} \cdot \normSup{\tilde{h}-h_i} \cdot \normSup{h_i} \cdot \normSup{\ktilde-\k_i}\notag\\
		&\le \frac{|\loss|_1^2}{2\lb^2} \cdot \normSup{\ktilde-\k_i}\,,
	\end{align}
	where we again applied the reproducing property in the second step and \Cref{Thm:Rob_Aux_Representer} for bounding $\tilde{h}-h_i$ and $h_i$ in the last step.
	
	By the definition of \ktilde, we further know that
	\begin{align*}
		\normSup{\ktilde-\k_i} = \normSup{\frac{\k_1+\k_2}{2}-\k_i} = \normSup{\frac{\k_1-\k_2}{2}} = \frac{\normSup{\k_1-\k_2}}{2}
	\end{align*}
	for $i=1,2$, as well as
	\begin{align*}
		\normSup{\ktilde} = \normSup{\frac{\k_1+\k_2}{2}} \le \frac{\normSup{\k_1}+\normSup{\k_2}}{2} \le \max\left\{\normSup{\k_1},\normSup{\k_2}\right\} = \kappa\,.
	\end{align*}
	Thus, we obtain the assertion by combining \eqref{eq:Rob_Aux_Kern_Triangle} with \eqref{eq:Rob_Aux_Kern_InfNormAufteilung}, \eqref{eq:Rob_Aux_Kern_InfNormAbsch} and \eqref{eq:Rob_Aux_Kern_HNormAbsch}:
	\begin{align*}
		\normSup{\fa-\fb} &\le \normSup{\fa-\ftilde} + \normSup{\fb-\ftilde}\\
		&\le \sum_{i=1}^{2} \left(\frac{|\loss|_1}{2\lb} \cdot \normSup{\ktilde-\k_i} + \normSup{\ktilde} \cdot \frac{|\loss|_1}{\sqrt{2}\lb}\cdot \sqrt{\normSup{\ktilde-\k_i}}  \right)\\
		&\le \frac{|\loss|_1}{\lb} \cdot \left(\frac{1}{2} \cdot \normSup{\k_1-\k_2} + \kappa \cdot \sqrt{\normSup{\k_1-\k_2}}\right)\,.\qedhere
	\end{align*}
\end{proof}

We can now progress to the analogous decomposition of $\normLppix{\froba-\frobb}$. However, we only need to prove an analogous result to \Cref{Lem:Rob_Aux_Kern} but not to \Cref{Lem:Rob_Aux_P,Lem:Rob_Aux_lambda}, since our analysis of the first two summands on the right hand side of \eqref{eq:Rob_Aux_DecompositionLp} in the proof of \Cref{Thm:Rob_StabilityLp} will be based directly on \Cref{Lem:Rob_Aux_P} respectively \Cref{Lem:Rob_Aux_lambda} and the fact that
\begin{align}\label{eq:Rob_Aux_NormLpNormInf}
	\normLppx{g}\le\normSup{g}
\end{align}
for all bounded functions $g$ and all $p\in[1,\infty)$.

\begin{mylem}\label{Lem:Rob_Aux_KernLp}
	Let \X be a complete and separable metric space and $\Y\subseteq\R$ be closed. Let $\P\in\MXY$ be a probability measure, $\lb>0$ and $\k_1,\k_2$ be bounded and measurable kernels on \X with separable RKHSs $\H_1,\H_2$. Denote $\kappa:=\max\{\normSup{\k_1},\normSup{\k_2}\}$. Let \loss be a convex and Lipschitz continuous loss function and let $p\in[1,\infty)$. Then,
	\begin{align*}
		\normLppx{f_{\P,\lb,\k_1}-f_{\P,\lb,\k_2}} \le  \frac{|\loss|_1}{\lb} \cdot \left(\frac{1}{2} \cdot \normLppxb{\k_1-\k_2} + \kappa \cdot \sqrt{\normLppxb{\k_1-\k_2}}\right)\,.
	\end{align*}
\end{mylem}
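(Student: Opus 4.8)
The plan is to mirror the proof of \Cref{Lem:Rob_Aux_Kern} step by step, replacing the supremum norm with the \Lppx-norm and keeping careful track of which norm the kernel difference ends up in. As there, I would set $\ktilde_i:=\k_i/2$, form the sum space $\Htilde:=\Htilde_1\oplus\Htilde_2$ with reproducing kernel $\ktilde:=(\k_1+\k_2)/2$, introduce the auxiliary SVM $\ftilde:=f_{\lossshift,\P,\lb,\ktilde}$ (which exists by \Cref{Thm:Rob_Aux_Representer} since $\ktilde$ is bounded and measurable and $\Htilde$ is separable), and apply the triangle inequality $\normLppx{\fa-\fb}\le\normLppx{\fa-\ftilde}+\normLppx{\fb-\ftilde}$. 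For each $i\in\{1,2\}$ I would decompose $\fj-\ftilde$ exactly as in \eqref{eq:Rob_Aux_Kern_DifferenzAufteilung} into the integral involving $h_i(\tilde{\Phi}-\Phi_i)$ and the one involving $(\tilde{h}-h_i)\tilde{\Phi}$, with $h_i,\tilde{h}$ from the subdifferential of \lossshift. The whole skeleton is unchanged; only the two norm estimates differ.

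For the first summand I would invoke \eqref{eq:Rob_Aux_BochnerLp}, the \Lp-version of the Bochner inequality from \Cref{Lem:Rob_Aux_NormIntegralTauschen}, in place of \eqref{eq:Rob_Aux_BochnerInf}, pulling the \Lppx-norm inside the integral over $(x,y)$. Using $\normSup{h_i}\le|\loss|_1$ and the fact that $\normLppx{\tilde{\Phi}(x)-\Phi_i(x)}=(\int_\X|\ktilde(x',x)-\k_i(x',x)|^p\,d\Px(x'))^{1/p}$, this leaves the iterated integral $\int_\X(\int_\X|\ktilde(x',x)-\k_i(x',x)|^p\,d\Px(x'))^{1/p}\,d\Px(x)$. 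Jensen's inequality applied to $t\mapsto t^p$ (equivalently, monotonicity of \Lp-norms on the probability space \Px) bounds this by $\normLppxb{\ktilde-\k_i}$, giving the analogue of \eqref{eq:Rob_Aux_Kern_InfNormAbsch}, namely the bound $\tfrac{|\loss|_1}{2\lb}\normLppxb{\ktilde-\k_i}$.

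For the second summand I would first reduce the \Lppx-norm to the \Htilde-norm via $\normLppx{\cdot}\le\normSup{\cdot}\le\normSup{\ktilde}\cdot\norm{\Htilde}{\cdot}$, combining \eqref{eq:Rob_Aux_NormLpNormInf} and \eqref{eq:Rob_InfNormHNorm}. The crucial point is that I must \emph{not} reuse \eqref{eq:Rob_Aux_Kern_HNormAbsch} verbatim, since that estimate carries a supremum norm of the kernel difference. Instead I would re-run its derivation: the inner product of the second integral with $\fj-\ftilde$ is still $\le 0$ by convexity of \lossshift (this step is untouched), and in the remaining double integral produced by the reproducing property I would pull out only the $L^\infty$-bounds $\normSup{\tilde{h}-h_i}\le 2|\loss|_1$ and $\normSup{h_i}\le|\loss|_1$, keeping $|\ktilde-\k_i|$ under the double $\Px\otimes\Px$ integral. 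This yields $\tfrac{|\loss|_1^2}{2\lb^2}\normLapxb{\ktilde-\k_i}$, and since $\Px\otimes\Px$ is a probability measure we have $\normLapxb{\ktilde-\k_i}\le\normLppxb{\ktilde-\k_i}$; taking square roots and using $\normSup{\ktilde}\le\kappa$ gives the contribution $\kappa\cdot\tfrac{|\loss|_1}{\sqrt{2}\lb}\sqrt{\normLppxb{\ktilde-\k_i}}$.

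Summing over $i=1,2$ and substituting $\normLppxb{\ktilde-\k_i}=\tfrac12\normLppxb{\k_1-\k_2}$ then reproduces the claimed constants $\tfrac12$ and $\kappa$ exactly as in the supremum case. I expect the only real obstacle to be the norm-bookkeeping: in the first summand one must convert the iterated mixed integral into a genuine $L_p(\Px\otimes\Px)$-norm via Jensen, and in the second summand one must resist estimating the kernel difference in supremum norm and instead keep it under the product integral so as to land on $\normLapxb{\cdot}\le\normLppxb{\cdot}$. Once these two conversions are identified, everything else transfers mechanically from \Cref{Lem:Rob_Aux_Kern}.
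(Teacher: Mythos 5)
Your proposal is correct and follows essentially the same route as the paper's proof: the same auxiliary SVM for $\ktilde=(\k_1+\k_2)/2$, the \Lp-version \eqref{eq:Rob_Aux_BochnerLp} of the Bochner inequality plus the $L_1\le L_p$ monotonicity (the paper phrases your Jensen step as H\"older) for the first summand, and keeping $|\ktilde-\k_i|$ under the double $\P\otimes\P$ integral to land on $\normLapxb{\ktilde-\k_i}\le\normLppxb{\ktilde-\k_i}$ for the second. The two ``conversions'' you flag as the only real obstacles are precisely the two modifications the paper highlights.
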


\begin{proof}
	The proof is almost identical to that of \Cref{Lem:Rob_Aux_Kern} with $\normSup{\cdot}$ being replaced by $\normLppx{\cdot}$, for which reason we will only highlight the differences here.
	
	First of all, because of \eqref{eq:Rob_Aux_NormLpNormInf}, we obtain analogously to \eqref{eq:Rob_Aux_Kern_InfNormAufteilung}
	\begin{align*}
		\normLppx{f_i-\ftilde} \le&\ \normLppx{\frac{1}{2\lb}\cdot\int h_i(x,y)\left(\tilde{\Phi}(x)-\Phi_i(x)\right)\,d\P(x,y)} \notag\\
		&\ + \normSup{\frac{1}{2\lb}\cdot\int\left(\tilde{h}(x,y)-h_i(x,y)\right)\tilde{\Phi}(x)\,d\P(x,y)} \notag\\
		\le&\ \normLppx{\frac{1}{2\lb}\cdot\int h_i(x,y)\left(\tilde{\Phi}(x)-\Phi_i(x)\right)\,d\P(x,y)} \notag\\
		&\ + \normSup{\ktilde} \cdot \norm{\Htilde}{\frac{1}{2\lb}\cdot\int\left(\tilde{h}(x,y)-h_i(x,y)\right)\tilde{\Phi}(x)\,d\P(x,y)} \,.
	\end{align*}
	Then, the first summand on the right hand side can be bounded in an analogous way to \eqref{eq:Rob_Aux_Kern_InfNormAbsch}:
	\begin{align*}
		&\normLppx{\frac{1}{2\lb}\cdot\int_{\XY}h_i(x,y)\left(\tilde{\Phi}(x)-\Phi_i(x)\right)\,d\P(x,y)}\\ 
		&\le \frac{1}{2\lb} \int_{\XY} \normLppx{h_i(x,y)\left(\tilde{\Phi}(x)-\Phi_i(x)\right)}\,d\P(x,y)\\
		&\le \frac{1}{2\lb} \cdot \normSup{h_i} \cdot \int_{\X}\normLppx{\tilde{\Phi}(x)-\Phi_i(x)}\,d\Px(x)\notag\\
		&= \frac{1}{2\lb} \cdot \normSup{h_i} \cdot \int_{\X}\left(\int_{\X}\left|\ktilde(x,x')-\k_i(x,x')\right|^p \, d\Px(x) \right)^{1/p} \,d\Px(x)\\
		&\le \frac{|\loss|_1}{2\lb} \cdot \normLppxb{\ktilde-\k_i}\,,
	\end{align*} 
	where we applied \Cref{Lem:Rob_Aux_NormIntegralTauschen} in the first step, and \Cref{Thm:Rob_Aux_Representer} (for obtaining the bound on $h_i$) as well as H\"older's inequality in the last step.
	Finally, we can tighten the bound from the last steps of \eqref{eq:Rob_Aux_Kern_HNormAbsch} in the following way:
	\begin{align*}
		&\frac{1}{4\lb^2} \cdot \left|\int\int \left(\tilde{h}(x,y)-h_i(x,y)\right) h_i(x',y')\left(\ktilde(x,x')-\k_i(x,x') \right) \,d\P(x',y') \,d\P(x,y)\right|\notag\\
		&\le \frac{|\loss|_1^2}{2\lb^2} \cdot \int\int\left| \ktilde(x,x')-\k_i(x,x') \right|\,d\P(x',y')\,d\P(x,y)\notag\\
		&= \frac{|\loss|_1^2}{2\lb^2} \cdot \normLapxb{\ktilde-\k_i}\notag\\
		&\le \frac{|\loss|_1^2}{2\lb^2} \cdot \normLppxb{\ktilde-\k_i}\,.
	\end{align*}
	The assertion then follows in the same way as in the proof of \Cref{Lem:Rob_Aux_Kern}.
\end{proof}

\section{Proofs}

\subsection{Proofs for Section \ref{Sec:Rob_Stab}}\label{Sec:Rob_Proof_Stab}

\begin{proof}[Proof of \Cref{Thm:Rob_Stability}]
	Applying \Cref{Lem:Rob_Aux_P,Lem:Rob_Aux_lambda,Lem:Rob_Aux_Kern} to the decomposition \eqref{eq:Rob_Aux_Decomposition} of $\normSup{\froba-\frobb}$ yields
	\begin{align*}
		\normSup{\froba-\frobb} \le&\ \frac{\normSup{\k_1}^2 |\loss|_1}{\lb_1}\cdot \norm{tv}{\P_1-\P_2}  +  \frac{\normSup{\k_1}^2 |\loss|_1}{\min\{\lb_1,\lb_2\}^2}\cdot |\lb_1-\lb_2|\\
		&\ + \frac{|\loss|_1}{\lb_2} \cdot \left(\frac{1}{2} \cdot \normSup{\k_1-\k_2} + \kappa \cdot \sqrt{\normSup{\k_1-\k_2}}\right)\,.
	\end{align*}
	Since the order of decomposition can of course be freely varied, we also obtain analogous bounds with $\k_1$ being replaced by $\k_2$ (and vice versa) as well as $\lb_1$ by $\lb_2$ (and vice versa) in some of these summands. Since the right hand side of the assertion is greater or equal to the right hand sides of all of the bounds generated this way, the assertion directly follows.
\end{proof}

\begin{proof}[Proof of \Cref{Thm:Rob_StabilityLp}]
	Applying \Cref{Lem:Rob_Aux_KernLp} as well as \Cref{Lem:Rob_Aux_P,Lem:Rob_Aux_lambda} in combination with \eqref{eq:Rob_Aux_NormLpNormInf} to the decomposition \eqref{eq:Rob_Aux_DecompositionLp} of $\normLppix{\froba-\frobb}$ yields for $i=2$
	\begin{align*}
		&\norm{\Lppix[2]}{\froba-\frobb}\\ 
		&\le \frac{\normSup{\k_1}^2 |\loss|_1}{\lb_1}\cdot \norm{tv}{\P_1-\P_2}  +  \frac{\normSup{\k_1}^2 |\loss|_1}{\min\{\lb_1,\lb_2\}^2}\cdot |\lb_1-\lb_2|\\
		&\hspace*{0.5cm} + \frac{|\loss|_1}{\lb_2} \cdot \left(\frac{1}{2} \cdot \norm{\Lppixb[2]}{\k_1-\k_2} + \kappa \cdot \sqrt{\norm{\Lppixb[2]}{\k_1-\k_2}}\right)\\
		&\le \frac{\kappa^2 |\loss|_1}{\tau}\cdot \norm{tv}{\P_1-\P_2}  +  \frac{\kappa^2 |\loss|_1}{\tau^2}\cdot |\lb_1-\lb_2|\\ 
		&\hspace*{0.5cm}+ \frac{|\loss|_1}{\tau} \cdot \left(\frac{1}{2} \cdot \norm{\Lppixb[2]}{\k_1-\k_2} + \kappa \cdot \sqrt{\norm{\Lppixb[2]}{\k_1-\k_2}}\right)\,.
	\end{align*}
	Analogously, reversing the order of decomposition (such that \Cref{Lem:Rob_Aux_KernLp} can be applied to a summand with probability measure $\P_1$ in both SVMs) yields for $i=1$
	\begin{align*}
		&\norm{\Lppix[1]}{\froba-\frobb}\\ 
		&\le \frac{|\loss|_1}{\lb_1} \cdot \left(\frac{1}{2} \cdot \norm{\Lppixb[1]}{\k_1-\k_2} + \kappa \cdot \sqrt{\norm{\Lppixb[1]}{\k_1-\k_2}}\right)\\
		&\hspace*{0.5cm}  +  \frac{\normSup{\k_2}^2 |\loss|_1}{\min\{\lb_1,\lb_2\}^2}\cdot |\lb_1-\lb_2| + \frac{\normSup{\k_2}^2 |\loss|_1}{\lb_2}\cdot \norm{tv}{\P_1-\P_2}\\
		&\le \frac{\kappa^2 |\loss|_1}{\tau}\cdot \norm{tv}{\P_1-\P_2}  +  \frac{\kappa^2 |\loss|_1}{\tau^2}\cdot |\lb_1-\lb_2|\\
		&\hspace*{0.5cm} + \frac{|\loss|_1}{\tau} \cdot \left(\frac{1}{2} \cdot \norm{\Lppixb[1]}{\k_1-\k_2} + \kappa \cdot \sqrt{\norm{\Lppixb[1]}{\k_1-\k_2}}\right)\,.\qedhere
	\end{align*}
\end{proof}

\subsection{Proofs for Section \ref{Sec:Rob_Loc}}\label{Sec:Rob_Proof_Loc}

\begin{proof}[Proof of \Cref{Thm:Rob_Loc_GleicheAufteilung}]
	To shorten the notation, we define $\fj:=\fiext$ and $\fib:=\fibext$, $i=1,2$, $b=1,\dots,B$, in this proof. By the definition of \fa and \fb we know that
	\begin{align}\label{eq:Rob_Loc_ThmGleicheAufteilung_Absch}
		\normSup{\fa-\fb} &\le \sup_{x\in\X}\, \sum_{b=1}^B w_b(x) \cdot \left|\fibdach[1](x)-\fibdach[2](x) \right|\notag\\
		&\le \sup_{x\in\X}\, \max_{b\in\{1,\dots,B\}} \left|\fibdach[1](x)-\fibdach[2](x) \right|\notag\\
		&= \max_{b\in\{1,\dots,B\}} \normSup{\fibdach[1]-\fibdach[2]}\,,
	\end{align}
	where we applied \textbf{(W1)} and \textbf{(W2)} in the second step. 
	Since the functions \fibdach have not been defined as SVMs but instead as zero-extensions of SVMs \fibext on \Xb, we cannot apply \Cref{Thm:Rob_Stability} to the right hand side of \eqref{eq:Rob_Loc_ThmGleicheAufteilung_Absch} yet. However, these functions can actually be seen as SVMs on \X themselves, $\fibdach = \fibdachext$:
	
	According to \citet[Lemma 2]{meister2016}, we have $\Hibdach=\left\{\left.\hat{g}\,\right|\,g\in\Hib\right\}$ and $\norm{\Hibdach}{\hat{g}}=\norm{\Hib}{g}$ for all $g\in\Hib$. Since additionally $\riskshift[\lossshift,\Pibdach](\hat{g})=\riskshift[\lossshift,\Pib](g)$ for all $g\in\Hib$ (because the whole probability mass of \Pibdach is on \Xb where $\hat{g}$ and $g$ coincide), \eqref{eq:Rob_DefShiftSVM} yields $\fibdachext = \hat{f}_{\Pib,\lbib,\kib}\, (=\fibdach)$.
	
	Thus, we can apply \Cref{Thm:Rob_Stability} to the right hand side of \eqref{eq:Rob_Loc_ThmGleicheAufteilung_Absch} since the functions \fibdach are actually SVMs on the complete space \X (whereas the functions \fib are SVMs on the not necessarily complete spaces \Xb for which reason the theorem can not be applied to $\normSup{\fib[1]-\fib[2]}$ even though this term is obviously equivalent to $||\fibdach[1]-\fibdach[2]||_\infty$). By doing this, the first assertion follows, but with every \Pib replaced by \Pibdach and \kib by \kibdach. Because of them just being zero-extensions of \Pib and \kib respectively however, this does not influence the respective norms.
\end{proof}

\begin{proof}[Proof of \Cref{Thm:Rob_Loc_GleicheAufteilungLp}]
	To shorten the notation, we define $\fj:=\fiext$ and $\fib:=\fibext$, $i=1,2$, $b=1,\dots,B$, in this proof. By the definition of \fa and \fb we know that
	\begin{align}\label{eq:Rob_Loc_ThmGleicheAufteilungLp_Absch}
		\normLppix{\fa-\fb} &\le \sum_{b=1}^B \normLppix{w_b\cdot \left(\fibdach[1]-\fibdach[2]\right)}\notag\\
		&\le \sum_{b=1}^B \left(\int_{\X} \left| \fibdach[1](x)-\fibdach[2](x) \right|^p\,d\Pix(x) \right)^{1/p}\notag\\
		&= \sum_{b=1}^B \left( \Pix(\Xb) \cdot \int_{\Xb} \left| \fibdach[1](x)-\fibdach[2](x) \right|^p\,d\Pibx(x) \right)^{1/p}\notag\\
		&= \sum_{b=1}^B  \left(\Pix(\Xb)\right)^{1/p} \cdot \left( \int_{\X} \left| \fibdach[1](x)-\fibdach[2](x) \right|^p\,d\Pibdachx(x) \right)^{1/p}\notag\\
		&= \sum_{b=1}^B  \left(\Pix(\Xb)\right)^{1/p} \cdot \normLppibdachx{\fibdach[1]-\fibdach[2]}\,.
	\end{align}
	Here, we applied \textbf{(W1)} in the second, \fibdach[1] and \fibdach[2] being zero on $\X\setminus\Xb$ in combination with \eqref{eq:Rob_Loc_Pib} in the third, and the definition of \Pibdach as zero-extension of \Pib in the fourth step.
	
	Noting that \fibdach[1] and \fibdach[2] are SVMs on \X themselves, $\fibdach=\fibdachext$ (\vgl proof of \Cref{Thm:Rob_Loc_GleicheAufteilung}), we can now apply \Cref{Thm:Rob_StabilityLp} to the norms on the right hand side of \eqref{eq:Rob_Loc_ThmGleicheAufteilungLp_Absch}. This yields the assertion (as in the proof of \Cref{Thm:Rob_Loc_GleicheAufteilung} with $\Pibdach$ and $\kibdach$ instead ob \Pib and \kib which does not change the respective norms).
\end{proof}

\begin{proof}[Proof of \Cref{Thm:Rob_Loc_VerschAuftLp}]
	In addition to the auxiliary distributions and kernels introduced prior to \Cref{Thm:Rob_Loc_VerschAuftLp}, we also need auxiliary regularization parameters in this proof. We denote these parameters by $\lbijbstar:=(\Pjaibx{j}(\Xbstar))^{-1}\lbiaib$ for $i,j=1,2$ and $b=1,\dots,B$.
	
	By applying the triangle inequality we can now expand the norm we have to investigate as
	\begin{align}\label{eq:Rob_Loc_ThmVerschAuftLp_Triangle}
		\normLapix[1]{\fiextreg[1]-\fiextreg[2]} \le&\ \normLapix[1]{\fiextreg[1]-\fastarproofLp}\notag\\ 
		&\ +  \normLapix[1]{\fastarproofLp-\fbstarproofLp}\notag\\
		&\ + \normLapix[1]{\fbstarproofLp-\fiextreg[2]}
	\end{align}
	with $\lbijstarbold:=(\lbijbstar[i,j,1],\dots,\lbijbstar[i,j,B])$ and $\kistarbold:=(\kibstar[i,1],\dots,\kibstar[i,B])$ for $i,j=1,2$, and the newly introduced localized SVMs being defined as in \eqref{eq:Rob_Loc_DefGlobalPredictorDiffLoc}. We will now examine the three norms from the right hand side of \eqref{eq:Rob_Loc_ThmVerschAuftLp_Triangle} separately:
	
	\begin{enumerate}[label=(\roman*)]
		\item For a function $g:\Xbstar\to\R$, denote by $\tilde{g}$ its zero-extension to \Xiaib[1] (respectively to $\Xiaib[1]\times\Xiaib[1]$ if the function is instead defined on $\Xbstar\times\Xbstar$). Defining $\kia[1]^\circ := \sum_{b\in J_{1,a}} \tilde{\k}_{1,b}^*$ yields for all $a\in\{1,\dots,A_1\}$ new local SVMs $f_{\Pia[1],\lbia[1],\kia[1]^\circ}$ which by \eqref{eq:Rob_DefShiftSVM} are defined as
		\begin{align}\label{eq:Rob_Loc_ThmVerschAuftLp_TeilIDefSVM}
			f_{\Pia[1],\lbia[1],\kia[1]^\circ} = \arg\inf_{f\in\Hia[1]^\circ} \riskshift[\lossshift,\P_{1,a}](f) + \lbia[1] \norm{\Hia[1]^\circ}{f}^2\,.
		\end{align}
	 	Now, combining Theorem 5 from \cite{berlinet2004} and Lemma 2 from \cite{meister2016} yields that 
	 	\begin{align*}
	 		\Hia[1]^\circ = \left\{f:\Xia[1]\to\R\,\left|\, f=\sum_{b\in J_{1,a}}\tilde{f}_b,\, f_b\in\Hib[1]^* \text{ for } b=1,\dots,B\right.\right\}\,,
	 	\end{align*}
 		with the decomposition of each such $f\in\Hia[1]^\circ$ being unique because of the sets $\Xbstar[1],\dots,\Xbstar[B]$, the domains of the functions $f_b$, being pairwise disjoint, \vgl \textbf{(R1')} and \textbf{(R2')}. Thus, the mentioned results also yield $\norm{\Hia[1]^\circ}{f}^2 = \sum_{b\in J_{1,a}} \norm{\Hib[1]^*}{f_b}^2$ for all $f\in\Hia[1]^\circ$. Additionally, again because of the domains of the functions $f_b$ being pairwise disjoint, we are able to also expand the risk from  \eqref{eq:Rob_Loc_ThmVerschAuftLp_TeilIDefSVM} similarly to the preceding expansion of the $\Hia[1]^\circ$-norm:
		\begin{align*}
			\riskshift[\lossshift,\P_{1,a}](f) &= \int_{\Xia[1]} \lossshift(x,y,f(x))\,d\Pia[1](x,y)\\ 
			&= \sum_{b\in J_{1,a}} \int_{\Xbstar} \lossshift(x,y,f_b(x))\,d\Pia[1](x,y)\\
			&= \sum_{b\in J_{1,a}} \Piax[1](\Xbstar)\cdot \int_{\Xbstar} \lossshift(x,y,f_b(x))\,d\Pibstar[1](x,y)\\
			&= \sum_{b\in J_{1,a}} \Piax[1](\Xbstar)\cdot \riskshift[\lossshift,\P_{1,b}^*](f_b)\,,
		\end{align*}
		where we applied \eqref{eq:Rob_Loc_Pib} in the third step.
		
		By plugging this into  \eqref{eq:Rob_Loc_ThmVerschAuftLp_TeilIDefSVM}, we obtain
		\begin{align*}
			f_{\Pia[1],\lbia[1],\kia[1]^\circ} &= \arg\inf_{f\in\Hia[1]^\circ} \sum_{b\in J_{1,a}} \left( \Piax[1](\Xbstar)\cdot \riskshift[\lossshift,\P_{1,b}^*](f_b) + \lbia[1]  \norm{\Hib[1]^*}{f_b}^2 \right)\\
			&= \sum_{b\in J_{1,a}} \widetilde{\arg\inf_{{f}_b\in\Hib[1]^*}} \left( \Piax[1](\Xbstar)\cdot \riskshift[\lossshift,\P_{1,b}^*](f_b) + \lbia[1]  \norm{\Hib[1]^*}{f_b}^2 \right)\\
			&= \sum_{b\in J_{1,a}} \widetilde{\arg\inf_{{f}_b\in\Hib[1]^*}} \left( \riskshift[\lossshift,\P_{1,b}^*](f_b) + \frac{\lbia[1]}{\Piax[1](\Xbstar)}  \norm{\Hib[1]^*}{f_b}^2 \right)\\
			&= \sum_{b\in J_{1,a}} \fabstartildeproofLp
		\end{align*}
		and thus
		\begin{align*}
			\fastarproofLp = \sum_{b=1}^B \fabstarhatproofLp = \sum_{a=1}^{A_1}\sum_{b\in J_{1,a}} \fabstarhatproofLp = \sum_{a=1}^{A_1} \hat{f}_{\Pia[1],\lbia[1],\kia[1]^\circ}\,.
		\end{align*}
		We can therefore also interpret the first difference on the right hand side of \eqref{eq:Rob_Loc_ThmVerschAuftLp_Triangle} as the difference between two localized SVMs that are based on the same regionalization \XiAibold[1] (and on the same probability measure and vector of regularization parameters). An application of \Cref{Thm:Rob_Loc_GleicheAufteilungLp} hence yields
		\begin{align*}
			&\normLapix[1]{\fiextreg[1]-\fastarproofLp} = \normLapix[1]{\fiextreg[1]-\sum_{a=1}^{A_1} \hat{f}_{\Pia[1],\lbia[1],\kia[1]^\circ}}\\
			&\le |\loss|_1 \cdot \sum_{a=1}^{A_1} \Pix[1](\Xia[1]) \cdot \Bigg( \frac{1}{2\lbia[1]} \cdot \normLapiaxb[1]{\kia[1]-\kia[1]^\circ}\\ 
			&\hspace*{4cm} + \frac{\max\left\{\normSup{\kia[1]},\,\normSup{\kia[1]^\circ}\right\}}{\lbia[1]} \cdot \sqrt{\normLapiaxb[1]{\kia[1]-\kia[1]^\circ}} \Bigg)\,.
		\end{align*}
		Because
		\begin{align*}
			\kia[1]^\circ(x,x') = \begin{cases}
				\kia[1](x,x')  &\wenn \exists\, b\in J_{1,a}: x,x'\in\Xbstar\,,\\
				0 & \sonst\,,
			\end{cases}
		\end{align*}
		we furthermore know that $\max\left\{\normSup{\kia[1]},\,\normSup{\kia[1]^\circ}\right\} = \normSup{\kia[1]}$ and
		\begin{align*}
			\normLapiaxb[1]{\kia[1]-\kia[1]^\circ} &= \int_{\Xia[1]}\int_{\Xia[1]} \left|\kia[1](x,x')-\kia[1]^\circ(x,x')\right| \, d\Piax[1](x')d\Piax[1](x)\\
			&= \sum_{b\in J_{1,a}} \int_{\Xbstar} \int_{\Xia[1]\setminus\Xbstar} \left| \kia[1](x,x') \right| \,d\Piax[1](x')d\Piax[1](x)\\
			&\le \normSup{\kia[1]}^2 \cdot \sum_{b\in J_{1,a}} \Piax[1](\Xbstar) \cdot \left(1-\Piax[1](\Xbstar)\right)
		\end{align*}
		which finally results in
		\begin{align*}
			&\normLapix[1]{\fiextreg[1]-\fastarproofLp}\\
			&\le |\loss|_1 \cdot \sum_{a=1}^{A_1} \Pix[1](\Xia[1]) \cdot \Bigg( \frac{\normSup{\kia[1]}^2}{2\lbia[1]} \cdot \sum_{b\in J_{1,a}} \Piax[1](\Xbstar) \cdot \left(1-\Piax[1](\Xbstar)\right)\\ 
			&\hspace*{4cm} + \frac{\normSup{\kia[1]}^2}{\lbia[1]} \cdot \sqrt{\sum_{b\in J_{1,a}} \Piax[1](\Xbstar) \cdot \left(1-\Piax[1](\Xbstar)\right)} \Bigg)\,.
		\end{align*}
	
		\item The second norm on the right hand side of \eqref{eq:Rob_Loc_ThmVerschAuftLp_Triangle} already consists of the difference of two localized SVMs that are based on the same regionalization \XBstarbold (and the same probability measure), without us needing to make any changes before. We can therefore directly apply \Cref{Thm:Rob_Loc_GleicheAufteilungLp} and obtain
		\begin{align}\label{eq:Rob_Loc_ThmVerschAuftLp_TeilIIAbsch}
			&\normLapix[1]{\fastarproofLp-\fbstarproofLp}\notag\\
			&\le |\loss|_1 \cdot \sum_{b=1}^B \Pix[1](\Xbstar) \cdot \Bigg( 	\frac{(\kappa_b^*)^2}{(\tau_{1,b}^*)^2} \cdot \left|\lbijbstar[1,1,b]-\lbijbstar[2,1,b] \right| + \frac{1}{2\tau_{1,b}^*} \cdot \normLapibstarxb[1]{\kibstar[1,b]-\kibstar[2,b]} \notag\\
			&\hspace*{7cm} + \frac{\kappa_b^*}{\tau_{1,b}^*} \cdot 	\sqrt{\normLapibstarxb[1]{\kibstar[1,b]-\kibstar[2,b]}} \Bigg)
		\end{align}
		with
		\begin{align*}
			\kappa_b^* := 	\max\left\{\normSup{\kibstar[1,b]},\normSup{\kibstar[2,b]}\right\} \le \max\left\{\normSup{\kiaib[1]},\normSup{\kiaib[2]}\right\} = \kappa_b\,,
		\end{align*}
		because \kibstar and \kiaib coincide everywhere \kibstar is defined, and
		\begin{align*}
			\tau_{1,b}^* := \min\left\{\lbijbstar[1,1,b],\lbijbstar[2,1,b]\right\} \ge 	\min\left\{ \lbiaib[1] , \lbiaib[2] \right\} = \tau_b
		\end{align*}
		because of \lbijbstar[i,1,b] being defined as $(\Pjaibx{1}(\Xbstar))^{-1}\lbiaib$. Thus, \eqref{eq:Rob_Loc_ThmVerschAuftLp_TeilIIAbsch} still holds true after replacing $\kappa_b^*$ and $\tau_{1,b}^*$ by $\kappa_b$ and $\tau_b$. Additionally, applying the definition of \lbijbstar[i,1,b] again as well as the definition of \Pjaibx{1} from \eqref{eq:Rob_Loc_Pib}, yields
		\begin{align*}
			\left|\lbijbstar[1,1,b]-\lbijbstar[2,1,b] \right| &= \frac{1}{\Pix[1](\Xbstar)} \cdot \left|\lbiaib[1]\cdot \Pix[1](\Xiaib[1]) - \lbiaib[2] \cdot \Pix[1](\Xiaib[2]) \right|\\
			&\le \frac{1}{\Pix[1](\Xbstar)} \cdot \Big(\lbiaib[1]\cdot \left|\Pix[1](\Xiaib[1]) - \Pix[1](\Xiaib[2]) \right|\\ 
			&\hspace*{5cm}+ \Pix[1](\Xiaib[2]) \cdot \left|\lbiaib[1]-\lbiaib[2] \right| \Big)
		\end{align*}
		as well as analogously
		\begin{align*}
			\left|\lbijbstar[1,1,b]-\lbijbstar[2,1,b] \right| &\le \frac{1}{\Pix[1](\Xbstar)} \cdot \Big(\lbiaib[2]\cdot \left|\Pix[1](\Xiaib[1]) - \Pix[1](\Xiaib[2]) \right|\\ 
			&\hspace*{5cm}+ \Pix[1](\Xiaib[1]) \cdot \left|\lbiaib[1]-\lbiaib[2] \right| \Big)\,,
		\end{align*}
		and hence
		\begin{align*}
			&\left|\lbijbstar[1,1,b]-\lbijbstar[2,1,b] \right| \le \frac{1}{\Pix[1](\Xbstar)} \cdot \Big(\tau_b\cdot \left|\Pix[1](\Xiaib[1]) - \Pix[1](\Xiaib[2]) \right| + \rho_{1,b} \cdot \left|\lbiaib[1]-\lbiaib[2] \right| \Big)\,.
		\end{align*}
	
		Plugging this into \eqref{eq:Rob_Loc_ThmVerschAuftLp_TeilIIAbsch} finally yields
		\begin{align*}
			&\normLapix[1]{\fastarproofLp-\fbstarproofLp}\notag\\
			&\le |\loss|_1 \cdot \sum_{b=1}^B \Bigg( \rho_{1,b} \cdot \frac{\kappa_b^2}{\tau_b^2} \cdot \left|\lbiaib[1]-\lbiaib[2] \right|\\
			&\hspace*{2cm} + \Pix[1](\Xbstar) \cdot \bigg(\frac{1}{2\tau_b} \cdot \normLapibstarxb[1]{\kibstar[1,b]-\kibstar[2,b]}\\ 
			&\hspace*{5cm}+ \frac{\kappa_b}{\tau_b} \cdot \sqrt{\normLapibstarxb[1]{\kibstar[1,b]-\kibstar[2,b]}} \bigg)\\
			&\hspace*{2cm} + \frac{\kappa_b^2}{\tau_b} \cdot \left|\Pix[1](\Xiaib[1]) - \Pix[1](\Xiaib[2]) \right| \Bigg)\,.
		\end{align*}
	
		\item The third norm on the right hand side of \eqref{eq:Rob_Loc_ThmVerschAuftLp_Triangle} can be analyzed similarly to the first one. Let the $(\tilde{\cdot})$-notation now denote zero-extensions to \Xiaib[2] instead of \Xiaib[1]. Analogously to (i), it can be shown that 
		\begin{align*}
			\fbstarproofLp = \sum_{b=1}^B \fbbstarhatproofLp = \sum_{a=1}^{A_2}\sum_{b\in J_{2,a}} \fbbstarhatproofLp = \sum_{a=1}^{A_2} \hat{f}_{\Pjai[2]{1},\lbia[2],\kia[2]^\circ}\,,
		\end{align*}
		where $\kia[2]^\circ := \sum_{b\in J_{2,a}} \tilde{\k}_{2,b}^*$ for $a=1,\dots,A_2$. We can thus also interpret the third difference on the right hand side of \eqref{eq:Rob_Loc_ThmVerschAuftLp_Triangle} as the difference between two localized SVMs that are based on the same regionalization \XiAibold[2] (and on the same vector of regularization parameters) and apply \Cref{Thm:Rob_Loc_GleicheAufteilungLp}:
		\begin{align*}
			&\normLapix[1]{\fbstarproofLp-\fiextreg[2]} = \normLapix[1]{\sum_{a=1}^{A_2} \hat{f}_{\Pjai[2]{1},\lbia[2],\kia[2]^\circ} -\fiextreg[2]}\\
			&\le |\loss|_1 \cdot \sum_{a=1}^{A_2} \Pix[1](\Xia[2]) \cdot \Bigg(\frac{\normSup{\kia[2]}^2}{\lb_{2,a}} \cdot \norm{tv}{\Pjai[2]{1}-\Pjai[2]{2}} \\ &\hspace*{4cm}+\frac{\normSup{\kia[2]}^2}{2\lbia[2]} \cdot \sum_{b\in J_{2,a}} \Pjaix[2]{1}(\Xbstar) \cdot \left(1-\Pjaix[2]{1}(\Xbstar)\right)\\ 
			&\hspace*{4cm} + \frac{\normSup{\kia[2]}^2}{\lbia[2]} \cdot \sqrt{\sum_{b\in J_{2,a}} \Pjaix[2]{1}(\Xbstar) \cdot \left(1-\Pjaix[2]{1}(\Xbstar)\right)} \Bigg)\,,
		\end{align*}
		where we employed that $\max\left\{\normSup{\kia[2]},\,\normSup{\kia[2]^\circ}\right\}=\normSup{\kia[2]}$ and
		\begin{align*}
			\norm{\La(\Pjaix[2]{1}\otimes\Pjaix[2]{1})}{\kia[2]-\kia[2]^\circ} \le \normSup{\kia[2]}^2 \cdot \sum_{b\in J_{2,a}} \Pjaix[2]{1}(\Xbstar) \cdot \left(1-\Pjaix[2]{1}(\Xbstar)\right)\,,
		\end{align*}
		which follows in the same way as the analogous statements in (i).
	\end{enumerate}

	Plugging these three bounds into \eqref{eq:Rob_Loc_ThmVerschAuftLp_Triangle} and additionally observing
	\begin{align*}
		&\sum_{a=1}^{A_i} \Pix[1](\Xia) \cdot \Bigg(\frac{\normSup{\kia[i]}^2}{2\lbia[i]} \cdot \sum_{b\in J_{i,a}} \Pjaix{1}(\Xbstar) \cdot \left(1-\Pjaix{1}(\Xbstar)\right)\\ 
		&\hspace*{3.5cm} + \frac{\normSup{\kia}^2}{\lbia} \cdot \sqrt{\sum_{b\in J_{i,a}} \Pjaix{1}(\Xbstar) \cdot \left(1-\Pjaix{1}(\Xbstar)\right)} \Bigg)\\
		&\le\sum_{a=1}^{A_i}\sum_{b\in J_{i,a}} \Pix[1](\Xia) \cdot \frac{\normSup{\kia}^2}{\lbia}\cdot  \Bigg(\frac{1}{2} \cdot \Pjaix{1}(\Xbstar) \cdot \left(1-\Pjaix{1}(\Xbstar)\right)\\ 
		&\hspace*{6cm} + \sqrt{\Pjaix{1}(\Xbstar) \cdot \left(1-\Pjaix{1}(\Xbstar)\right)} \Bigg)\\
		&\le \sum_{b=1}^B \rho_{1,b} \cdot \frac{\kappa_b^2}{\tau_b} \cdot \left(\frac{\Pjaibx{1}(\Xbstar) \cdot \left(1-\Pjaibx{1}(\Xbstar)\right)}{2} + \sqrt{\Pjaibx{1}(\Xbstar) \cdot \left(1-\Pjaibx{1}(\Xbstar)\right)} \vphantom{\frac{\Pjaibx{1}(\Xbstar) \cdot \left(1-\Pjaibx{1}(\Xbstar)\right)}{2}} \right)\,,
	\end{align*}
	$i=1,2$, yields the assertion.
\end{proof}

\end{document}